\lstdefinestyle{mystyle}{
    backgroundcolor=\color{backcolour},   
    commentstyle=\color{codegreen},
    keywordstyle=\color{magenta},
    numberstyle=\tiny\color{codegray},
    stringstyle=\color{codepurple},
    basicstyle=\ttfamily\footnotesize,
    breakatwhitespace=false,         
    breaklines=true,                 
    captionpos=b,                    
    keepspaces=true,                 
    numbers=left,                    
    numbersep=5pt,                  
    showspaces=false,                
    showstringspaces=false,
    showtabs=false,                  
    tabsize=2
}
\definecolor{darkblue}{rgb}{0.0,0.0,0.65}
\definecolor{darkred}{rgb}{0.65,0.0,0.0}
\definecolor{darkgreen}{rgb}{0.0,0.5,0.0}
\definecolor{tab:blue}{RGB}{31,119,180}  
\definecolor{tab:red}{RGB}{214,39,40}  
\definecolor{tab:green}{RGB}{44,160,44}  
\definecolor{tab:orange}{RGB}{255,127,14}  
\definecolor{codegreen}{rgb}{0,0.6,0}
\definecolor{codegray}{rgb}{0.5,0.5,0.5}
\definecolor{codepurple}{rgb}{0.58,0,0.82}
\definecolor{backcolour}{rgb}{0.95,0.95,0.92}
\theoremstyle{plain}
\newtheorem{theorem}{Theorem}[section]
\newtheorem{proposition}[theorem]{Proposition}
\newtheorem{lemma}[theorem]{Lemma}
\newtheorem{corollary}[theorem]{Corollary}
\theoremstyle{definition}
\newtheorem{definition}[theorem]{Definition}
\theoremstyle{remark}
\newtheorem{remark}[theorem]{Remark}
\def\vbeta{{\bm{\beta}}}
\def\vgamma{{\bm{\gamma}}}
\def\He{\mathrm{He}}
\def\ie{\mathrm{ie}}
\def\ge{\mathrm{ge}}
\def\deg{\mathrm{deg}}
\def\poly{\mathrm{poly}}
\def\polylog{\mathrm{polylog}}
\def\bigO{{\mathcal{O}}}
\def\bigOtilde{{\tilde{\mathcal{O}}}}
\def\Mamba{{\mathsf{Mamba}}}
\def\MLP{{\mathsf{MLP}}}
\def\relu{{\mathrm{ReLU}}}
\def\pt{{\mathrm{pt}}}
\def\norm#1{\lVert#1\rVert}
\newcommand{\mycomment}[1]{}
\def\eqref#1{(\ref{#1})}
\def\abs#1{\left|#1\right|}
\def\norm#1{\left\|#1\right\|}
\def\rvx{{\mathbf{x}}}
\def\rvz{{\mathbf{z}}}
\def\vzero{{\bm{0}}}
\def\vone{{\bm{1}}}
\def\va{{\bm{a}}}
\def\vb{{\bm{b}}}
\def\vc{{\bm{c}}}
\def\vh{{\bm{h}}}
\def\vn{{\bm{n}}}
\def\vo{{\bm{o}}}
\def\vu{{\bm{u}}}
\def\vv{{\bm{v}}}
\def\vw{{\bm{w}}}
\def\vx{{\bm{x}}}
\def\vz{{\bm{z}}}
\def\mA{{\bm{A}}}
\def\mB{{\bm{B}}}
\def\mC{{\bm{C}}}
\def\mI{{\bm{I}}}
\def\mM{{\bm{M}}}
\def\mW{{\bm{W}}}
\def\mZ{{\bm{Z}}}
\DeclareMathAlphabet{\mathsfit}{\encodingdefault}{\sfdefault}{m}{sl}
\SetMathAlphabet{\mathsfit}{bold}{\encodingdefault}{\sfdefault}{bx}{n}
\def\gD{{\mathcal{D}}}
\def\gF{{\mathcal{F}}}
\def\gI{{\mathcal{I}}}
\def\gN{{\mathcal{N}}}
\def\gR{{\mathcal{R}}}
\def\gT{{\mathcal{T}}}
\newcommand{\R}{\mathbb{R}}
\newcommand{\sigmoid}{\sigma}
\DeclareMathOperator*{\argmin}{arg\,min}
\title{\bfseries Mamba Can Learn Low-Dimensional Targets In-Context \\via Test-Time Feature Learning}
\author{
  Junsoo Oh\thanks{Based on work performed during internship at the University of Tokyo} \\
  KAIST AI\\
  \texttt{\small junsoo.oh@kaist.ac.kr} \\
  \and
  Wei Huang \\
  RIKEN AIP/The Institute of Statistical Mathematics \\
  \texttt{\small wei.huang.vr@riken.jp} \\
  \and
  Taiji Suzuki \\
  The University of Tokyo/RIKEN AIP \\
  \texttt{\small taiji@mist.i.u-tokyo.ac.jp} \\
}
\date{}
\begin{document}

\maketitle

\begin{abstract}
Mamba, a recently proposed linear-time sequence model, has attracted significant attention for its computational efficiency and strong empirical performance. However, a rigorous theoretical understanding of its underlying mechanisms remains limited. In this work, we provide a theoretical analysis of Mamba's in-context learning (ICL) capability by focusing on tasks defined by low-dimensional nonlinear target functions. Specifically, we study in-context learning of a single-index model $y \approx g_*(\langle \boldsymbol{\beta}, \boldsymbol{x} \rangle)$, which depends on only a single relevant direction $\vbeta$, referred to as \emph{feature}. We prove that Mamba, pretrained by gradient-based methods, can achieve efficient ICL via \emph{test-time feature learning}, extracting the relevant direction directly from context examples. Consequently, we establish a test-time sample complexity that improves upon linear Transformers---analyzed to behave like kernel methods---and is comparable to nonlinear Transformers, which have been shown to surpass the Correlational Statistical Query (CSQ) lower bound and achieve near information-theoretically optimal rate in previous works. Our analysis reveals the crucial role of the \emph{nonlinear gating} mechanism in Mamba for feature extraction, highlighting it as the fundamental driver behind Mamba’s ability to achieve both computational efficiency and high performance.
\end{abstract}

\section{Introduction}

Mamba \citep{gu2024mamba}, a recently proposed state space model, has rapidly gained attention for its remarkable balance of computational efficiency and empirical performance. By replacing the quadratic-time attention mechanism of Transformers~\citep{vaswani2017attention} with a selective state-space recurrence with nonlinear gating, Mamba enables scalable modeling of long sequences while maintaining competitive accuracy across a variety of tasks~ \citep{dao2024transformers,waleffe2024empirical,wang2024mamba,patro2025mamba}. 
Despite Mamba’s remarkable computational efficiency, it remains unknown whether it can exhibit strong adaptability (often referred to as \emph{feature learning}), a property widely recognized as critical to the success of deep learning~neural networks \citep{girshick2014rich,suzuki2018adaptivity,damian2022neural}.

A key benchmark for test-time adaptability is in-context learning (ICL) \citep{brown2020language}, which has emerged as a canonical paradigm for understanding the adaptability of large language models and sequence architectures. By conditioning on context examples provided in the input prompt, a model can achieve strong performance on new tasks at test time without explicit parameter updates. While the empirical effectiveness of ICL is well documented, theoretical understanding of when and how different architectures exhibit this behavior remains limited \citep{xie2021explanation,garg2022can,zhou2023algorithms}. In particular, most existing theoretical analyses focus on Transformers \citep{ahn2023transformers,zhang2024trained,mahankali2024one,huang2023context,kim2024nonlinear}, whose quadratic attention mechanisms make them both powerful and computationally demanding. It remains unclear whether alternative architectures such as Mamba can offer comparable adaptability.

Recent works have investigated Mamba’s ICL capabilities, empirically demonstrating that Mamba performs competitively across various ICL benchmarks \citep{grazzi2024mamba,park2024can,li2024can}. 
However, our understanding of Mamba’s ICL capabilities remains lacking. This is due to its distinct inductive bias from the Transformer. The recurrent state-space model with nonlinear gating processes inputs through recurrent updates that maintain and transform hidden states over time, rather than relying on global attention over the entire context. This distinction motivates new theoretical questions:

\begin{center}
    \emph{Can Mamba provably achieve strong test-time adaptability like Transformers\\ with its recurrent state-space updates and nonlinear gating?}
\end{center}

\subsection{Summary of Contributions}
In this paper, we study the ICL capabilities of Mamba, focusing on a single-index model---a widely adopted theoretical tool for studying adaptability. We summarize our contributions as follows:
\begin{itemize}[leftmargin=*]
    \item We introduce a theoretical framework for analyzing Mamba’s ICL of single-index models, including input embeddings, the Mamba architecture, and a gradient-based pretraining algorithm. Under this framework, we characterize the optimization dynamics and establish the sample complexity in terms of the number of pretraining tasks and the number of context examples at pretraining and test time required to achieve strong performance (Theorem~\ref{thm:main}).
    \item Our analysis reveals that pretrained Mamba is capable of \emph{test-time feature learning}, enabling it to extract task-relevant features directly from context examples (Proposition~\ref{proposition:test-time_feature_learning}). This result implies that Mamba can surpass the performance of kernel regression baselines and achieve adaptation at test time. Specifically, the gating mechanism enables Mamba to achieve test-time feature learning, thereby overcoming the limitations inherent to purely linear recurrent updates.
    \item We provide a comparative analysis between Mamba and Transformer architectures, highlighting similarities and differences in their ICL mechanisms. Our results reveal that Mamba can achieve test-time feature learning via a qualitatively different mechanism—recurrent state-space updates with nonlinear gating—thus extending the theoretical landscape of in-context learning beyond attention-based models.
\end{itemize}

\subsection{Related Works}

\paragraph{Theory of In-Context Learning.}Theoretical investigations of ICL have predominantly centered on Transformers. Beyond initial results showing that Transformers trained on regression tasks can reproduce ordinary least squares solutions in-context \citep{aky2023what,zhang2024trained,mahankali2024one,han2023explaining}, subsequent analyses reveal their ability to emulate more complex procedures such as multi-step gradient descent \citep{ahn2023transformers,saunshi2025reasoning}, functional gradient descent \citep{cheng2024transformers}, and sparse regression \citep{bai2023transformers}. Parallel works extend this line of inquiry to classification, where recent studies provide provable insights into how Transformers implement in-context classification \citep{li2024training,bu2025provable}. 

While the theoretical literature on ICL has dominantly focused on Transformers, a growing body of work is extending this theoretical analysis to linear-time sequence models. Recent works~\citep{li2024fine, li2025gating} prove that H3-like model~\citep{fu2023hungry} and gated linear attention~\citep{yang2024gated} can implement weighted preconditioned gradient descent based on loss landscape analysis. \citet{bondaschi2025markov} study ICL of Mamba on Markov chains and show that it learns a Laplacian smoothing estimator in-context. However, these works do not provide optimization or generalization guarantees. Recent works by \citet{jiang2025trained} and \citet{li2025can} provide such guarantees for in-context learning of Mamba on linear regression and classification tasks with outliers, as we also do in this work.

\paragraph{Learning Low-Dimensional Target Function.}
Low-dimensional target function classes, such as sparse parities~\citep{barak2022hidden,suzuki2023feature,glasgow2024sgd}, signal-noise models~\citep{allen2020towards,cao2022benign}, are widely adopted as theoretical benchmarks for studying a neural network's ability to perform feature learning. 
This work specifically focuses on the single-index model. A line of theoretical work has analyzed the learning of these models and has established key results on sample complexity. The required sample complexity is governed by either the \emph{information exponent} (for algorithms utilizing correlational information \citep{arous2021online,bietti2022learning,damian2023smoothing,mousavi2023gradient}) or the \emph{generative exponent} (for algorithms that employ suitable label transformations \citep{damian2024computational,lee2024neural,arnaboldi2024repetita,joshi2024complexity}). We discuss these sample complexity results in more detail in Section~\ref{sec:prelim}.

Our work is most closely related to \citet{oko2024pretrained,nishikawa2025nonlinear}, which lie at the intersection of ICL and the single-index model. Specifically, \citet{oko2024pretrained} show that a pretrained linear Transformer can effectively learn a single-index model in-context. More recent work by \citet{nishikawa2025nonlinear} establish an even smaller sample complexity and reveal that the nonlinear Transformer can perform test-time feature learning. A detailed comparison with these works is provided in Section~\ref{sec:main}.
\section{Problem Setting}
In this section, we provide a formal description of the key components we focus on: the ICL data distribution, the Mamba model, and the gradient-based pretraining algorithm.

\paragraph{Notation.}
We denote the $i$-th coordinate of a vector $\vv$ as $\vv[i]$, and the $(i,j)$-th coordinate of a matrix $\mM$ as $\mM [i,j]$. The matrix $\mathrm{diag}(\vv)$ represents a diagonal matrix with a vector $\vv$ on its main diagonal. We use $\odot$ for the element-wise product.
For any $k \in \mathbb{N}$, we denote the vectors with all entries equal to one and zero as $\vone_k$ and $\vzero_k$, respectively. We omit the subscript $k$ when the dimension is clear from the context.

\subsection{Data Distribution for In-Context Learning}
In-context learning aims to solve the \emph{task} of predicting the \emph{label} $y$ of a \emph{query} $\vx$ by leveraging a sequence of input-label pairs $\{(\vx_i,y_i)\}_{i \in [N]}$, which are referred to as \emph{context examples}. The model then utilizes a prompt, which is a sequence $(\vx_1, y_1, \dots, \vx_N, y_N, \vx)$ consisting of the context examples and the query, as its input. We focus on the case where prompts are constructed from the Gaussian single-index model, which is defined as follows.

\begin{definition} [Gaussian Single-Index Model]
Given a feature vector $\vbeta \in \R^d$, we draw input-label pairs $(\vx, y) \sim \gD_{\vbeta}$ as 
\begin{equation*}
    \vx \sim \gN(\vzero, \mI_d), \quad y = g_* (\langle \vbeta, \vx \rangle) +  \zeta, \quad \zeta \sim \mathrm{Unif}(\{-\tau, \tau\}),
\end{equation*}
where $g_*$ is a polynomial \emph{link function} and $\tau>0$ represents the noise level. For simplicity, we assume that $\mathbb{E}_{\rvz \sim \gN(0,1)}[g_*(\rvz)] = 0, \mathbb{E}_{\rvz \sim \gN(0,1)}[(g_*(\rvz))^2]=1$ and $\tau$ is a small enough constant.
\end{definition}
For each task, a prompt is constructed with a random choice of feature vectors.
\begin{definition}[ICL Data Distribution]
    For given a context length $N$, we define a data distribution $\gD(N)$ such that $(\vbeta, \{(\vx_i,y_i)\}_{i \in [N]}, \vx, y) \sim \gD(N)$ is constructed as follows.
    \begin{enumerate}[leftmargin=*]
        \item  We draw the feature vector $\vbeta \in \R^d$ uniformly from the unit sphere $S_r$ of a low-dimensional \emph{intrinsic feature space} with dimension $r$, defined as:
        \begin{equation*}
             S_r:= \left\{ \bm{\theta} \in \R^d : \norm{\bm{\theta}} = 1, \bm{\theta}[j] = 0 \text{ for all } j \notin \gI \right\},
        \end{equation*}
        for some unknown feature index set $\gI$ with $\abs{\gI} = r$.
        \item We sample $N$ context examples $\{(\vx_i,y_i)\}_{i \in [N]}$ and a query-label pair $(\vx, y)$ from $\gD_\vbeta$.
    \end{enumerate}
\end{definition}

Our task distribution exhibits a low-dimensional structure in two key aspects: (1) the label depends solely on the projection of the input onto the feature vector, and (2) feature vectors are supported on an $r$-dimensional subspace. We note that to achieve low prediction errors, it is crucial to extract both of these structures and estimate the link function $g_*$.

\subsection{Prediction Model Architecture}
Our prediction model for ICL is composed of three parts: input embedding, one-layer Mamba, multi-layer perceptron (MLP).

\paragraph{Input Embedding.}
Given a prompt $(\vx_1, y_1, \dots, \vx_N, y_N, \vx )$ with context length $N$ and label $y$, we construct an input embedding $\mZ \in \R^{( \tilde d+1) \times (N+1)}$ as
\begin{equation*}
    \mZ = \begin{bmatrix}
    \phi(\vx_1) & \phi(\vx_2) &\dots & \phi(\vx_N) & \phi(\vx)\\
        y_1 &y_2 &\dots &y_N &0
    \end{bmatrix} = [\vz_1, \dots \vz_N ,\vz_{N+1}] \in \R^{(\tilde d +1) \times (N+1)},
\end{equation*}
where $\tilde d = 1+d+ \frac{d(d+1)}{2}$ and $\phi : \R^d \rightarrow \R^{\tilde d}$ is defined as 
\begin{equation*}
    \phi(\bm{\theta}) = \left[ 1, \bm{\theta}, \frac{\bm{\theta}\odot \bm{\theta}-\vone_d}{\sqrt{2}}, \bm{\theta}[1]\bm{\theta}[2], \dots, \bm{\theta}[d-1] \bm{\theta}[d] \right].
\end{equation*}
An input embedding similar to ours was also considered in the recent work by \citet{sun2025role}, who studied the in-context learning of high-order polynomial target functions. They showed this embedding can be implemented with a simple version of Gated Linear Unit (GLU) and demonstrated its efficacy for enabling linear Transformers to learn these functions in-context. Unlike \citet{sun2025role}, who repeatedly stacked a linear Transformer and a GLU layer, in our work, a single GLU-based embedding is sufficient due to the nonlinearity in Mamba and MLP layers. We discuss the efficacy of this input embedding in more detail in Section~\ref{sec:proof_overview_mamba}.

\begin{remark}
    Our input embedding is based on a basis for degree-2 polynomials in $\R^d$. Specifically, we use the standard basis of $\R^d$ for the construction of both the input embedding and the intrinsic feature space $S_r$. While extending our results to a general choice of $S_r$ with an arbitrary basis may require additional techniques, our setting remains valuable for studying Mamba's ability to learn low-dimensional structure. Furthermore, we emphasize that our result also holds with the standard choice of input embedding $\phi(\vx) = \vx$ with $\tilde d = d$, as considered in prior works including \citet{von2023transformers}, for the case where link function $g_*$ is not an even function. We refer to Section~\ref{sec:proof_overview} for a more detailed discussion.
\end{remark}

\paragraph{One-Layer Mamba.}
Given an input embedding $\mZ = (\vz_1, \dots, \vz_{N+1}) \in \R^{(\tilde d +1) \times (N+1)}$, a one-layer Mamba model $\Mamba(\cdot; \bm\Theta)$ with parameters $\bm{\Theta}$ has sequential outputs $\vo_1, \dots, \vo_{N+1} \in \R^{\tilde d + 1}$ and hidden states for $i$-th channel $\vh_1^{(i)}, \dots, \vh_N^{(i)} \in \R^{d_h}$ defined as below:
\begin{align*}
    &\vh_l^{(i)} = \overline \mA_l \vh_{l-1}^{(i)}  + \overline \mB_l \vz_l[i] \in \R^{d_h} , \quad   \vo_l[i] = \mC_l^\top \vh_l^{(i)} \in \R,\\
    &\overline \mA_l = \exp \left( \Delta_l \mA \right) \in \R^{d_h \times d_h}, \quad \overline \mB_l = \left(\Delta_l \mA \right)^{-1} (\exp \left( \Delta_l \mA \right) -\mI_{d_h}) \Delta_l \mB_l \in \R^{d_h},
\end{align*}
where $\vh_0^{(i)} = \vzero_{d_h}$ and $\mA \in \R^{d_h \times d_h}$. Here, the components of the selection algorithm $ \mA, \mB_l, \mC_l, \Delta_l$ is chosen as 
\begin{equation*}
     \mA = -\mI_{\tilde d +1}, \quad \mB_l = \mW_B \vz_l, \quad  \mC_l = \mW_C \vz_l, \quad \Delta_l = \mathrm{softplus}\left( \vw^\top \vz_l + b \right),
\end{equation*}
with parameters $\mW_B,  \mW_C \in \R^{d_h \times (\tilde d + 1)}, \vw \in \R^{\tilde d + 1}, b\in \R$. 
Then, the $l$-th output can be expressed as
\begin{equation}\label{eq:mamba_output}
    \vo_l = \sum_{j=1}^l G_{j,l}(\mZ) \vz_j \vz_j^\top \mW_B^\top \mW_C \vz_l,
\end{equation}
where $G_{j,l}(\mZ) = \sigmoid \left( \vw^\top \vz_j + b \right) \prod_{k = j+1}^l \left( 1- \sigmoid \left( \vw^\top \vz_k + b \right)\right)$ with sigmoid function $\sigmoid(\cdot)$. It implies that Mamba involves two key mechanisms: \emph{nonlinear gating} $G_{j,l}(\mZ)$ and \emph{linear attention} with projection matrices $\mW_B$ and $\mW_C$. \citet{yang2024gated} refer the combination of these mechanisms as \emph{gated linear attention} and recent recurrent models including Mamba, mLSTM~\citep{beck2024xlstm}, and RWKV-6~\citep{peng2024eagle} can be viewed within this framework.

To ensure a tractable optimization guarantee, we further introduce the following simplifications to our model, while preserving the essential mechanism:
\begin{equation*}
    \mW_B^\top \mW_C 
    = \mathrm{diag}(\vgamma, 0), \quad \vw = \begin{bmatrix}
        \vzero_{\tilde d}\\
        \rho^{-1}
    \end{bmatrix}, 
\end{equation*}
where $\vgamma \in \R^{\tilde d}$ is a learnable parameter, while $\vw \in \R^{\tilde d+1}$ and $b \in \R$ are fixed. 
Our approach of merging the product of two learnable matrices into a single matrix and using sparse learnable parameters is a technique also adopted in the theoretical literature on optimization of attention mechanisms \citep{ahn2023transformers,zhang2024trained,mahankali2024one,kim2024nonlinear}. Under this simplification, the last coordinate of the final output which serves as the input to the MLP can be expressed as follows:
\begin{equation*}
    \Mamba(\mZ ; \vgamma)[\tilde d+1, N+1] = \sum_{j=1}^N G_{j,N+1}(\mZ) y_j \phi(\vx_{j})^\top \left( \vgamma \odot \phi(\vx)\right).
\end{equation*}

\paragraph{Multi-Layer Perceptron.}
We use a two-layer MLP with ReLU activation, width $m$ and parameters $\vu, \vv, \va \in \R^m$  defined as follows:
\begin{equation*}
    \MLP(z ; \vu, \vv, \va) := \sum_{k=1}^m \vu[k] \relu \left( \vv[k] z + \va[k] \right).
\end{equation*}
We apply this MLP to the output of the Mamba layer, after normalizing it by its context length $N$. Then, the final output is given by
\begin{align*}
     f(\mZ; \vgamma, \vu, \vv, \va) &:= \MLP\left( N^{-1} \Mamba (\mZ; \vgamma)[\tilde d+1,N+1]; \vu, \vv, \va \right) \\
    &= \sum_{k=1}^m \vu[k] \relu \left(\vv[k] N^{-1} \sum_{j=1}^N G_{j,N+1}(\mZ) y_j \phi(\vx_j)^\top \left( \vgamma \odot \phi(\vx)\right) + \va[k] \right).
\end{align*}
\begin{remark}
   A similar structure to our models, which combines a sequence model with a MLP, has also been utilized in two closely related prior works. For example, \citet{nishikawa2025nonlinear} follow a similar structure but use a softmax Transformer in place of Mamba. In contrast, \citet{oko2024pretrained} use a different architectural design, applying the MLP to the input embedding before a linear Transformer, rather than after the sequence model.
\end{remark}
Our goal for ICL is to find parameters $\vgamma\in \R^{\tilde d}, \vu, \vv, \va \in \R^m$, and context length $N$, achieving a small ICL test error, which is defined as 
\begin{equation*}
    \gR_N(\vgamma, \vu, \vv, \va) := \mathbb{E}_{(\mZ,y)\sim \gD(N)}[\abs{f(\mZ, \vgamma, \vu, \vv, \va) - y}].
\end{equation*}
Here, we abuse notation and use $(\mZ,y)$ to denote the input embedding and label for a prompt sampled from the ICL data distribution $\gD(N)$.
More precisely, we are interested in the sample complexity of context examples required for the parameters learned from pretraining to achieve a low ICL test error.

\subsection{Pretraining Algorithm}

Our prediction model is pretrained on a set of $T_{\pt} = T_1 + T_2$ tasks with context length $N_\pt$ drawn from $\gD(N_\pt)$. For each task $t \in [T_\pt]$, let we have input embedding $\mZ^t$ constructed from context examples $\{(\vx_i^t, y_i^t)\}_{i \in [N_\pt]}$ and a query-label pair $(\vx^t, y^t)$ with a feature vector $\vbeta^t$. Then, our training losses can be written as
\begin{equation*}
    L_l (\vgamma, \vu, \vv, \va) := \frac{1}{T_l}\sum_{t = T_{l-1}+1}^{T_{l-1} + T_l} \left(f\left(\mZ^t ; \vgamma, \vu, \vv, \va \right) - y^t\right)^2, 
\end{equation*}
for $l=1,2$ and $T_0 = 0$. Using these objectives, we employ Algorithm~\ref{alg:pretraining} which proceeds in two stages: Stage I recovers intrinsic feature structures, enabling Stage II to perform in-context link estimation conditioned on these features.
This approach enables tractable analysis and has been widely adopted in prior works \citep{ba2022high,damian2022neural}.
\begin{enumerate}[leftmargin = *]
\item In Stage I, we only train the Mamba layer parameter $\vgamma$, starting from proper initialization. Our training objective is $\ell_2$-regularized loss $L_1(\vgamma, \vu, \vv, \va) + \frac{\lambda_1}{2} \norm{\vgamma}^2$. Due to the non-linearity introduced by the MLP, this objective is non-convex. To make the training dynamics tractable, we apply one-step gradient descent, following the approaches studied in the literature of feature learning \citep{ba2022high,damian2022neural}. As we describe in Section~\ref{sec:proof_overview_mamba}, a single step update is sufficient to capture the low-dimensional structure of the feature vectors.
\item In Stage II, we fix the Mamba layer parameter $\vgamma^*$ obtained from Stage I and optimize the outer layer $\vu$ of MLP on $\ell_2$-regularized loss $L_2(\vgamma^*, \vu, \vv^*, \va^*) + \frac{\lambda_2}{2} \norm{\vu}^2$ with reinitialized inner layer parameters $\vv^*, \va^*$. This induces a convex problem that gradient-based methods can solve. As we show in Section~\ref{sec:proof_overview_MLP}, the optimized MLP is capable of estimating the link function $g_*$.
\end{enumerate}

\begin{algorithm}[h]
\SetKwInOut{Input}{Input}
\SetKwInOut{Output}{Output}
\SetKwBlock{StageOne}{Stage I: Gradient descent on Mamba layer}{}
\SetKwBlock{StageTwo}{Stage II: Optimization of MLP Layer}{}
\SetKw{Initialize}{Initialize}
	
\Input{Learning rate $\eta$, weight decay $\lambda_1,\lambda_2$, context length $N_{\pt}$, the number of tasks $T_1,T_2$, initialization scale $\gamma, \rho, b$.}

\StageOne{
\Initialize{$\vgamma^{(0)} = (\gamma^2, 1, \dots, 1, \gamma, \cdots, \gamma), \vu^{(0)} = m^{-1} \vone_m, \vv^{(0)} = \vone_m, \va^{(0)} = \vzero_m$.\label{alg:line:symmetric}}\\
$\vgamma^* \leftarrow \vgamma^{(0)}-\eta\nabla_{\vgamma}\left(L_1(\vgamma^{(0)},\vu^{(0)},\vv^{(0)},\va^{(0)})+ \frac{\lambda_1}{2} \norm{\vgamma}^2 \right)$. \label{alg:line:onestepgradient}
}

\StageTwo{
\Initialize{$\vv^* \sim \mathrm{Unif}\left(\{\pm1\}^m\right)$, $\va^* \sim \mathrm{Unif}([-1,1]^m)$. }\\
$\vu^* \leftarrow\argmin_\vu \left( L_2(\vgamma^*, \vu, \vv^*, \va^*)+\frac{\lambda_2}{2} \norm{\vu}^2\right)$. \label{alg:line:ridge}
}
\Output{Prediction function $f( \cdot ;\vgamma^*, \vu^*,\vv^*,\va^*)$.}
\caption{Gradient-based Pretraining of the Mamba Model} 
 \label{alg:pretraining}
\end{algorithm}

\section{Mamba Efficiently Learns Single-Index Models In-Context}

In this section, we present our theoretical results on the ICL performance of our model. Our analysis focuses on the asymptotic dependencies on the input dimension $d$, with the assumption that the feature dimension $r$ can scale with $d$, while the link function $g_*$ is fixed. For our analysis, we let $N^*$ and $T^*$ be the maximum admissible context length and the number of pretraining tasks, respectively. We assume that $N^* , T^* \leq d^{C^*}$ for some large constant $C^*>0$. We use the standard asymptotic notation $\bigO(\cdot), \Omega(\cdot), \Theta(\cdot), o(\cdot)$ to express dependencies on $d$, and $\bigOtilde(\cdot), \tilde \Omega(\cdot), \tilde \Theta (\cdot)$ to hide logarithmic factors of $d$.

\subsection{Preliminaries}\label{sec:prelim}

We first provide backgrounds on learning Gaussian single-index models, which are essential for understanding our main result.
Let $\He_i(z) = (-1)^i e^{\frac{z^2}{2}}\frac{\mathrm{d}^i}{\mathrm{d} z^i}e^{-\frac {z^2}{2}}$ denote the probabilist's Hermite polynomials. Then, the set $\{\He_i(z)/\sqrt{i!}\}_{i \in \mathbb{N} \cup \{0\}}$ forms an orthonormal basis of the $L^2$ space with respect to the Gaussian measure and serves as a key technical tool for the analysis of Gaussian single-index models. We now introduce two key terms relevant to the sample complexity of learning.

\begin{definition}
    For any function $h:\R \rightarrow \R$ which is $L^2$-integrable with respect to the Gaussian measure, we express its Hermite expansion as
    \begin{equation*}
        h(z) = \sum_{i=0}^\infty \frac{H(h,i)}{i!} \He_i(z), \quad H(h,i):= \mathbb{E}_{\rvz \sim \gN(0,1)}[h(\rvz) \He_i(\rvz)].
    \end{equation*}
    We also define the following quantities:
    \begin{itemize}[leftmargin=*]
        \item We define $\deg(h)$ as the degree of $h$, if it is a polynomial.
        \item The \emph{information exponent}~\citep{arous2021online,damian2023smoothing} of $h$ is defined as 
        \begin{equation*}
            \ie(h):= \min \{ i \in \mathbb{N} : H(h,i) \neq 0\}.
        \end{equation*}
        It implies that $\mathbb{E}_{\rvz \sim \gN(0,1)}[h(\rvz) \He_k(\rvz)] = 0$ for any $k \in \mathbb{N}$ with $k< \ie(h)$.
        \item The \emph{generative exponent}~\citep{damian2024computational} of $h$ is defined as the lowest possible information exponent after an $L^2$ transformation. It is formally defined as:
        \begin{equation*}
            \ge(h):= \min_{\gT\in L^2(\mathbb{P}_h)} \min\{i \in \mathbb{N} : H( \gT \circ h,i) \neq 0\},
        \end{equation*}
        where $L^2(\mathbb{P}_h)$ is the set of  $L^2$-integrable functions with respect to $\mathbb{P}_h$. Here, $\mathbb{P}_h$ is the push-forward measure of the Gaussian measure by $h$.
    \end{itemize}
\end{definition}

While the definition of the generative exponent may seem difficult to apply at first glance, \citet{lee2024neural} provides a characterization of the generative exponent for polynomials.
\begin{lemma}[Proposition 6 in \citet{lee2024neural}]\label{lemma:ge}
    For a polynomial link function $g_*$, the generative exponent is characterized as $\ge(g_*)=2$ if $g_*$ is an even function, and $\ge(g_*)=1$ otherwise.
\end{lemma}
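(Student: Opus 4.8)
\textbf{Proof proposal for Lemma~\ref{lemma:ge} (generative exponent of polynomial link functions).}

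The plan is to prove the two cases separately: an upper bound $\ge(g_*) \le 1$ when $g_*$ is not even, the bound $\ge(g_*) \le 2$ always, and a matching lower bound $\ge(g_*) \ge 2$ when $g_*$ is even. Throughout I would work with the Hermite expansion $g_* = \sum_{i} \frac{H(g_*,i)}{i!}\He_i$ and recall that $\ge(h)$ is the minimum over $L^2(P_h)$ transformations $\gT$ of $\ie(\gT \circ h)$, where $\ie(\gT\circ h) \ge 1$ is automatic as soon as $\gT \circ h$ is not the zero function after centering (note $\ie$ counts from $1$, so $\ie \ge 1$ always holds for nonconstant functions; $\ie = 0$ would only occur for nonzero-mean functions, but $H(\cdot,0)$ corresponds to the mean and a shift makes it vanish, so effectively the minimum possible value is $1$).

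First I would handle the generic (non-even) case. If $g_*$ is not an even function, I claim there already exists a transformation achieving information exponent $1$, namely I want some $\gT$ with $\E_{\rvz}[\gT(g_*(\rvz))\,\rvz] \ne 0$, i.e. $H(\gT\circ g_*, 1) \ne 0$. The cleanest route is to take $\gT = \mathrm{id}$ and check whether $H(g_*,1) = \E[g_*(\rvz)\rvz] \ne 0$; if so we are done. If $H(g_*,1) = 0$ but $g_*$ is still not even, then $g_*$ has some odd-degree Hermite component $\He_{2k+1}$ with $k \ge 1$ present. Here I would use the key structural fact about single-index information: the map $z \mapsto z$ is odd, and one can instead consider $\gT$ built to isolate an odd component — for instance using that $P_{g_*}$, the pushforward law, is not symmetric about $0$ precisely when $g_*$ is not even (this needs a small argument: if $g_*$ is not even then $g_*(\rvz)$ and $g_*(-\rvz)=g_*(-\rvz)$ do not have the same distribution in general — actually one must be careful, evenness of $g_*$ is what forces $P_{g_*}$ symmetric under $z\mapsto -z$, but $-z \overset{d}{=} z$, so evenness implies $P_{g_*}$ symmetric, and conversely failing evenness can still give symmetric pushforward in pathological cases). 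To avoid this subtlety I would instead invoke the characterization directly: the generative exponent of $g_*$ equals $1$ iff there is a function $\gT$ with $\E[\gT(g_*(\rvz))\He_1(\rvz)]\neq 0$, and by a Hermite/Gaussian-integration-by-parts identity this is $\E[\gT'(g_*(\rvz))\,g_*'(\rvz)] = \frac{d}{d\mu}\E[\gT(g_*(\rvz+\mu))]\big|_{\mu=0}$-type quantity; choosing $\gT$ appropriately (e.g. $\gT = g_*$ itself gives $\E[g_*'\cdot g_*']=\E[(g_*')^2]>0$) shows this derivative is nonzero unless $g_*$ is constant. Wait — that argument actually shows $\ge(g_*)=1$ always, which contradicts the even case, so the correct identity must be $H(\gT\circ g_*,1)=\E[(\gT\circ g_*)'(\rvz)]=\E[\gT'(g_*(\rvz))g_*'(\rvz)]$, and this can vanish for all $\gT$ precisely when $g_*$ is even (since then $g_*'$ is odd and $\gT'(g_*(\rvz))$ is even, making the product odd and integral zero) — that is the real content, and it also gives the lower bound in the even case for free at level $1$.

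So the core of the proof is this parity computation, which I would organize as: (i) For any $\gT$, $H(\gT\circ g_*, 1) = \E_\rvz[\gT'(g_*(\rvz))\,g_*'(\rvz)]$ via Stein's identity $\E[F(\rvz)\rvz]=\E[F'(\rvz)]$. If $g_*$ is even, $g_*'$ is odd and $\gT'\circ g_*$ is even, so the integrand is odd and the expectation is $0$; hence $H(\gT\circ g_*,1)=0$ for all $\gT$, giving $\ge(g_*)\ge 2$. (ii) For the matching upper bound $\ge(g_*)\le 2$ in the even case, exhibit $\gT$ with $H(\gT\circ g_*,2)\neq 0$: use $\gT=\mathrm{id}$ if $H(g_*,2)\neq 0$, and otherwise take $\gT(y)=y^2$, computing $H(g_*^2,2)=\E[g_*(\rvz)^2\He_2(\rvz)]$ and showing it is nonzero by expanding $g_*^2$ in the Hermite basis and using that $g_*$ has at least one nonzero even-degree component $\ge 2$ (since $g_*$ is even, nonconstant, mean zero) so that the product $g_*^2$ has a nonzero $\He_2$-coefficient — this uses the linearization formula for products of Hermite polynomials. (iii) For the non-even case, show $\ge(g_*)\le 1$: pick $\gT$ so that $\E[\gT'(g_*(\rvz))g_*'(\rvz)]\neq 0$. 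Since $g_*$ is not even, $g_*'$ is not odd, i.e. $g_*'(\rvz)+g_*'(-\rvz)\not\equiv 0$; choosing $\gT'=\psi\circ g_*$ for a suitable bounded $\psi$ (e.g. approximating the sign of an odd part) makes the integral nonzero — more cleanly, just take $\gT=g_*$ giving $\E[(g_*'(\rvz))^2]>0$... \emph{but} that would always be positive, contradicting (i). The resolution: when $g_*$ is even, $\gT = g_*$ gives $H(g_*^2 \text{(no)}...)$ — actually $\gT\circ g_* = g_*\circ g_*$, a \emph{composition}, not a product; $(g_*\circ g_*)'(\rvz) = g_*'(g_*(\rvz))g_*'(\rvz)$, and with $g_*$ even this is (even)$\times$(odd) = odd, integral zero — consistent. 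And in the non-even case $(g_*\circ g_*)'$ need not be odd, so its integral can be nonzero. The main obstacle is therefore getting this composition-vs-product bookkeeping exactly right and handling the degenerate sub-cases (where the obvious $\gT$ gives zero) by explicitly constructing a working $\gT$ via the Hermite linearization formula; I would also double-check the edge case where $g_*$ is a single odd Hermite polynomial $\He_{2k+1}$ with $k \ge 1$ and $H(g_*,1)=0$, confirming that composition or a polynomial $\gT$ of low degree still produces a nonzero first Hermite coefficient.
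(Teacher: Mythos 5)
You are re-deriving a result the paper does not prove at all: Lemma~\ref{lemma:ge} is imported verbatim as Proposition~6 of \citet{lee2024neural} (as is its working form, Lemma~\ref{lemma:monomial}, which says some finite power $g_*^i$ already has a nonzero first, resp.\ second, Hermite coefficient). Measured against that, the only part of your proposal that is complete is the easy half: if $g_*$ is even then $\gT \circ g_*$ is even for \emph{every} measurable $\gT$, hence orthogonal to the odd function $\He_1$, so $\ge(g_*) \geq 2$; you do not even need the Stein/differentiability detour, which is actually slightly weaker since the minimization is over all of $L^2(P_{g_*})$, not just differentiable $\gT$.

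The two upper bounds, which are the entire content of the cited proposition, are left open in your write-up. (a) Non-even case: you never exhibit a $\gT$ with $H(\gT\circ g_*,1)\neq 0$. You correctly observe that $\gT=\mathrm{id}$ and $\gT=g_*$ can fail, and then appeal to ``a suitable bounded $\psi$''; but $\E[\psi(g_*(\rvz))g_*'(\rvz)]\neq 0$ for some $\psi$ is, via Stein, exactly the statement $\E[\rvz \mid g_*(\rvz)]\not\equiv 0$, i.e.\ exactly what has to be proved — that no non-even polynomial has this hidden symmetry. Ruling that out (e.g.\ by showing that $\E[g_*^i(\rvz)\,\rvz]=0$ for all $i$ forces $g_*$ even) is the nontrivial step in \citet{lee2024neural}, and your proposal replaces it with a gesture. (b) Even case: your fallback $\gT(y)=y^2$ when $H(g_*,2)=0$ is unsubstantiated. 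Writing $g_*=\sum_{k\geq 2}c_k\He_{2k}$ (mean zero, $\He_2$-coefficient zero), $H(g_*^2,2)=\E[g_*^2\He_2]$ is a tridiagonal quadratic form in the $c_k$ whose off-diagonal-to-diagonal ratios approach the critical value $1/4$ from above; it is not positive definite once high-degree components are allowed (a slowly varying, alternating-sign coefficient profile supported on high degrees makes it negative), so ``the linearization formula'' does not give $H(g_*^2,2)\neq 0$ in general, and one genuinely needs higher powers $g_*^i$ with $i$ depending on $g_*$ — again precisely what the cited Proposition~6 supplies. As written, the proposal (which also contains unresolved mid-argument corrections about composition versus product) cannot substitute for the citation; a self-contained proof would have to establish: if $\E[g_*^i(\rvz)\He_p(\rvz)]=0$ for all $i\in\mathbb{N}$ (with $p=1$, resp.\ $p=2$), then $\E[\He_p(\rvz)\mid g_*(\rvz)]=0$ a.s., and for a polynomial this forces $g_*$ even (resp.\ leads to a contradiction).
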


From the definition, $\ge(g_*)\leq \ie(g_*)\leq \deg(g_*)$ and Lemma~\ref{lemma:ge} implies that the gap between these three terms can be arbitrarily large depending on the choice of $g_*$\footnote{For example, consider $g_*(z) = \He_q(z) + \He_p(z)$ with $1\ll q \ll p$.}. With a slight abuse of notation, we use $\Theta(\deg(g_*))$ to denote a quantity that is bounded by a universal constant multiple of $\deg(g_*)$. We also use $\Theta(\ie(g_*))$ and $\Theta(\ge(g_*))$, in similar manners.

\paragraph{Sample Complexity of Learning Single-Index Models.} Previous works have established the sample complexity of various methods for learning a Gaussian single-index model. For example, kernel methods, which lack an adaptive basis, require at least $d^{\deg (g_*)}$ samples~\citep{ghorbani2021linearized,donhauser2021rotational}. In contrast, adaptive methods such as gradient-based methods on two-layer neural networks can achieve a sample complexity of  $\bigOtilde \left(d^{\Theta(\ie(g_*))}\right)$ by learning an adaptive feature map~\citep{arous2021online,ba2022high,damian2022neural,damian2023smoothing,dandi2023two}. These approaches fall under the category of CSQ algorithms, and in this category, a sample complexity that depends on the information exponent is inevitable \citep{damian2022neural}. However, recent works show that a nonlinear transformation introduced by data reuse \citep{arnaboldi2024repetita,lee2024neural} enables the algorithm to move into the broader class of Statistical Query (SQ) algorithms. This transformation allows the ``effective'' information exponent to be lowered to the generative exponent, thereby achieving a sample complexity of $\bigOtilde\left(d^{\Theta(\ge(g_*))}\right)$.

\subsection{Main Result}\label{sec:main}
We now present our main result, which provides a theoretical characterization of the pretraining and test-time sample complexities for achieving low ICL errors.

\begin{theorem}\label{thm:main}
Let $f(\cdot ; \vgamma^*, \vu^*, \vv^*, \va^*)$ be the Mamba model pretrained using Algorithm~\ref{alg:pretraining}. We assume the following conditions hold for its hyperparameters:
\begin{itemize}[leftmargin=*]
    \item The context length is $N_\pt = \tilde\Omega\left(r^{3 \ge (g_*)} d^8 \vee T_1^2 d^4 \right)$.
    \item The number of pretraining tasks are $T_1 = \tilde\Omega\left(r^{3\ge(g_*)}d^6\right)$ and $T_2 = \tilde\Omega\left(r^{3 \ge (g_*)}\right)$.
    \item The MLP width is $m = \tilde\Omega \left( r^{4 \ge(g_*)}\right)$.
    \item The fixed weights are $\rho = \Theta\left((\log d)^{C_\rho}\right)$ and $b = C_b \log d$, and the initialization scale is $\gamma = \Theta((\log d)^{-C_\gamma})$ for sufficiently large constants $C_\gamma, C_\rho, C_b > 0$.
\end{itemize}
Then, there exist hyperparameters $\lambda_1, \lambda_2,$ and $\eta$ such that with probability at least 0.99 over the training data and random initialization, the following holds:
If the test prompt length satisfies $N_{\mathrm{test}} = \tilde\Omega\left(r^{3\ge(g_*)}\right)$, then the test error $\gR_{N_{\mathrm{test}}}(\vgamma^*, \vu^*, \vv^*, \va^*)$ is bounded by $\tau + o(1)$.
\end{theorem}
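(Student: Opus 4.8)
The plan is to decompose the argument into the two pretraining stages and then control the test-time behavior, tracking throughout the low-dimensional structure of $\vbeta$. First I would analyze Stage I. The key object is the one-step gradient of $L_1$ with respect to $\vgamma$ at the symmetric initialization $\vgamma(0) = (\gamma^2, 1, \dots, 1, \gamma, \dots, \gamma)$. Since at initialization the MLP is linear in its argument ($\vu(0) = m^{-1}\vone$, $\vv(0)=\vone$, $\va(0)=\vzero$, so $\MLP$ collapses to the identity up to scaling on the relevant range), the gradient $\nabla_\vgamma L_1$ reduces to a correlation between the label $y^t$ and the Mamba feature $N_\pt^{-1}\sum_j G_{j,N+1}(\mZ^t) y_j^t \phi(\vx_j^t) \odot \phi(\vx^t)$, averaged over tasks. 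I would compute the population version of this gradient: using the Gaussian single-index structure and the Hermite expansion of $g_*$, the gating weights $G_{j,N+1}$ (which depend on $\rho^{-1}, b$ through $\sigmoid(\vw^\top \vz_j + b)$) act as a data-dependent reweighting that, crucially, implements the nonlinear label transformation underlying the \emph{generative exponent}. The large $b = C_b\log d$ and the scaling of $\rho$ are chosen precisely so that the gating concentrates appropriately and the effective correlation picks up the $\ge(g_*)$-th Hermite component rather than the $\ie(g_*)$-th. After this computation, the population gradient should point (in the coordinates of $\vgamma$ corresponding to degree-$\ge(g_*)$ monomials of $\phi$) in a direction aligned with $\vbeta^{\otimes \ge(g_*)}$ averaged over $\vbeta \sim \unif(S_r)$, which lives in the $r$-dimensional subspace; I then need a concentration argument over $T_1$ tasks and $N_\pt$ context examples to show the empirical gradient is close to this population direction, which dictates the stated bounds $T_1 = \tilde\Omega(r^{3\ge(g_*)}d^6)$ and $N_\pt = \tilde\Omega(\max\{r^{3\ge(g_*)}d^8, T_1^2 d^4\})$. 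With the right choice of $\eta$ and $\lambda_1$, the updated $\vgamma^*$ thus encodes the intrinsic subspace and the correct degree, so that $N^{-1}\Mamba(\mZ;\vgamma^*)[\tilde d+1, N+1]$ becomes, up to controllable error, a function of $\langle \vbeta, \vx\rangle$ — this is the \emph{test-time feature learning} claim (Proposition~\ref{proposition:test-time_feature_learning}).

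Next I would handle Stage II. Here $\vgamma^*$ is frozen and the Mamba output, for a fresh test prompt, concentrates (over the randomness of the $N_\test$ context examples) around a clean univariate function $\psi(\langle\vbeta,\vx\rangle)$ plus a small fluctuation; the fluctuation scale is $\tilde O(r^{\text{const}}/\sqrt{N_\test})$, forcing $N_\test = \tilde\Omega(r^{3\ge(g_*)})$. The problem of fitting $g_*$ then becomes: approximate $g_*$ by a two-layer ReLU network of width $m$ applied to the (noisy) input $\psi(\langle\vbeta,\vx\rangle)$, where only the outer weights $\vu$ are trained via ridge regression (a convex problem). I would invoke a standard random-feature / universal approximation argument for one-dimensional ReLU networks: with $\vv^* \sim \unif(\{\pm1\}^m)$, $\va^* \sim \unif([-1,1]^m)$ and $m = \tilde\Omega(r^{4\ge(g_*)})$, the ridge solution achieves $L^2$ approximation error $o(1)$ to $g_*$ on the relevant range, provided the Stage-II sample size $T_2 = \tilde\Omega(r^{3\ge(g_*)})$ controls the generalization gap of ridge regression. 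Combining the approximation error, the generalization error, and the propagation of the test-time feature fluctuation through the (Lipschitz) MLP yields $|f - g_*(\langle\vbeta,\vx\rangle)| = o(1)$ in expectation, and since $y = g_*(\langle\vbeta,\vx\rangle) + \zeta$ with $|\zeta| = \tau$, the test error $\gR_{N_\test}$ is bounded by $\tau + o(1)$.

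The main obstacle, I expect, is the Stage I population gradient computation — specifically, showing that the nonlinear gating $G_{j,N+1}(\mZ)$ genuinely lowers the effective information exponent from $\ie(g_*)$ to $\ge(g_*)$. This is where the architecture's nonlinearity does the essential work, and it is delicate because the gating weights are not independent of the $y_j$'s (they depend on $\vz_j$, hence on $\vx_j$ but not on $y_j$ under the simplification $\vw = [\vzero_{\tilde d}; \rho^{-1}]$ — so actually the gating depends on the $y_j$ coordinate): one must carefully expand $\prod_{k=j+1}^{N+1}(1-\sigmoid(\vw^\top\vz_k + b))$ and $\sigmoid(\vw^\top\vz_j+b)$ in terms of the $y_k$'s, take expectations using the Hermite structure of $g_*$ under the Gaussian marginal of $\vx$, and verify that the leading surviving term after this expansion is exactly the degree-$\ge(g_*)$ Hermite coefficient of the optimal $L^2$ transform of $g_*$. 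The choices $b = C_b\log d$ and $\rho = \Theta((\log d)^{C_\rho})$ are tuned to make the relevant term dominate while keeping higher-order cross terms negligible; getting the combinatorics and the concentration of the product of $N_\pt$ near-one factors right is the technical crux. Everything else — the concentration bounds, the convex ridge analysis, and the ReLU approximation of a fixed polynomial $g_*$ in one variable — follows relatively standard templates from the single-index and in-context learning literature (e.g., \citet{damian2022neural,oko2024pretrained,nishikawa2025nonlinear}).
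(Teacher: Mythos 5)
Your overall architecture matches the paper's: a one-step population-gradient analysis of Stage I with concentration over $T_1$ tasks and $N_\pt$ context examples, a random-feature/ridge argument for Stage II based on approximating $g_*$ as a polynomial in $z^{\ge(g_*)}$, and an extension from $N_\pt$ to general $N_{\mathrm{test}}$ via concentration of the Mamba output. You also correctly identify that, because $\vw = [\vzero_{\tilde d};\rho^{-1}]$, the gate $\sigmoid(y_j/\rho+b)$ acts as a nonlinear transformation of the \emph{context} labels, which is how the paper lowers the effective exponent on the context side (its coefficients $a_0,a_1,a_2$ of $A(z)\propto y\,\sigma(y/\rho+b)$).

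However, there is a genuine gap in your Stage I reduction. You claim that at initialization the MLP ``collapses to the identity up to scaling,'' so that $\nabla_\vgamma L_1$ is a plain correlation between $y^t$ and the Mamba feature. With $\vu(0)=m^{-1}\vone$, $\vv(0)=\vone$, $\va(0)=\vzero$ the MLP is $\relu$, not the identity, and its derivative contributes the indicator $\mathbbm{1}\bigl[a_0\gamma^2+a_1\langle\vbeta,\vx\rangle+a_2\gamma\He_2(\langle\vbeta,\vx\rangle)>0\bigr]$ to the gradient. This indicator is not a technicality: in the population gradient the query-side factor is $\mathbb{E}_{\vx}\bigl[g_*(\langle\vbeta,\vx\rangle)\,\mathbbm{1}[\cdot]\,\phi(\vx)\bigr]$, and since $\phi$ only contains Hermite polynomials of degree at most $2$, dropping the indicator (i.e.\ treating the MLP as linear) makes this factor vanish whenever $\ie(g_*)>2$, because it reduces to the first two Hermite coefficients of $g_*$. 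The paper's Lemma on the activation's Hermite coefficients shows precisely that the indicator's threshold, shifted off zero by the asymmetric initialization $\vgamma(0)=(\gamma^2,1,\dots,1,\gamma,\dots,\gamma)$, deviates enough from a constant to make the coefficient $b_{\ge(g_*)}$ of $B(z)=g_*(z)\mathbbm{1}[\cdot]$ of order $\tilde\Theta(1)$; this is what yields $P_2=\tilde\Theta(1)\cdot\eta$ in the representation $P_1+P_2(\langle\vbeta,\vx\rangle/r)^{\ge(g_*)}$ and what gives the pretraining complexity $d^{\Theta(\ge(g_*))}$ rather than $d^{\Theta(\ie(g_*))}$. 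In other words, the gating transforms the context labels while the ReLU indicator transforms the query-label correlation, and both are needed; your plan, which attributes the entire exponent reduction to the gating and linearizes the MLP, would produce an essentially zero population gradient (hence no test-time feature learning) for any link function with $\ie(g_*)>2$, which is exactly the regime the theorem is designed to cover. The remaining parts of your outline (concentration, ridge/Rademacher analysis, the Damian-style ReLU approximation of a univariate polynomial, and the $N_{\mathrm{test}}$ extension) do follow the paper's route once this Stage I step is repaired.
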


We discuss our sample complexity results in comparison with other methods, including regression on test prompts and prior theoretical works \citep{oko2024pretrained, nishikawa2025nonlinear}. We summarize these results in Table~\ref{table} and highlight the following key points:
\paragraph{Adaptation to Low-Dimensional Structure.} 
Our sample complexity depends on the intrinsic dimension of the feature vectors $r$, rather than the ambient dimension $d$. This is consistent with prior works \citep{oko2024pretrained, nishikawa2025nonlinear} that also demonstrate a dependence on intrinsic dimensionality. In contrast, the sample complexities of various regression algorithms we have discussed depend on the full input dimension $d$. This difference arises because pretrained models can learn the low-dimensional structure of the intrinsic feature space during pretraining.
\paragraph{Test-Time Feature Learning.} The dependence of the sample complexity on the intrinsic dimension $r$ in the work of \citet{oko2024pretrained} is controlled by the degree of the link function $g_*$. This means that while their approach is more efficient than simple regression on full dimensions, its performance remains close to that of kernel methods on intrinsic dimensions. In contrast, our result depends on the generative exponent $\ge(g_*)$, instead of its degree. This implies that Mamba's efficient in-context learning is enabled not just by its ability to learn an intrinsic feature space, but also by a process called \emph{test-time feature learning}, which allows the model to extract features directly from the context. The same process also works for the softmax Transformers considered in \citet{nishikawa2025nonlinear} and thus achieved a similar sample complexity. However, these models perform test-time feature learning through different mechanisms: Mamba relies on nonlinear gating, while the Transformer uses softmax attention.

\paragraph{Improvement in Pretraining Sample Complexity.} The conditions for the pretraining in our theorem can be satisfied with a pretraining sample complexity of $N_\pt = \Theta\left(d^{\Theta(\ge(g_*))}\right)$. In contrast, the pretraining sample complexities in previous works \citep{oko2024pretrained, nishikawa2025nonlinear} are governed by the information exponent, which can lead to a suboptimal rate in the worst case. This improvement is due to the nonlinearity of the MLP, as we discuss in detail in Section~\ref{sec:proof_overview}. Note that since these sample complexity bounds are sufficient conditions, it remains open whether Mamba is theoretically more efficient than Transformers in pretraining. While we leave this theoretical investigation as a future direction, our experiments in Section~\ref{sec:exp} suggest this is indeed the case.

\begin{table}[t]
\centering
\caption{Summary of sample complexity results for regression algorithms on test prompt and prior works  on in-context learning \citep{oko2024pretrained, nishikawa2025nonlinear}.}\label{table}
\resizebox{0.55\textwidth}{!}{ 
\begin{tabular}{c c c}
\toprule
\multicolumn{3}{c}{\textbf{Regression on Test Prompt}} \\
\midrule
Kernel & CSQ & SQ \\
\midrule
$d^{\Theta(\deg(g_*))}$ & $d^{\Theta(\mathrm{ie}(g_*))}$ & $d^{\Theta(\mathrm{ge}(g_*))}$ \\
\midrule\midrule
\multicolumn{3}{c}{\textbf{In-context learning}} \\
\midrule
Linear Transformer & Softmax Transformer & Mamba\\
\midrule
\citet{oko2024pretrained} & \citet{nishikawa2025nonlinear} & \textcolor{darkred}{This Work}\\
\midrule
\emph{Pretrain:} $d^{\Theta(\mathrm{ie}(g_*))}$ & 
\emph{Pretrain:} $d^{\Theta(\mathrm{ie}(g_*))}$ & 
\emph{Pretrain:} \textcolor{darkred}{$d^{\Theta(\mathrm{ge}(g_*))}$} \\
\emph{Test:} $r^{\Theta(\deg(g_*))}$ &
\emph{Test:} $r^{\Theta(\mathrm{ge}(g_*))}$ &
\emph{Test:} \textcolor{darkred}{$r^{\Theta(\mathrm{ge}(g_*))}$} \\
\bottomrule
\end{tabular}
}
\end{table}

\section{Proof Overview}\label{sec:proof_overview}
In this section, we provide an overview of the proof for our theorem. The proof consists of three main parts: an analysis of one-step gradient descent on the Mamba layer, the optimization of the MLP, and a test error analysis. The formal proofs for each of these steps are provided in Appendices~\ref{appendix:mamba}, \ref{appendix:mlp}, and \ref{appendix:test_error}, respectively. In the following, we introduce the key ideas behind each step, providing a detailed discussion of our theoretical components.

\subsection{One-Step Gradient Descent on the Mamba Layer}\label{sec:proof_overview_mamba}
We show that the pretrained parameter $\vgamma^*$ recovers the intrinsic feature space $S_r$ by attaining significantly larger components within the feature index set $\gI$ than in other indices. Furthermore, we show that pretrained Mamba performs test-time feature learning by establishing the following proposition (formally stated in Proposition~\ref{proposition:updated_mamba}):
\begin{proposition}[Informal]\label{proposition:test-time_feature_learning}
    For a sampled ICL input embedding $\mZ$ with context length $N = \tilde \Omega\left(r^{3 \ge(g_*)}\right)$, query $\vx$, and feature vector $\vbeta$, the following holds with high probability:
    \begin{equation}\label{eq:feature_learning}
        \Mamba(\mZ;\vgamma^*) \approx P_1 + P_2 \left( \frac{\langle \vbeta, \vx \rangle}{r}\right)^{\ge(g_*)},
    \end{equation}
    where $P_1$ and $P_2$ are independent of the data.
\end{proposition}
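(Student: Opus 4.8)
The plan is to peel the argument into the same three layers as the preceding analysis: reduce the scalar Mamba readout to an empirical average over the context examples; replace that average by its conditional mean via concentration; and identify the conditional mean with the claimed low-degree polynomial in $\langle\vbeta,\vx\rangle$ using the structure of $\vgamma^*$ carried over from the Stage~I analysis.

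First I would start from the closed form $\Mamba(\mZ;\vgamma^*)[\tilde d+1,N+1]=\sum_{j=1}^N G_{j,N+1}(\mZ)\,y_j\,\phi(\vx_j)^\top(\vgamma^*\odot\phi(\vx))$. On the high-probability event $\{\max_{j\leq N}|y_j|=\bigOtilde(1)\}$, every forgetting factor obeys $1-\sigma(\vw^\top\vz_k+b)=1-\Theta(e^{b})$ for the prescribed (large negative) bias $b$, so $\prod_{k=j+1}^{N+1}(1-\sigma(\vw^\top\vz_k+b))=1-\bigOtilde(Ne^{b})=1-o(1)$ uniformly in $j$ --- this is precisely where $|b|=\Theta(\log d)$ with a sufficiently large constant is used, so that $Ne^{b}=o(1)$ for every admissible $N\leq d^{O(1)}$. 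Consequently $\Mamba(\mZ;\vgamma^*)[\tilde d+1,N+1]=(1+o(1))\sum_{j=1}^N\sigma(y_j/\rho+b)\,y_j\,\phi(\vx_j)^\top(\vgamma^*\odot\phi(\vx))$.

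Second, conditionally on $\vbeta$ and the query $\vx$ the summands $\sigma(y_j/\rho+b)\,y_j\,\phi(\vx_j)^\top(\vgamma^*\odot\phi(\vx))$ are i.i.d.\ with mean $\langle\vc_\vbeta,\,\vgamma^*\odot\phi(\vx)\rangle$, where $\vc_\vbeta=\E_{(\vx',y')\sim\gD_\vbeta}[y'\sigma(y'/\rho+b)\phi(\vx')]$ is exactly the vector appearing in \Secref{sec:proof_overview_mamba}; moreover $\vc_\vbeta$ is supported on the Hermite coordinates indexed by $\gI$ (because $\vbeta$ is), so only the $\gI$-coordinates of $\vgamma^*$ enter. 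A Bernstein/Hoeffding bound then gives $\Mamba(\mZ;\vgamma^*)[\tilde d+1,N+1]=N\langle\vc_\vbeta,\,\vgamma^*\odot\phi(\vx)\rangle+\bigOtilde(\sqrt N\,\eta e^{2b})$ with high probability. To pin down the length requirement I would feed in the Stage~I control on $\vgamma^*$ --- its $\gI$-coordinates take a common value of order $\eta e^{b}/r$ in the degree-$1$ block and of order $\eta e^{b}/r^{2}$ in the degree-$2$ block, up to a Stage~I estimation error controlled by $T_1,N_\pt$ --- so that the feature-carrying part of $N\langle\vc_\vbeta,\,\vgamma^*\odot\phi(\vx)\rangle$ has size $\Theta(N\eta e^{2b}r^{-\ge(g_*)})$; comparing this against the concentration fluctuation $\bigOtilde(\sqrt N\,\eta e^{2b})$, against the gating-product error $\bigOtilde(N^{2}\eta e^{3b})$, and against the residual Stage~I error is what forces $N=\tilde\Omega(r^{3\ge(g_*)})$.

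Third, and most delicately, I would evaluate $\langle\vc_\vbeta,\,\vgamma^*\odot\phi(\vx)\rangle$ and recognize its dependence on $\vx$. Because $\phi$ realizes the degree-$\leq 2$ Hermite basis, a Hermite/Stein computation shows that the degree-$k$ block of $\vc_\vbeta$ ($k\in\{0,1,2\}$) equals $c_k:=H(\bar h,k)$ times the degree-$k$ Hermite components of $\vx$ evaluated along $\vbeta$, where $\bar h(z):=\E_{\zeta}[(g_*(z)+\zeta)\sigma((g_*(z)+\zeta)/\rho+b)]$. Substituting the Stage~I form of $\vgamma^*$ (common value $g_1$ on the degree-$1$ coordinates in $\gI$, values $g_{2,\mathrm{diag}},g_{2,\mathrm{off}}$ on the diagonal resp.\ off-diagonal degree-$2$ coordinates in $\gI\times\gI$, all of the stated orders, everything else irrelevant by the support remark) and using the identity $\sum_i\vbeta[i]^{2}(\vx[i]^{2}-1)+2\sum_{i<j}\vbeta[i]\vbeta[j]\vx[i]\vx[j]=\langle\vbeta,\vx\rangle^{2}-1$, I obtain
\begin{equation*}
\langle\vc_\vbeta,\,\vgamma^*\odot\phi(\vx)\rangle
= P_1 + \Theta(c_1 g_1)\,\langle\vbeta,\vx\rangle + \Theta(c_2 g_{2,\mathrm{off}})\,\langle\vbeta,\vx\rangle^{2}
+ \Theta\big(c_2(g_{2,\mathrm{diag}}-g_{2,\mathrm{off}})\big)\,\textstyle\sum_i\vbeta[i]^{2}(\vx[i]^{2}-1),
\end{equation*}
with $P_1$ absorbing the degree-$0$ term and the constants; the last term is a mean-zero fluctuation of standard deviation $\Theta(r^{-1/2})$ times its coefficient, hence negligible against $\Theta(c_2 g_{2,\mathrm{off}})\langle\vbeta,\vx\rangle^{2}$ since $|g_{2,\mathrm{diag}}-g_{2,\mathrm{off}}|=O(c_2/r^{2})$. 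Finally I would invoke \lmmref{lemma:ge} and the label-transformation fact $\ie(g_*\sigma(g_*/\rho+b))=\ge(g_*)$ from \Secref{sec:proof_overview_mamba}: if $g_*$ is non-even then $\ge(g_*)=1$, $c_1\neq0$, and $\Theta(c_1 g_1)\langle\vbeta,\vx\rangle=\Theta(c_1^2)(\langle\vbeta,\vx\rangle/r)$ dominates the quadratic term of order $(\langle\vbeta,\vx\rangle/r)^{2}$; if $g_*$ is even then $\bar h$ is even, so $c_1=0$, the degree-$1$ block of $\vgamma^*$ vanishes, and only $\Theta(c_2 g_{2,\mathrm{off}})\langle\vbeta,\vx\rangle^{2}=\Theta(c_2^2)(\langle\vbeta,\vx\rangle/r)^{2}$ survives (here one uses that $c_2=H(\bar h,2)\neq0$, which in turn needs $\rho=\Theta((\log d)^{C_\rho})$ not too large). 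In either case $\langle\vc_\vbeta,\,\vgamma^*\odot\phi(\vx)\rangle\approx P_1+P_2(\langle\vbeta,\vx\rangle/r)^{\ge(g_*)}$, and multiplying by $N$ from the second step yields \eqref{eq:feature_learning}. I expect the second step to be the main obstacle: matching the three nested approximation errors against a feature signal of order only $N\eta e^{2b}r^{-\ge(g_*)}$ is what determines the exponent $3\ge(g_*)$, and it must be done while simultaneously carrying the Stage~I estimation error in $\vgamma^*$ through the concentration bound.
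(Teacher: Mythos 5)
This is essentially the paper's own proof of the formal version (Proposition~\ref{proposition:updated_mamba}): replace the gating products $G_{j,N+1}$ by $\sigma(y_j/\rho+b)$ using the large negative bias, apply Hoeffding (after truncation) to the context sum conditionally on $(\vbeta,\vx,\vgamma^*)$ so that it concentrates on $N\langle\vc_\vbeta,\vgamma^*\odot\phi(\vx)\rangle$ with $\vc_\vbeta=\psi(\vbeta,a_0,a_1,a_2)$, and evaluate that pairing using the Stage-I form $\vgamma^*\approx 2\eta\,\mathbb{E}_{\vbeta'}\big[\psi(\vbeta',a_0,a_1,a_2)\odot\psi(\vbeta',b_0,b_1,b_2)\big]$ together with the parity/generative-exponent facts, so you follow the same route; your explicit handling of the leftover $\sum_i\vbeta[i]^2(\vx[i]^2-1)$ term arising from $\mathbb{E}[\vbeta'[i]^4]\neq\mathbb{E}[\vbeta'[i]^2\vbeta'[j]^2]$ is if anything more careful than the paper's displayed algebra. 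One attribution is off, though it does not affect the claimed statement since $N$ is given as a hypothesis: the signal-versus-fluctuation comparison you invoke only forces $N=\tilde\Omega\big(r^{2\ge(g_*)}\big)$, whereas the exponent $3\ge(g_*)$ is dictated by the downstream tolerance $o\big(P_2\, r^{-3\ge(g_*)/2}(\log d)^{-2\deg(g_*)+2}\big)$ in the formal proposition, which is needed because the later MLP and test-error steps multiply this additive error by $\norm{\vu^*}m^{1/2}=\bigOtilde\big(r^{3\ge(g_*)/2}\big)$.
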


Assuming a negative bias $b$ with sufficiently large absolute value, and a large enough number of tasks $T_1$ and context length $N_\pt$, the updated parameter $\vgamma^*$ can be approximated as follows:
\begin{align*}
    \vgamma^* &\approx 2\eta \mathbb{E}_{(\mZ,y)\sim \gD(N_\pt)}[y \nabla_\vgamma f(\mZ ; \vgamma^{(0)}, \vu^{(0)},\vv^{(0)},\va^{(0)})]\\
    &\approx 2\eta \mathbb{E}_{\substack{\vbeta \sim \mathrm{Unif}(S_r)\\(\vx,y) \sim \gD_{\vbeta}}} \left[ y \mathbbm{1}[  \langle \vc_\vbeta , \vgamma^{(0)} \odot \phi(\vx) \right \rangle > 0 ] \vc_\vbeta \odot \phi(\vx) ], 
\end{align*}
where $\vc_\vbeta := \mathbb{E}_{(\vx, y) \sim \gD_\vbeta} \left[ y \sigma(y/\rho + b) \phi(\vx) \right]$ corresponds to a simplified expectation over context examples, neglecting the effect of ``forgetting'' in the gating mechanism.

\paragraph{The Role of Gating and Input Embedding.} For the proof of Proposition~\ref{proposition:test-time_feature_learning} and our test-time sample complexity, nonlinear transformation introduced by gating mechanism plays a crucial role. In the absence of a gating mechanism and with only a linear attention, the term $\vc_\vbeta$ is replaced by $\mathbb{E}_{(\vx,y) \sim \gD_\vbeta}[y\phi(\vx)]$ and this term vanishes when $\ie(g_*)>2$. This is a consequence of our input embedding using Hermite polynomials only up to the second degree, in combination with Stein's lemma. As a result, pretraining in this case is unable to learn useful information.
However, we prove that the gating mechanism ensures $c_\vbeta$ is non-zero, thereby enabling the model to extract information. This is because the nonlinear transformation introduced by the gating mechanism reduces the information exponent to the generative exponent: $\ie(g_* \sigma(g_*/\rho +b)) = \ge(g_*)$. This reduction, combined with Lemma~\ref{lemma:ge} and our input embedding, makes information extraction possible. 
When $g_*$ is a non-even function, our result can also be shown to hold with the standard input embedding $\phi(\vx) = \vx$, as can be seen from this intuition. Reducing the information exponent to the generative exponent crucially affects the achievement of a test-time sample complexity below the CSQ lower bounds. \citet{nishikawa2025nonlinear} shows that the softmax operator in the Transformer can also perform a nonlinear transformation on the label, which reduces the information exponent. This highlights a key difference in the mechanisms used by Mamba and the softmax Transformer to achieve this result.

\paragraph{Improved Sample Complexity of Pretraining.}
The nonlinearity of the MLP is crucial for our analysis of the pretraining sample complexity.
If the indicator $\mathbbm{1}[\cdot]$ inside the expectation is 1 with high probability, then the updated parameter $\vgamma^*$ can be approximated as $\vgamma^* \approx 2\eta \mathbb{E}_{\vbeta \sim \mathrm{Unif}(S_r)}\left[\mathbb{E}_{(\vx,y \sim \gD_\vbeta)}[y \vc_\vbeta \odot \phi(\vx)]\right]$ and this close to zero when $\ie(g_*)>2$, leading to less information gain.
However, we show that the indicator deviates significantly from a constant value. We formally prove that this deviation allows the indicator to reduce the information exponent when multiplied by the label $y$, thereby inducing a pretraining sample complexity not governed by $\ie(g_*)$. While \citet{nishikawa2025nonlinear} employ an architecture with a similar structure to ours—an MLP following a Softmax Transformer—they do not achieve the same improvement. This is because their use of Softmax places the model in a regime where a key indicator function is 1 with high probability, which is sufficient in their case as Softmax generates the necessary higher-order functions of the input, while our analysis only uses up to second-order functions. In addition, our observation for this improvement cannot be directly applied to the work of \citet{oko2024pretrained} due to a key architectural difference: applying the MLP layer in the input embedding rather than at the output layer.

\vspace{-5pt}
\subsection{Optimization of the MLP and Test Error Analysis}\label{sec:proof_overview_MLP}
In our analysis of Stage II pretraining, we show that the MLP can fit the link function $g_*$. We first construct an outer layer parameter $\vu'$ such that the loss $L_2(\vgamma^*, \vu', \vv, \va)$ is sufficiently small and the norm $\norm{\vu'}$ is well-bounded. Our construction is based on the techniques in \citet{damian2022neural}, which constructed a ReLU network approximating monomials. This allows our model to learn high-order polynomials with a few layers, in contrast to the multi-layer approach in \citet{sun2025role}. This is possible because Lemma~\ref{lemma:ge} implies that $g_*(z)$ is a polynomial of $z^{\ge(g_*)}$, allowing us to construct an MLP that approximates $g_*(\langle \vbeta, \vx \rangle )$, when the input is provided in the form of \eqref{eq:feature_learning}.

From the equivalence between $\ell_2$-regularization and $\ell_2$ norm-constraints in convex problems, we show that for a proper $\lambda_2>0$, the minimizer $\vu^*$ satisfies $L_2(\vgamma^*, \vu^*, \vv^*, \va^*)\leq L_2(\vgamma^*, \vu', \vv^*, \va^*)$ and $\norm{\vu^*} \leq \norm{\vu'}$.
Next, we show that the trained model achieves a small test error with context length $N_\pt$ by applying a standard generalization bound based on Rademacher complexity, which is applicable due to a well-bounded norm $\norm{\vu^*}$. 
Lastly, we extend this error bound to a general context length  $N_{\mathrm{test}} = \tilde \Omega\left(r^{3\ge(g_*)}\right)$. It is possible because \eqref{eq:feature_learning} implies that prompts with $N_{\mathrm{test}}$ context examples and $N_\pt$context examples give similar outputs, given the same query.
\section{Experiments}
In this section, we present experimental results on standard architectures and optimization algorithms beyond our simplified setting to support our theoretical findings.

We pretrain and evaluate both Transformer and Mamba models on our data distribution. Our base configuration uses a link function $g_*(z) = \He_3 (z) / \sqrt{6}$, an intrinsic dimension of $r=8$, and an ambient dimension $d=32$. We employ a 6-layer GPT-2 model \citep{radford2019language} with 8 attention heads and a 12-layer Mamba model. To ensure a fair comparison, both models have an embedding dimension of 256 and a similar number of parameters. The overall experimental settings for pretraining follow those of prior works \citep{garg2022can,park2024can}, including the optimization algorithm and a pretraining context length $N_\pt = 64$. We also conduct kernel ridge regression on the intrinsic feature space to serve as a baseline for understanding the effect of feature learning. For this, we use a Gaussian RBF kernel with a bandwidth of 1 and a ridge parameter of 1. For evaluation, we measure the prediction error using squared error, with the number of context examples ranging from 1 to 40. We estimate the test error on 1024 randomly sampled tasks, using 2048 independent prompts for each task, and represent the results with the mean and standard deviation over these tasks. 

To validate our theoretical results on how problem parameters influence performance, we then analyze trends by varying parameters from our base configuration: the ambient dimension to $d=16$, the intrinsic dimension to $r = 16$, and the pretraining context length to $N_\pt = 16$.
Figure~\ref{fig:d} demonstrates the influence of the ambient dimension $d$. Both Transformer and Mamba models exhibit comparable performance that is rarely affected by the ambient dimension $d$.
In contrast, when the intrinsic dimension $r$ is increased, both models exhibit performance degradation, while their performance remains comparable (Figure~\ref{fig:r}). This suggests that both models mainly utilize information from the intrinsic feature space. In addition, these methods outperform kernel methods, even when we restrict the input of the kernel method to the intrinsic feature space. This observation aligns with our finding that Mamba, similar to Transformers, not only benefits from its adaptation to the intrinsic feature space but also performs test-time feature learning. Lastly, when using a small pretraining context length $N_\pt = 16$, the Transformer's performance deteriorates significantly, while Mamba's performance remains comparable to its performance with $N_\pt=64$ (Figure~\ref{fig:Npt}). This observation aligns with our pretraining sample complexity result, which is lower than that of the Transformer established by \citet{nishikawa2025nonlinear}.

\begin{figure}[h]
    \centering
    \begin{subfigure}[t]{0.333\textwidth}
        \centering
        \includegraphics[width = \linewidth]{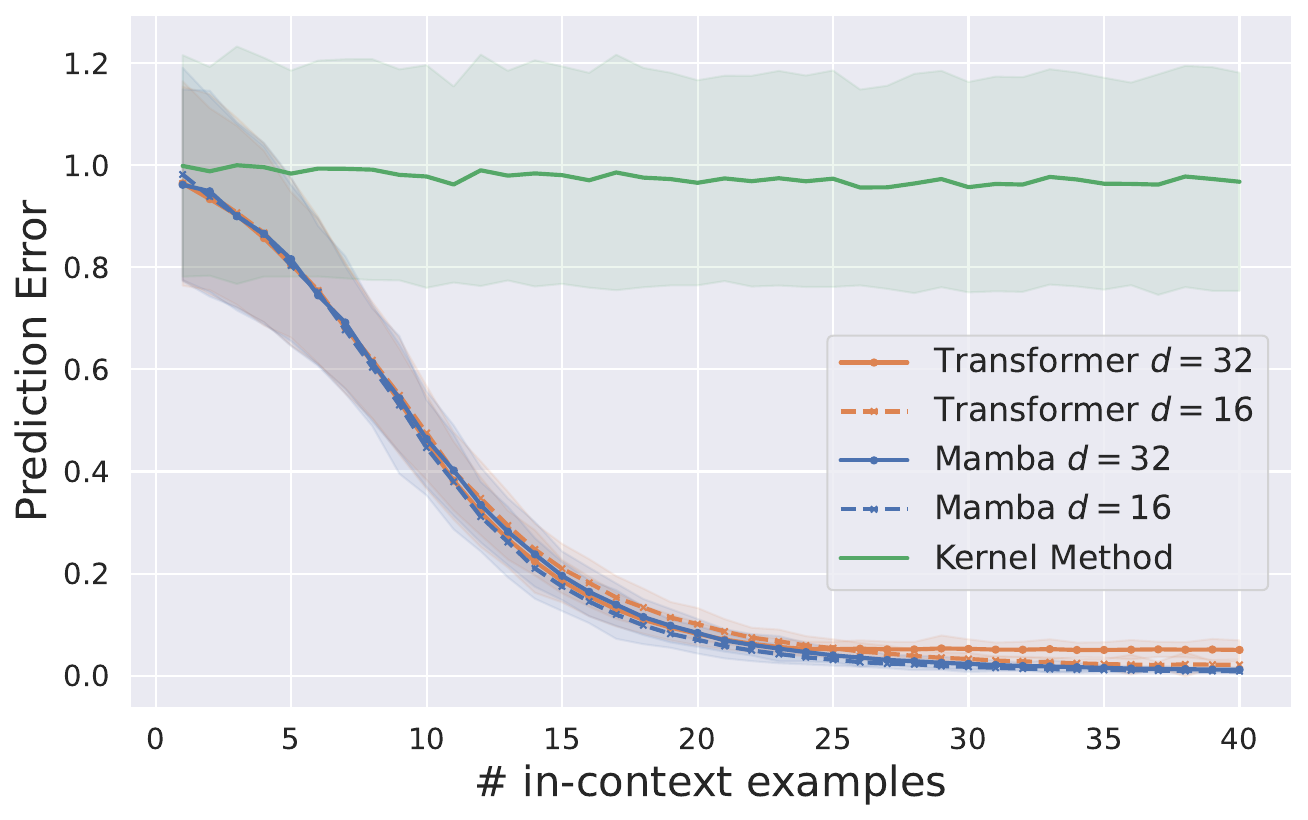}
        \caption{$d = 32$ vs. $d=16$}
        \label{fig:d}
    \end{subfigure}%
    \begin{subfigure}[t]{0.333\textwidth}
        \centering
        \includegraphics[width = \linewidth]{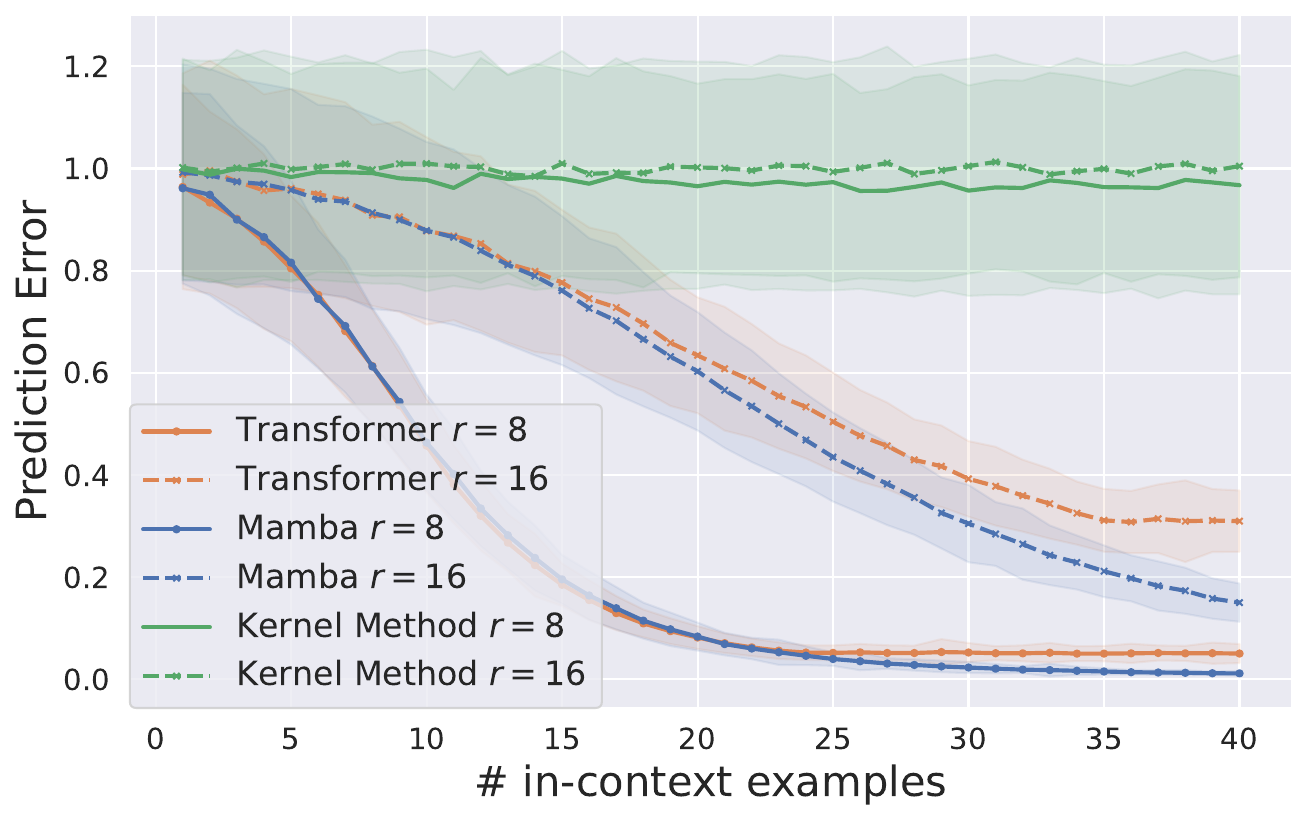}
        \caption{$r=8$ vs. $r=16$}
        \label{fig:r}
    \end{subfigure}%
    \begin{subfigure}[t]{0.333\textwidth}
        \centering
        \includegraphics[width = \linewidth]{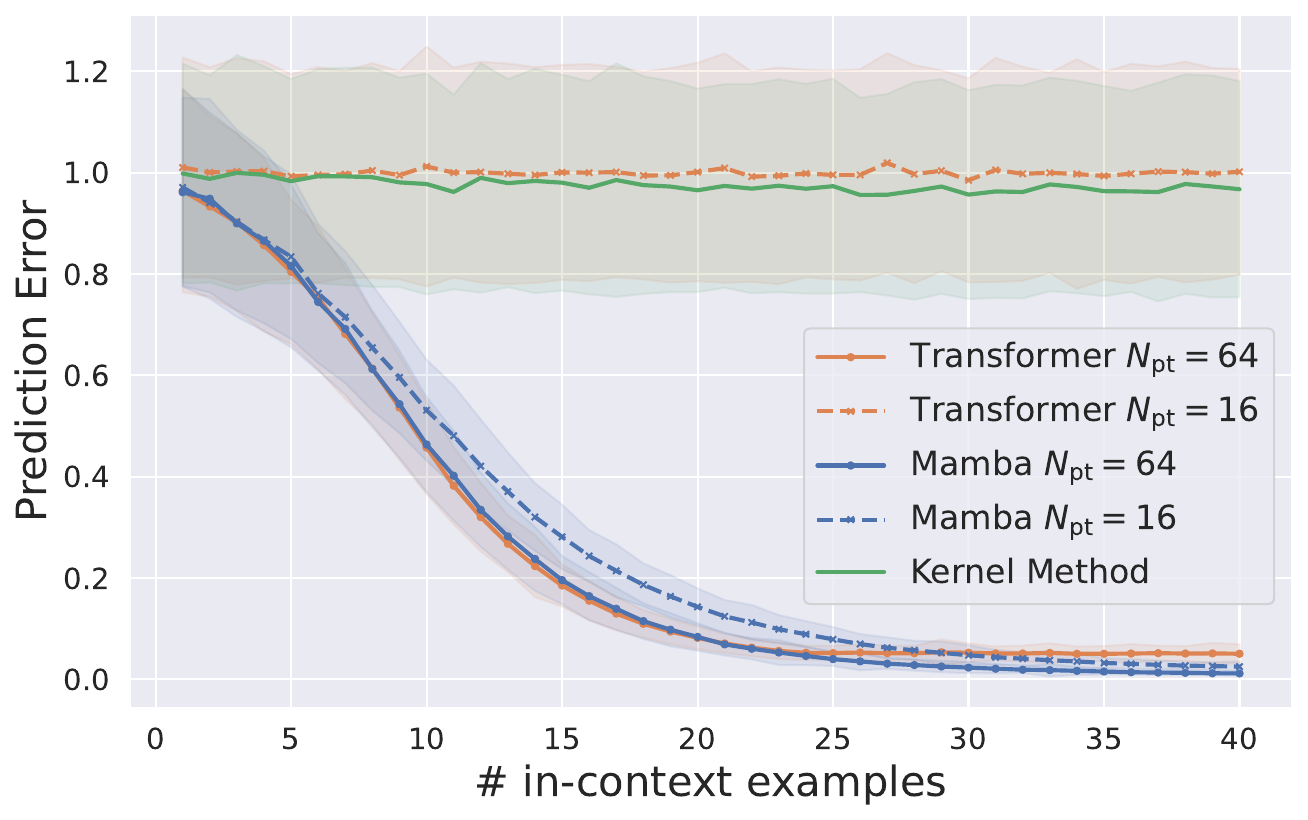}
        \caption{$N_\pt = 64$  vs. $N_\pt = 16$}
        \label{fig:Npt}
    \end{subfigure}
    \caption{Comparison of prediction error for in-context learning with Transformer and Mamba models, and kernel regression across different problem parameter settings.}
    \label{fig}
\end{figure}
\vspace{-10pt}
\section{Conclusion}
\vspace{-3pt}
We investigated Mamba's capability for in-context learning by focusing on a Gaussian single-index model. We proved that Mamba, when pretrained with gradient-based optimization, can efficiently learn in-context through a mechanism we termed test-time feature learning. Our derived test-time sample complexity is comparable to that of the softmax Transformer model, a result established by \citet{nishikawa2025nonlinear} and also surpasses the CSQ lower bound. Our analysis reveals that Mamba's gating mechanism is a key factor in enabling feature learning and strong performance. We also presented experimental results to support our findings.

We suggest several directions for future research. First, a valuable direction is to investigate whether our results can be extended to more general input embeddings by considering additional layers, which could help overcome our current limitations.
Second, while our analysis considers the case where ``forgetting'' in the gating mechanism is negligible, recent work by \citet{li2025can} reveals that this effect can be beneficial for tasks with outliers. Investigating the combination of this effect with our insight could be an interesting direction. Finally, studying how different choices of gating functions within the gated linear attention framework \citep{yang2024gated} lead to different behaviors is a possible direction for future work.

\section*{Acknowledgement}
JO acknowledges partial support by the Institute for Information \& communications Technology Planning \& Evaluation (IITP) grant funded by the Korean government (MSIT) (No. RS-2019-II190075, Artificial Intelligence Graduate School Program(KAIST)).
WH was supported by JSPS KAKENHI (24K20848) and JST BOOST (JPMJBY24G6). TS was partially supported by JSPS KAKENHI (24K02905) and JST CREST (JPMJCR2015).
This research is supported by the National Research Foundation, Singapore, Infocomm Media Development Authority under its Trust Tech Funding Initiative, and the Ministry of Digital Development and Information under the AI Visiting Professorship Programme (award number AIVP-2024-004). Any opinions, findings and conclusions or recommendations expressed in this material are those of the author(s) and do not reflect the views of National Research Foundation, Singapore, Infocomm Media Development Authority, and the Ministry of Digital Development and Information.

\bibliography{reference.bib}

@inproceedings{
gu2024mamba,
title={Mamba: Linear-Time Sequence Modeling with Selective State Spaces},
author={Albert Gu and Tri Dao},
booktitle={First Conference on Language Modeling},
year={2024},
}

@inproceedings{yang2024gated,
  title={Gated Linear Attention Transformers with Hardware-Efficient Training},
  author={Yang, Songlin and Wang, Bailin and Shen, Yikang and Panda, Rameswar and Kim, Yoon},
  booktitle={International Conference on Machine Learning},
  pages={56501--56523},
  year={2024},
  organization={PMLR}}

@article{beck2024xlstm,
  title={xlstm: Extended long short-term memory},
  author={Beck, Maximilian and P{\"o}ppel, Korbinian and Spanring, Markus and Auer, Andreas and Prudnikova, Oleksandra and Kopp, Michael and Klambauer, G{\"u}nter and Brandstetter, Johannes and Hochreiter, Sepp},
  journal={Advances in Neural Information Processing Systems},
  volume={37},
  pages={107547--107603},
  year={2024}
}

@inproceedings{
peng2024eagle,
title={Eagle and Finch: {RWKV} with Matrix-Valued States and Dynamic Recurrence},
author={Bo Peng and Daniel Goldstein and Quentin Gregory Anthony and Alon Albalak and Eric Alcaide and Stella Biderman and Eugene Cheah and Teddy Ferdinan and Kranthi Kiran GV and Haowen Hou and Satyapriya Krishna and Ronald McClelland Jr. and Niklas Muennighoff and Fares Obeid and Atsushi Saito and Guangyu Song and Haoqin Tu and Ruichong Zhang and Bingchen Zhao and Qihang Zhao and Jian Zhu and Rui-Jie Zhu},
booktitle={First Conference on Language Modeling},
year={2024},
url={https://openreview.net/forum?id=soz1SEiPeq}
}

@article{brown2020language,
  title={Language models are few-shot learners},
  author={Brown, Tom and Mann, Benjamin and Ryder, Nick and Subbiah, Melanie and Kaplan, Jared D and Dhariwal, Prafulla and Neelakantan, Arvind and Shyam, Pranav and Sastry, Girish and Askell, Amanda and others},
  journal={Advances in neural information processing systems},
  volume={33},
  pages={1877--1901},
  year={2020}
}

@inproceedings{von2023transformers,
  title={Transformers learn in-context by gradient descent},
  author={Von Oswald, Johannes and Niklasson, Eyvind and Randazzo, Ettore and Sacramento, Jo{\~a}o and Mordvintsev, Alexander and Zhmoginov, Andrey and Vladymyrov, Max},
  booktitle={International Conference on Machine Learning},
  pages={35151--35174},
  year={2023},
  organization={PMLR}
}

@article{ahn2023transformers,
  title={Transformers learn to implement preconditioned gradient descent for in-context learning},
  author={Ahn, Kwangjun and Cheng, Xiang and Daneshmand, Hadi and Sra, Suvrit},
  journal={Advances in Neural Information Processing Systems},
  volume={36},
  pages={45614--45650},
  year={2023}
}

@article{zhang2024trained,
  title={Trained transformers learn linear models in-context},
  author={Zhang, Ruiqi and Frei, Spencer and Bartlett, Peter L},
  journal={Journal of Machine Learning Research},
  volume={25},
  number={49},
  pages={1--55},
  year={2024}
}

@inproceedings{
    mahankali2024one,
    title={One Step of Gradient Descent is Provably the Optimal In-Context Learner with One Layer of Linear Self-Attention},
    author={Arvind V. Mahankali and Tatsunori Hashimoto and Tengyu Ma},
    booktitle={The Twelfth International Conference on Learning Representations},
    year={2024},
    url={https://openreview.net/forum?id=8p3fu56lKc}
}

@inproceedings{kim2024nonlinear,
  title={Transformers Learn Nonlinear Features In Context: Nonconvex Mean-field Dynamics on the Attention Landscape},
  author={Kim, Juno and Suzuki, Taiji},
  booktitle={International Conference on Machine Learning},
  pages={24527--24561},
  year={2024},
  organization={PMLR}
}

@article{oko2024pretrained,
  title={Pretrained transformer efficiently learns low-dimensional target functions in-context},
  author={Oko, Kazusato and Song, Yujin and Suzuki, Taiji and Wu, Denny},
  journal={Advances in Neural Information Processing Systems},
  volume={37},
  pages={77316--77365},
  year={2024}
}

@inproceedings{
nishikawa2025nonlinear,
title={Nonlinear transformers can perform inference-time feature learning},
author={Naoki Nishikawa and Yujin Song and Kazusato Oko and Denny Wu and Taiji Suzuki},
booktitle={Forty-second International Conference on Machine Learning},
year={2025},
url={https://openreview.net/forum?id=xQTSvP57C3}
}

@article{sun2025role,
  title={On the Role of Transformer Feed-Forward Layers in Nonlinear In-Context Learning},
  author={Sun, Haoyuan and Jadbabaie, Ali and Azizan, Navid},
  journal={arXiv preprint arXiv:2501.18187},
  year={2025}
}

@inproceedings{
aky2023what,
title={What learning algorithm is in-context learning? Investigations with linear models},
author={Ekin Aky{\"u}rek and Dale Schuurmans and Jacob Andreas and Tengyu Ma and Denny Zhou},
booktitle={The Eleventh International Conference on Learning Representations },
year={2023},
url={https://openreview.net/forum?id=0g0X4H8yN4I}
}

@inproceedings{park2024can,
  title={Can Mamba Learn How To Learn? A Comparative Study on In-Context Learning Tasks},
  author={Park, Jongho and Park, Jaeseung and Xiong, Zheyang and Lee, Nayoung and Cho, Jaewoong and Oymak, Samet and Lee, Kangwook and Papailiopoulos, Dimitris},
  booktitle={International Conference on Machine Learning},
  pages={39793--39812},
  year={2024},
  organization={PMLR}
}

@inproceedings{grazzi2024mamba,
  title={Is Mamba Capable of In-Context Learning?},
  author={Grazzi, Riccardo and Siems, Julien Niklas and Schrodi, Simon and Brox, Thomas and Hutter, Frank},
  booktitle={International Conference on Automated Machine Learning},
  pages={1--1},
  year={2024},
  organization={PMLR}
}

@article{li2025can,
  title={Can Mamba Learn In Context with Outliers? A Theoretical Generalization Analysis},
  author={Li, Hongkang and Lu, Songtao and Cui, Xiaodong and Chen, Pin-Yu and Wang, Meng},
  journal={arXiv preprint arXiv:2510.00399},
  year={2025}
}

@article{bondaschi2025markov,
  title={From Markov to Laplace: How Mamba In-Context Learns Markov Chains},
  author={Bondaschi, Marco and Rajaraman, Nived and Wei, Xiuying and Ramchandran, Kannan and Pascanu, Razvan and Gulcehre, Caglar and Gastpar, Michael and Makkuva, Ashok Vardhan},
  journal={arXiv preprint arXiv:2502.10178},
  year={2025}
}

@article{li2024fine,
  title={Fine-grained analysis of in-context linear estimation: Data, architecture, and beyond},
  author={Li, Yingcong and Rawat, Ankit S and Oymak, Samet},
  journal={Advances in Neural Information Processing Systems},
  volume={37},
  pages={138324--138364},
  year={2024}
}

@article{li2025gating,
  title={Gating is weighting: Understanding gated linear attention through in-context learning},
  author={Li, Yingcong and Tarzanagh, Davoud Ataee and Rawat, Ankit Singh and Fazel, Maryam and Oymak, Samet},
  journal={arXiv preprint arXiv:2504.04308},
  year={2025}
}

@article{jiang2025trained,
  title={Trained Mamba Emulates Online Gradient Descent in In-Context Linear Regression},
  author={Jiang, Jiarui and Huang, Wei and Zhang, Miao and Suzuki, Taiji and Nie, Liqiang},
  journal={arXiv preprint arXiv:2509.23779},
  year={2025}
}

@inproceedings{
fu2023hungry,
title={Hungry Hungry Hippos: Towards Language Modeling with State Space Models},
author={Daniel Y Fu and Tri Dao and Khaled Kamal Saab and Armin W Thomas and Atri Rudra and Christopher Re},
booktitle={The Eleventh International Conference on Learning Representations },
year={2023},
url={https://openreview.net/forum?id=COZDy0WYGg}
}

@inproceedings{donhauser2021rotational,
  title={How rotational invariance of common kernels prevents generalization in high dimensions},
  author={Donhauser, Konstantin and Wu, Mingqi and Yang, Fanny},
  booktitle={International Conference on Machine Learning},
  pages={2804--2814},
  year={2021},
  organization={PMLR}
}

@article{ghorbani2021linearized,
  title={Linearized two-layers neural networks in high dimension},
  author={Ghorbani, Behrooz and Mei, Song and Misiakiewicz, Theodor and Montanari, Andrea},
  journal={The Annals of Statistics},
  volume={49},
  number={2},
  year={2021}
}

@article{dandi2023two,
  author  = {Yatin Dandi and Florent Krzakala and Bruno Loureiro and Luca Pesce and Ludovic Stephan},
  title   = {How Two-Layer Neural Networks Learn, One (Giant) Step at a Time},
  journal = {Journal of Machine Learning Research},
  year    = {2024},
  volume  = {25},
  number  = {349},
  pages   = {1--65},
  url     = {http://jmlr.org/papers/v25/23-1543.html}
}

@article{ba2022high,
  title={High-dimensional asymptotics of feature learning: How one gradient step improves the representation},
  author={Ba, Jimmy and Erdogdu, Murat A and Suzuki, Taiji and Wang, Zhichao and Wu, Denny and Yang, Greg},
  journal={Advances in Neural Information Processing Systems},
  volume={35},
  pages={37932--37946},
  year={2022}
}

@inproceedings{damian2022neural,
  title={Neural networks can learn representations with gradient descent},
  author={Damian, Alexandru and Lee, Jason and Soltanolkotabi, Mahdi},
  booktitle={Conference on Learning Theory},
  pages={5413--5452},
  year={2022},
  organization={PMLR}
}

@article{damian2023smoothing,
  title={Smoothing the landscape boosts the signal for sgd: Optimal sample complexity for learning single index models},
  author={Damian, Alex and Nichani, Eshaan and Ge, Rong and Lee, Jason D},
  journal={Advances in Neural Information Processing Systems},
  volume={36},
  pages={752--784},
  year={2023}
}

@inproceedings{damian2024computational,
  title={Computational-Statistical Gaps in Gaussian Single-Index Models},
  author={Damian, Alex and Pillaud-Vivien, Loucas and Lee, Jason and Bruna, Joan},
  booktitle={The Thirty Seventh Annual Conference on Learning Theory},
  pages={1262--1262},
  year={2024},
  organization={PMLR}
}

@article{lee2024neural,
  title={Neural network learns low-dimensional polynomials with {SGD} near the information-theoretic limit},
  author={Lee, Jason D and Oko, Kazusato and Suzuki, Taiji and Wu, Denny},
  journal={Advances in Neural Information Processing Systems},
  volume={37},
  pages={58716--58756},
  year={2024}
}

@article{arous2021online,
  title={Online stochastic gradient descent on non-convex losses from high-dimensional inference},
  author={Ben Arous, Gerard  and Gheissari, Reza and Jagannath, Aukosh},
  journal={Journal of Machine Learning Research},
  volume={22},
  number={106},
  pages={1--51},
  year={2021}
}

@article{arnaboldi2024repetita,
  title={Repetita iuvant: Data repetition allows {SGD} to learn high-dimensional multi-index functions},
  author={Arnaboldi, Luca and Dandi, Yatin and Krzakala, Florent and Pesce, Luca and Stephan, Ludovic},
  journal={arXiv preprint arXiv:2405.15459},
  year={2024}
}

@article{joshi2024complexity,
  title={On the complexity of learning sparse functions with statistical and gradient queries},
  author={Joshi, Nirmit and Misiakiewicz, Theodor and Srebro, Nati},
  journal={Advances in Neural Information Processing Systems},
  volume={37},
  pages={103198--103241},
  year={2024}
}

@book{bach2024learning,
  title={Learning theory from first principles},
  author={Bach, Francis},
  year={2024},
  publisher={MIT press}
}

@inproceedings{
dao2024transformers,
title={Transformers are {SSM}s: Generalized Models and Efficient Algorithms Through Structured State Space Duality},
author={Tri Dao and Albert Gu},
booktitle={Forty-first International Conference on Machine Learning},
year={2024},
url={https://openreview.net/forum?id=ztn8FCR1td}
}

@article{patro2025mamba,
  title={Mamba-360: Survey of state space models as transformer alternative for long sequence modelling: Methods, applications, and challenges},
  author={Patro, Badri Narayana and Agneeswaran, Vijay Srinivas},
  journal={Engineering Applications of Artificial Intelligence},
  volume={159},
  pages={111279},
  year={2025},
  publisher={Elsevier}
}

@article{waleffe2024empirical,
  title={An empirical study of mamba-based language models},
  author={Waleffe, Roger and Byeon, Wonmin and Riach, Duncan and Norick, Brandon and Korthikanti, Vijay and Dao, Tri and Gu, Albert and Hatamizadeh, Ali and Singh, Sudhakar and Narayanan, Deepak and others},
  journal={arXiv preprint arXiv:2406.07887},
  year={2024}
}

@article{vaswani2017attention,
  title={Attention is all you need},
  author={Vaswani, Ashish and Shazeer, Noam and Parmar, Niki and Uszkoreit, Jakob and Jones, Llion and Gomez, Aidan N and Kaiser, {\L}ukasz and Polosukhin, Illia},
  journal={Advances in neural information processing systems},
  volume={30},
  year={2017}
}

@article{garg2022can,
  title={What can transformers learn in-context? a case study of simple function classes},
  author={Garg, Shivam and Tsipras, Dimitris and Liang, Percy S and Valiant, Gregory},
  journal={Advances in neural information processing systems},
  volume={35},
  pages={30583--30598},
  year={2022}
}

@inproceedings{
zhou2023algorithms,
title={What Algorithms can Transformers Learn? A Study in Length Generalization},
author={Hattie Zhou and Arwen Bradley and Etai Littwin and Noam Razin and Omid Saremi and Joshua M. Susskind and Samy Bengio and Preetum Nakkiran},
booktitle={The Twelfth International Conference on Learning Representations},
year={2024},
url={https://openreview.net/forum?id=AssIuHnmHX}
}

@inproceedings{
xie2021explanation,
title={An Explanation of In-context Learning as Implicit Bayesian Inference},
author={Sang Michael Xie and Aditi Raghunathan and Percy Liang and Tengyu Ma},
booktitle={International Conference on Learning Representations},
year={2022},
url={https://openreview.net/forum?id=RdJVFCHjUMI}
}

@inproceedings{
li2024training,
title={How Do Nonlinear Transformers Learn and Generalize in In-Context Learning?},
author={Hongkang Li and Meng Wang and Songtao Lu and Xiaodong Cui and Pin-Yu Chen},
booktitle={Forty-first International Conference on Machine Learning},
year={2024},
url={https://openreview.net/forum?id=I4HTPws9P6}
}

@inproceedings{bu2025provable,
  title={Provable In-Context Vector Arithmetic via Retrieving Task Concepts},
  author={Bu, Dake and Huang, Wei and Han, Andi and Nitanda, Atsushi and Zhang, Qingfu and Wong, Hau-San and Suzuki, Taiji},
  booktitle={Forty-second International Conference on Machine Learning},
  year={2025}
}

@inproceedings{
huang2023context,
title={In-context Convergence of Transformers},
author={Yu Huang and Yuan Cheng and Yingbin Liang},
booktitle={Forty-first International Conference on Machine Learning},
year={2024},
url={https://openreview.net/forum?id=9GLvXGkUE2}
}

@inproceedings{
suzuki2018adaptivity,
title={Adaptivity of deep Re{LU} network for learning in Besov and mixed smooth Besov spaces: optimal rate and curse of dimensionality},
author={Taiji Suzuki},
booktitle={International Conference on Learning Representations},
year={2019},
url={https://openreview.net/forum?id=H1ebTsActm},
}

@inproceedings{girshick2014rich,
  title={Rich feature hierarchies for accurate object detection and semantic segmentation},
  author={Girshick, Ross and Donahue, Jeff and Darrell, Trevor and Malik, Jitendra},
  booktitle={Proceedings of the IEEE conference on computer vision and pattern recognition},
  pages={580--587},
  year={2014}
}

@article{wang2024mamba,
  title={The mamba in the llama: Distilling and accelerating hybrid models},
  author={Wang, Junxiong and Paliotta, Daniele and May, Avner and Rush, Alexander and Dao, Tri},
  journal={Advances in Neural Information Processing Systems},
  volume={37},
  pages={62432--62457},
  year={2024}
}

@inproceedings{li2024can,
  title={Can Mamba In-Context Learn Task Mixtures?},
  author={Li, Yingcong and Wei, Xupeng and Zhao, Haonan and Ma, Taigao},
  booktitle={ICML 2024 Workshop on In-Context Learning},
  year={2024}
}

@inproceedings{
cheng2024transformers,
title={Transformers Implement Functional Gradient Descent to Learn Non-Linear Functions In Context},
author={Xiang Cheng and Yuxin Chen and Suvrit Sra},
booktitle={Forty-first International Conference on Machine Learning},
year={2024},
url={https://openreview.net/forum?id=ah1BlQcLv4}
}

@article{
han2023explaining,
title={Understanding Emergent In-Context Learning from a Kernel Regression Perspective},
author={Chi Han and Ziqi Wang and Han Zhao and Heng Ji},
journal={Transactions on Machine Learning Research},
issn={2835-8856},
year={2025},
url={https://openreview.net/forum?id=6rD50Q6yYz},
note={}
}

@inproceedings{
saunshi2025reasoning,
title={Reasoning with Latent Thoughts: On the Power of Looped Transformers},
author={Nikunj Saunshi and Nishanth Dikkala and Zhiyuan Li and Sanjiv Kumar and Sashank J. Reddi},
booktitle={The Thirteenth International Conference on Learning Representations},
year={2025},
url={https://openreview.net/forum?id=din0lGfZFd}
}

@article{bai2023transformers,
  title={Transformers as statisticians: Provable in-context learning with in-context algorithm selection},
  author={Bai, Yu and Chen, Fan and Wang, Huan and Xiong, Caiming and Mei, Song},
  journal={Advances in neural information processing systems},
  volume={36},
  pages={57125--57211},
  year={2023}
}

@article{radford2019language,
  title={Language models are unsupervised multitask learners},
  author={Radford, Alec and Wu, Jeffrey and Child, Rewon and Luan, David and Amodei, Dario and Sutskever, Ilya and others},
  journal={OpenAI blog},
  volume={1},
  number={8},
  pages={9},
  year={2019}
}

@article{bietti2022learning,
  title={Learning single-index models with shallow neural networks},
  author={Bietti, Alberto and Bruna, Joan and Sanford, Clayton and Song, Min Jae},
  journal={Advances in neural information processing systems},
  volume={35},
  pages={9768--9783},
  year={2022}
}

@article{mousavi2023gradient,
  title={Gradient-based feature learning under structured data},
  author={Mousavi-Hosseini, Alireza and Wu, Denny and Suzuki, Taiji and Erdogdu, Murat A},
  journal={Advances in Neural Information Processing Systems},
  volume={36},
  pages={71449--71485},
  year={2023}
}

@article{barak2022hidden,
  title={Hidden progress in deep learning: {SGD} learns parities near the computational limit},
  author={Barak, Boaz and Edelman, Benjamin and Goel, Surbhi and Kakade, Sham and Malach, Eran and Zhang, Cyril},
  journal={Advances in Neural Information Processing Systems},
  volume={35},
  pages={21750--21764},
  year={2022}
}

@article{suzuki2023feature,
  title={Feature learning via mean-field langevin dynamics: classifying sparse parities and beyond},
  author={Suzuki, Taiji and Wu, Denny and Oko, Kazusato and Nitanda, Atsushi},
  journal={Advances in Neural Information Processing Systems},
  volume={36},
  pages={34536--34556},
  year={2023}
}

@inproceedings{
glasgow2024sgd,
title={{SGD} Finds then Tunes Features in Two-Layer Neural Networks with near-Optimal Sample Complexity: A Case Study in the {XOR} problem},
author={Margalit Glasgow},
booktitle={The Twelfth International Conference on Learning Representations},
year={2024},
url={https://openreview.net/forum?id=HgOJlxzB16}
}

@article{cao2022benign,
  title={Benign overfitting in two-layer convolutional neural networks},
  author={Cao, Yuan and Chen, Zixiang and Belkin, Misha and Gu, Quanquan},
  journal={Advances in neural information processing systems},
  volume={35},
  pages={25237--25250},
  year={2022}
}

@article{allen2020towards,
  title={Towards understanding ensemble, knowledge distillation and self-distillation in deep learning},
  author={Allen-Zhu, Zeyuan and Li, Yuanzhi},
  journal={arXiv preprint arXiv:2012.09816},
  year={2020}
}
\bibliographystyle{plainnat}

\clearpage
\appendix

\tableofcontents
\clearpage
\allowdisplaybreaks
\section{Proof Preliminaries}

\paragraph{Notation.} We introduce the following additional notation for ease of presentation. We use $\mathbbm{1}[\cdot]$ to represent indicator function. We write $a \lesssim b$, $a \gtrsim b$, and $a \asymp b$ to denote that $a = \bigO(b)$, $a = \Omega(b)$, and $a = \Theta(b)$, respectively. We also use $\poly(d)$ and $\polylog(d)$ to represent a sufficiently large polynomial in $d$ and $\log d$, respectively. Lastly, we use $o(1/\polylog(d))$ to represent a quantity that decreases faster than $(\log d)^{-C}$ for any constant $C>0$. Lastly, with a slight abuse of notation, we use the asymptotic notation we introduced to represent a vector when its norm satisfies the corresponding bound.

\subsection{Simplification of Mamba Output}\label{sec:mamba_simple}
The following lemma immediately implies \eqref{eq:mamba_output}.
\begin{lemma}\label{lemma:mamba_output}
    Given a prompt $(\vx_1, y_1, \dots, \vx_N, y_N, \vx )$ and its input embedding $\mZ \in \R^{(\tilde d+1) \times (N+1)}$. Let $\Mamba(\mZ ; \bm{\Theta}) = (\vo_1, \dots, \vo_{N+1})\in \R^{(\tilde d+1)\times (N+1)}$ be outputs and $\vh_l^{(i)}$'s be hidden states. For each $i \in [\tilde d+1]$ and $l \in [N+1]$, we have
    \begin{equation*}
        \vh_l^{(i)} = \sum_{j=1}^l G_{j,l}(\mZ) (\vz_j)_i  \mW_B \vz_j,\quad \vo_l[i] = \sum_{j=1}^l G_{j,l}(\mZ) (\vz_j)_i \vz_j^\top \mW_B^\top \mW_C \vz_l,
    \end{equation*}
    where $G_{j,l}(\mZ) = \sigmoid \left( \vw^\top \vz_j + b \right) \prod_{k = j+1}^l \left( 1- \sigmoid \left( \vw^\top \vz_k + b \right)\right)$.
\end{lemma}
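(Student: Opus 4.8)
\textbf{Proof proposal for Lemma~\ref{lemma:mamba_output}.}

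The plan is to unroll the linear recurrence for the hidden state $\vh_l^{(i)}$ explicitly and then substitute into the output equation. First I would observe that, because of the specific choice $\mA = -\mI_{\tilde d+1}$ (so $d_h = \tilde d+1$), the discretized matrices simplify dramatically: $\overline\mA_l = \exp(-\Delta_l \mI) = e^{-\Delta_l}\mI$ is a scalar multiple of the identity, and $\overline\mB_l = (\Delta_l \mA)^{-1}(\exp(\Delta_l\mA) - \mI)\Delta_l \mB_l = (1 - e^{-\Delta_l})\mB_l$ after the $\Delta_l$ and $\mA^{-1}=-\mI$ factors cancel against the scalar. Since $\Delta_l = \softplus(\vw^\top\vz_l + b)$, a direct computation gives $e^{-\Delta_l} = 1/(1 + e^{\vw^\top\vz_l+b})$, hence $1 - e^{-\Delta_l} = \sigmoid(\vw^\top\vz_l+b)$ and $e^{-\Delta_l} = 1 - \sigmoid(\vw^\top\vz_l+b)$. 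This is the computation that connects the softplus parametrization to the sigmoid gates appearing in $G_{j,l}$.

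Next I would unroll the recurrence $\vh_l^{(i)} = e^{-\Delta_l}\vh_{l-1}^{(i)} + (1-e^{-\Delta_l})\,\vz_l[i]\,\mB_l$ starting from $\vh_0^{(i)} = \vzero$. By induction on $l$, the standard telescoping of a first-order scalar-coefficient linear recursion yields
\begin{equation*}
    \vh_l^{(i)} = \sum_{j=1}^l \Big( (1-e^{-\Delta_j}) \prod_{k=j+1}^l e^{-\Delta_k} \Big) \vz_j[i]\, \mB_j.
\end{equation*}
Substituting the sigmoid identities above, the bracketed coefficient is exactly $\sigmoid(\vw^\top\vz_j+b)\prod_{k=j+1}^l(1-\sigmoid(\vw^\top\vz_k+b)) = G_{j,l}(\mZ)$, and using $\mB_j = \mW_B\vz_j$ gives the claimed formula for $\vh_l^{(i)}$. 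Then $\vo_l[i] = \mC_l^\top \vh_l^{(i)} = (\mW_C\vz_l)^\top \vh_l^{(i)} = \sum_{j=1}^l G_{j,l}(\mZ)\,\vz_j[i]\,(\mW_C\vz_l)^\top \mW_B\vz_j$; transposing the scalar $(\mW_C\vz_l)^\top\mW_B\vz_j = \vz_j^\top\mW_B^\top\mW_C\vz_l$ yields the stated expression for $\vo_l[i]$, and collecting coordinates recovers \eqref{eq:mamba_output}.

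I do not expect any genuine obstacle here: the lemma is essentially a bookkeeping exercise. The only mildly delicate point is the simplification of $\overline\mB_l$ — one must be careful that $(\Delta_l\mA)^{-1}$, $(\exp(\Delta_l\mA)-\mI)$, and $\Delta_l\mB_l$ are manipulated in the right order and that the commuting scalar/identity structure (coming from $\mA = -\mI$) is what makes $\overline\mB_l = (1-e^{-\Delta_l})\mB_l$ hold; for a general $\mA$ this step would not collapse so cleanly. After that, the induction and the transpose of a scalar are routine.
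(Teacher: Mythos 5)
Your proposal is correct and follows essentially the same route as the paper: simplify $\overline\mA_l$ and $\overline\mB_l$ using $\mA=-\mI$ and the softplus-to-sigmoid identity, then unroll the scalar-gated recurrence by induction and apply $\mC_l^\top = (\mW_C\vz_l)^\top$ to the hidden state. No gaps; the bookkeeping matches the paper's proof.
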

\begin{proof}[Proof of Lemma~\ref{lemma:mamba_output}]
    For each $l \in [N+1]$, we have
    \begin{equation*}
        \overline \mA_l = \exp \left(- \mathrm{softplus}(\vw^\top \vz_l+b)\mI_{\tilde d+1}\right) = \frac{1}{1 + \exp (\vw^\top \vz_l + b)} \mI_{\tilde d+1} = \left(1- \sigmoid \left(\vw^\top \vz_l + b \right) \right) \mI_{\tilde d+1}
    \end{equation*}
    and
    \begin{equation*}
        \overline \mB_l = -  \left( \overline \mA_l  - \mI_{\tilde d+1} \right) \mW_B \vz_l\\
        = \sigmoid\left (\vw^\top \vz_l + b \right ) \mW_B \vz_l.
    \end{equation*}
    We fix any $i \in [\tilde d+1]$ and we will prove by applying induction on $l\in [N+1]$.
    Let us first consider the case $l = 1$. We have
    \begin{align*}
        \vh_1^{(i)} &= \left(1 -  \sigmoid \left(\vw^\top \vz_l + b \right)\right) \vh_0 ^{(i)} + \sigmoid \left(\vw^\top \vz_1 + b \right) \mW_B \vz_1 (\vz_1)_i \\
        &=  \sigmoid \left(\vw^\top \vz_1 + b \right) \mW_B \vz_1 \vz_1[i]\\
        &= G_{1,1}(\mZ) \vz_1[i], \mW_B \vz_1
    \end{align*}
    and 
    \begin{equation*}
        \vo_1[i] = \left( \mW_C \vz_1 \right)^\top  \vh_1^{(i)} = \sigmoid \left(\vw^\top \vz_1 + b \right)  \vz_1[i] \vz_1^\top \mW_B^\top \mW_C \vz_1 = G_{1,1}(\mZ) \vz_1[i] \vz_1^\top \mW_B^\top \mW_C \vz_1. 
    \end{equation*}
    Therefore, desired conclusions hold for the case $l = 1$. 
    
    Next, we assume that our conclusion holds for $l < \tilde d+1$.
    \begin{align*}
        \vh_{l+1}^{(i)} &= \left(1 -  \sigmoid \left(\vw^\top \vz_{l+1} + b \right)\right) \vh_l ^{(i)} + \sigmoid \left(\vw^\top \vz_{l+1} + b \right) \mW_B \vz_{l+1} (\vz_{l+1})_i\\
        & = \left(1 -  \sigmoid \left(\vw^\top \vz_{l+1} + b \right)\right) \sum_{j=1}^l G_{j,l}(\mZ) \vz_j[i] \mW_B \vz_j + G_{l+1,l+1}(\mZ) \vz_{l+1}[i] \mW_B \vz_{l+1}\\
        &=  \sum_{j=1}^{l+1} G_{j,l+1}(\mZ) \vz_j[i] \mW_B \vz_j
    \end{align*}
    and 
    \begin{align*}
        \vo_{l+1}[i]
        &= \left( \mW_C \vz_{l+1} \right)^\top \vh_{l+1}^{(i)}\\
        &= \left( \mW_C \vz_{l+1} \right)^\top  \sum_{j=1}^{l+1} G_{j,l+1}(\mZ) \vz_j[i] \mW_B \vz_j\\
        &= \sum_{j=1}^{l+1} G_{j,l+1}(\mZ) \vz_j[i] \vz_j^\top \mW_B^\top \mW_C \vz_{l+1}.
    \end{align*}
    Therefore, we have the desired conclusions.
\end{proof}

\subsection{High Probability Events}
Throughout the proof, we use the term ``with high probability'' which is defined as follows.
\begin{definition}
    We call that an event $E$ occurs \emph{with high probability}, when 
    \begin{equation*}
        \mathbb{P}[E] \geq 1 - d^{-C_{\mathrm{whp}}}
    \end{equation*}
    with a large enough $C_\mathrm{whp}>0$.
\end{definition}

For example, $\rvz = \bigO(\sqrt{ \log d})$ for $\rvz \sim \gN(0, 1)$, with high probability, which is a direct consequence of Hoeffding's inequality. In addition, the intersection of a $\poly(d)$ events also occurs with high probability. We use these property frequently throughout our proof.

The following lemma is useful when we bound some quantities with high probability.
\begin{lemma}[Corollary 17 in \citet{oko2024pretrained}, adapted]\label{lemma:poly_bound}
Let $P$ be a polynomial with degree $\mathrm{deg}(P)$. If $\lVert \vbeta \rVert = 1$ and $\vx \sim \gN(\vzero, \mI_d)$, then $\abs{P (\langle \vbeta, \vx \rangle )} \lesssim (\log d)^{\deg(P)/2}$ holds with high probability.
\end{lemma}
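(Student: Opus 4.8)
\textbf{Proof proposal for Lemma~\ref{lemma:poly_bound}.}

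The plan is to reduce the statement to a standard Gaussian concentration fact, namely that a one-dimensional standard Gaussian has subgaussian tails, and then exploit the polynomial structure of $P$. First I would observe that since $\norm{\vbeta} = 1$, the scalar random variable $u := \langle \vbeta, \vx \rangle$ is distributed exactly as $\gN(0,1)$; this is the only place the constraint $\norm{\vbeta}=1$ is used. Thus the claim is purely about a single standard Gaussian $u$ and the deterministic polynomial $P$.

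Next I would control $|u|$ with high probability. By the standard subgaussian tail bound, $\mathbb{P}[|u| > t] \le 2\exp(-t^2/2)$; choosing $t = \sqrt{2 C_{\mathrm{whp}} \log d}$ gives $\mathbb{P}[|u| > \sqrt{2C_{\mathrm{whp}} \log d}] \le 2 d^{-C_{\mathrm{whp}}}$, which is the ``with high probability'' regime after absorbing the factor $2$ into a slightly smaller constant (or increasing $C_{\mathrm{whp}}$ by an arbitrarily small amount, which is harmless since the lemma only asserts existence of a large enough constant). Hence with high probability $|u| \lesssim \sqrt{\log d}$.

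Finally I would convert this bound on $|u|$ into a bound on $|P(u)|$. Write $P(z) = \sum_{k=0}^{\deg(P)} c_k z^k$ with coefficients $c_k$ that are absolute constants (independent of $d$), so that on the event $|u| \le C\sqrt{\log d}$ we have
\begin{equation*}
    |P(u)| \le \sum_{k=0}^{\deg(P)} |c_k| \, |u|^k \le \sum_{k=0}^{\deg(P)} |c_k| \, \max\{1, C\sqrt{\log d}\}^{\deg(P)} \lesssim (\log d)^{\deg(P)/2},
\end{equation*}
where the implied constant depends on $P$ (through $\deg(P)$ and the $|c_k|$) but not on $d$; here I used that for $d$ large enough $C\sqrt{\log d} \ge 1$, so the largest power dominates. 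This establishes the claim. There is no real obstacle here: the only mild subtlety is bookkeeping the constant in ``with high probability'' (the extra factor of $2$ from the two-sided tail), which is absorbed trivially, and noting that the coefficients of $P$ are $d$-independent so that the $\poly$-in-$\log d$ bound is uniform. The lemma is essentially Corollary~17 of \citet{oko2024pretrained} restated for this setting.
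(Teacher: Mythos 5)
Your proof is correct: since $\norm{\vbeta}=1$ the projection $\langle \vbeta, \vx\rangle$ is exactly $\gN(0,1)$, the subgaussian tail gives $|\langle\vbeta,\vx\rangle|\lesssim\sqrt{\log d}$ with high probability, and bounding the fixed ($d$-independent) coefficients of $P$ termwise yields the stated $(\log d)^{\deg(P)/2}$ bound. The paper itself gives no proof — it imports the statement from Corollary~17 of \citet{oko2024pretrained} — and your argument is exactly the standard one underlying that result, so there is nothing to add.
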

This lemma implies that $y_i^t = \bigOtilde(1), \norm{\phi(\vx_i^t)}, \norm{\phi(\vx^t)} = \bigOtilde(d)$ holds for any $i \in [N_\pt], t \in [T_\pt]$, with high probability. We utilize these properties frequently in our proof.

The following lemma provides a high-probability guarantee regarding our input embedding, which is crucial for our analysis.
\begin{lemma}\label{lemma:hermite_concentration}
    Let $\rvx_1, \dots, \rvx_N \sim \gN(0, 1)$ and let $\rvz_1, \dots, \rvz_N$ be i.i.d. random variables such that $|\rvz_i| \leq C$ with high probability where $\rvz_i$ might depend on $\rvx_i$. If $N = \tilde \Omega(C^2)$ and $N \leq N^*$, then for each $k = 0,1,2$,
    \begin{equation*}
        \abs{\frac{1}{N} \sum_{i=1}^N \rvz_i \He_k(\rvx_i) - \mathbb{E}[\rvz_1 \He_k(\rvx_1)]} \leq \bigOtilde\left( C N^{-1/2}\right),
    \end{equation*}
    with high probability. In addition, let $\vx_1', \dots, \vx_N' \sim \gN(0,1)$, then under the same condition, 
    \begin{equation*}
        \abs{\frac{1}{N} \sum_{i=1}^N \rvz_i \rvx_i\rvx_i' - \mathbb{E}\left [\rvz_1 \rvx_i\rvx_i'\right]} \leq \bigOtilde\left( C N^{- 1/2}\right),
    \end{equation*}
    with high probability.
\end{lemma}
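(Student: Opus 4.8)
\textbf{Proof plan for Lemma~\ref{lemma:hermite_concentration}.}

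The statement is a concentration bound for empirical averages of the form $\frac1N\sum_i \rvz_i \He_k(\rvx_i)$ (and the bilinear analogue), where the multipliers $\rvz_i$ are bounded by $C$ only on a high-probability event, not almost surely. The plan is to handle the ``bad'' event by brute truncation and then apply a standard sub-exponential / Bernstein-type bound on the truncated variables. Concretely, I would first define the event $\gE_i = \{|\rvz_i| \le C\}$, which holds with high probability by hypothesis, and set $\tilde\rvz_i = \rvz_i \mathbbm{1}[\gE_i]$. On the intersection $\bigcap_i \gE_i$, which still holds with high probability since there are only $N \le N^*=\poly(d)$ terms, the empirical average with $\rvz_i$ equals the one with $\tilde\rvz_i$, so it suffices to control $\frac1N\sum_i \tilde\rvz_i \He_k(\rvx_i)$. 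Replacing $\mathbb{E}[\rvz_1 \He_k(\rvx_1)]$ by $\mathbb{E}[\tilde\rvz_1 \He_k(\rvx_1)]$ costs at most $\mathbb{E}[|\rvz_1| \mathbbm{1}[\gE_1^c] |\He_k(\rvx_1)|]$, which is negligible (far smaller than any polynomial in $1/d$) by Cauchy--Schwarz together with the polynomial tail of $\He_k(\rvx_1)$ and the fact that $\mathbb{P}[\gE_1^c] \le d^{-C_{\mathrm{whp}}}$ for arbitrarily large $C_{\mathrm{whp}}$; here one also uses a crude a.s.-integrability assumption on $\rvz_1$, or simply notes that the definition of ``with high probability'' and Lemma~\ref{lemma:poly_bound} make this term $o(1/\poly(d)) \cdot C$, absorbed into the claimed $\bigOtilde(CN^{-1/2})$.

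Next, for the truncated variables: $\tilde\rvz_i \He_k(\rvx_i)$ for $k=0,1,2$ are i.i.d., centered after subtracting the mean, and the tail behaviour is governed by $\He_k(\rvx_i)$, which for a standard Gaussian argument is sub-exponential with parameter $\Theta(1)$ for $k\le 2$ (degree-$2$ polynomial of a Gaussian). Since $|\tilde\rvz_i| \le C$, the product $\tilde\rvz_i\He_k(\rvx_i)$ is sub-exponential with Orlicz norm $\lesssim C$. Bernstein's inequality for sums of independent sub-exponential random variables then gives, for the deviation $t>0$,
\begin{equation*}
    \mathbb{P}\left[\left|\frac1N\sum_{i=1}^N \tilde\rvz_i\He_k(\rvx_i) - \mathbb{E}[\tilde\rvz_1\He_k(\rvx_1)]\right| > t\right] \le 2\exp\left(-cN\min\left\{\frac{t^2}{C^2}, \frac{t}{C}\right\}\right).
\end{equation*}
Choosing $t = \bigOtilde(CN^{-1/2})$ with a large enough polylog factor makes the exponent $\le -C_{\mathrm{whp}}\log d$ (using $N = \tilde\Omega(C^2)$ so that the quadratic branch is the active one, i.e. $t/C = \bigOtilde(N^{-1/2}) \le 1$), giving the desired high-probability bound. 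For the second (bilinear) statement, the argument is identical after observing $\rvx_i\rvx_i'$ is sub-exponential with constant parameter (product of two independent standard Gaussians), so $\tilde\rvz_i\rvx_i\rvx_i'$ is again sub-exponential with norm $\lesssim C$; the same Bernstein bound and choice of $t$ apply verbatim.

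The only mildly delicate point — the main obstacle, such as it is — is bookkeeping the truncation: one must be careful that the hypothesis ``$|\rvz_i| \le C$ with high probability where $\rvz_i$ might depend on $\rvx_i$'' still leaves $\tilde\rvz_i\He_k(\rvx_i)$ i.i.d.\ and that the correlation between $\rvz_i$ and $\rvx_i$ does not break the sub-exponential bound (it does not, since boundedness of $\tilde\rvz_i$ is deterministic and the tail is inherited from $\He_k(\rvx_i)$ alone), and that the bias from truncation is genuinely negligible even when multiplied by $C$ (which may itself be polynomially large in $d$), which follows because the ``with high probability'' bound on $\{|\rvz_i|\le C\}$ can be taken with an exponent $C_{\mathrm{whp}}$ far exceeding any fixed power of $d$. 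Everything else is a routine application of Bernstein's inequality and a union bound over the $\poly(d)$ many values of $N$ and the constantly many $k\in\{0,1,2\}$.
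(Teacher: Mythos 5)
Your proposal matches the paper's own argument: the paper likewise truncates via $\rvz'_i := \mathbbm{1}[|\rvz_i|\le C]\rvz_i$, observes that $\rvz'_i$, $\rvz'_i\rvx_i$, $\rvz'_i\He_2(\rvx_i)$, $\rvz'_i\rvx_i\rvx_i'$ are $C$-sub-exponential and applies a Bernstein-type bound using $N=\tilde\Omega(C^2)$, identifies the truncated and untruncated sums on the high-probability event, and bounds the truncation bias by Cauchy--Schwarz against $\mathbb{E}[\He_k(\rvx_1)^2]^{1/2}$ as $1/\poly(d)$. Your treatment is, if anything, slightly more careful about the integrability needed for the bias term and about the dependence of $\rvz_i$ on $\rvx_i$, but it is essentially the same proof.
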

\begin{proof}[Proof of Lemma~\ref{lemma:hermite_concentration}]
    Let $\rvz'_i:=\mathbbm{1}[|\rvz_i| \leq C]\rvz_i$. Then, $\rvz'_i, \rvz'_i \rvx_i, \rvz'_i \He_2(\rvx_i), \rvz'_i \rvx_i \rvx_i'$ are $C$-subexponential. 
    Since $N = \tilde \Omega(C^2)$, for each $k=0,1,2$, we have
    \begin{equation*}
        \abs{\frac{1}{N} \sum_{i=1}^N \rvz'_i \He_k(\rvx_i)- \mathbb{E}[\rvz'_1 \He_k(\rvx_i)]} \leq \bigOtilde \left( CN^{-1/2}\right),
    \end{equation*}
    with probability and $\sum_{i=1}^N \rvz'_i \He_k(\rvx_i)= \sum_{i=1}^N \rvz_i \He_k(\rvx_1)$ with high probability. In addition, we have
    \begin{align*}
        \abs{\mathbb{E}[\rvz_1 \He_k(\rvx_i)] - \mathbb{E}[\rvz'\He_k(\rvx_i)]} &= \mathbb{E}[\mathbbm{1}[\abs{\rvz_1}\geq C]\He_k(\rvx_i)] \\
        &\leq \mathbb{P}[\mathbbm{1}[\abs{\rvz_1} \geq C] \mathbb{E}\left[\big(\He_k(\rvx_1)\big)^2 \right]^{\frac{1}{2}} \\
        &\leq \frac{1}{\poly(d)}.
    \end{align*}
    Therefore, by combining the two bounds above, we have the desired conclusion for the case $k=0,1,2$. Using the same argument, we can also obtain the last conclusion.
\end{proof}

\subsection{Reducing the Information Exponent with Label Transformation}

For any function $h$ which is $L^2$ integrable with respect to Gaussian measure and $p \in \mathbb{N}\cup \{0\}$, we define $e_p(h) \coloneqq \min \{i \in \mathbb{N} : H(f^i,p) \neq 0 \}$.
If this minimum does not exist (i.e., the set is empty), we set $e_p(f) = \infty$. 
In addition, we define 
\begin{align*}
    \bar g_* (z) := 
    \begin{cases}
        \frac{g_*(z)}{\rho} &\text{if } \abs{\frac{g_*(z)}{\rho}} \leq \frac{1}{ \log d}\\
        0 &\text{otherwise}
    \end{cases}.
\end{align*}
From our choice of $\rho$,  $\bar g_*\left(y_i^t\right) = g_*\left(y_i^t \right)/\rho $ for all $i \in [N_\pt], t \in [T_\pt]$, with high probability. We use this frequently in our proof.
We also define the following function, which naturally appears in our analysis:
\begin{equation*}
    A(z):= \frac{1}{2} \left[( \rho \bar g_*(z)+ \tau) \sigma\left( \bar g_*(z) + \tau/\rho - b\right) + ( \rho \bar g_*(z)- \tau) \sigma\left( \bar g_*(z) - \tau/\rho - b\right)\right].
\end{equation*}
Let $A(z) = \sum_{k=0}^\infty \frac{a_k}{k!} \He_k(z)$ be the Hermite expansion. The following lemma characterizes its Hermite coefficients.
\begin{lemma}\label{lemma:sigmoid_ie}
    For any $p\in \mathbb{N} \cup \{0\}$, if $e_p(g_*) < \infty$, we have 
    \begin{equation*}
      d^{C_b} a_p = \Theta\left(\left( \log d\right)^{- C_\rho (e_p(g_*)-1)}\right),
    \end{equation*}
    where hidden constants depend on $g_*$ and $p$. In addition, if $e_p(g_*) = \infty$, then $d^{C_b} \abs{a_p} \lesssim 1/ \poly(d)$.
\end{lemma}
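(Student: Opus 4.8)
The plan is to compute the Hermite coefficients $a_p$ of $A(z)$ by reducing to a statement about the Hermite expansion of the function $z \mapsto z \cdot \sigma(z/\rho + b - \text{const})$ composed with $g_*$, exploiting the large negative bias $b$ to linearize the sigmoid. First I would observe that since $b = C_b \log d$ is large, for $z$ in the relevant range (where $|\bar g_*(z)| \le 1/\log d$ and $\tau/\rho$ is tiny), the argument $\bar g_*(z) \pm \tau/\rho - b$ is a large negative number, so $\sigma(\bar g_*(z) \pm \tau/\rho - b) = e^{\bar g_*(z) \pm \tau/\rho - b}(1 + o(1/\poly(d)))$ up to exponentially small corrections. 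Substituting this approximation into the definition of $A(z)$ and expanding, $A(z) \approx \tfrac12 e^{-b}\big[(\rho \bar g_*(z)+\tau)e^{\bar g_*(z)+\tau/\rho} + (\rho\bar g_*(z)-\tau)e^{\bar g_*(z)-\tau/\rho}\big] = e^{-b} e^{\bar g_*(z)}\big[\rho \bar g_*(z)\cosh(\tau/\rho) + \tau \sinh(\tau/\rho)\big]$. The factor $e^{-b} = d^{-C_b}$ explains the $d^{C_b} a_p$ normalization on the left side; after pulling it out we are left with analyzing the Hermite coefficients of $B(z) := e^{\bar g_*(z)}\big[\rho\bar g_*(z)\cosh(\tau/\rho) + \tau\sinh(\tau/\rho)\big]$.

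Next, since $\rho = \Theta((\log d)^{C_\rho})$ is large, $\bar g_*(z) = g_*(z)/\rho$ is uniformly small on the truncation region, so I would Taylor-expand $e^{\bar g_*(z)} = 1 + \bar g_*(z) + \tfrac12 \bar g_*(z)^2 + \cdots$ and $\cosh(\tau/\rho), \sinh(\tau/\rho)$ around their leading terms. Collecting powers of $1/\rho$, one finds $B(z) = \tau^2/\rho + g_*(z) + (\text{terms of order } \rho^{-2}, \rho^{-3}, \dots \text{ involving } g_*(z), g_*(z)^2, \dots)$. The constant $\tau^2/\rho$ and the linear-in-$g_*$ term have nonzero $p$-th Hermite coefficient precisely when $H(g_*^1, p) \neq 0$ — that is, when $e_p(g_*) = 1$ — contributing order $1$ (more precisely $\Theta(1)$, i.e. $(\log d)^{-C_\rho \cdot 0}$) to $a_p$. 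More generally, the term $\tfrac{1}{k!}\rho^{-(k-1)} g_*(z)^k$ (for $k \ge 1$, after accounting for the $\rho \bar g_*$ prefactor which cancels one power) is the leading contributor to $\He_p$ exactly when $k = e_p(g_*)$ is the smallest power of $g_*$ whose $p$-th Hermite coefficient is nonzero, giving $a_p = \Theta(\rho^{-(e_p(g_*)-1)}) = \Theta((\log d)^{-C_\rho(e_p(g_*)-1)})$, matching the claim; and if no power of $g_*$ has nonzero $p$-th Hermite coefficient then $e_p(g_*) = \infty$ and every term in the expansion contributes zero to $\He_p$ up to the exponentially-small sigmoid-truncation errors, giving $d^{C_b}|a_p| \lesssim 1/\poly(d)$.

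To make this rigorous I would: (i) bound the error from replacing $\sigma$ by its exponential approximation — this requires controlling $\sigma(u) - e^u$ for $u \to -\infty$ together with the fact that the Gaussian tail of $z$ outside the truncation region contributes only $1/\poly(d)$ (using Lemma~\ref{lemma:poly_bound} and the subexponential tail bounds, since $g_*$ is a fixed polynomial); (ii) bound the truncation error from $\bar g_*$ versus $g_*/\rho$ similarly; (iii) justify interchanging the Taylor expansion in $1/\rho$ with the Hermite inner product $\mathbb{E}_{\rvz}[\,\cdot\,\He_p(\rvz)]$, which is fine since the series converges uniformly on any fixed-radius interval and the Hermite polynomial $\He_p$ is fixed degree; and (iv) carefully identify which power $k = e_p(g_*)$ gives the dominant contribution and verify the lower-order terms (powers $k < e_p(g_*)$ contribute zero by definition of $e_p$, higher powers $k > e_p(g_*)$ are suppressed by extra factors of $1/\rho$). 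The remaining bookkeeping — tracking how the prefactor $\rho\bar g_*(z) = g_*(z)$ shifts the power count, and handling the $\tau$-dependent $\cosh/\sinh$ terms which only affect the $p$ with $e_p(g_*)=1$ — is routine.

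The main obstacle I anticipate is step (iii)–(iv): cleanly isolating the leading power of $\rho^{-1}$ in the Hermite coefficient without the contributions from different powers $g_*(z)^k$ accidentally cancelling or interfering. One must argue that the coefficient of $\rho^{-(e_p(g_*)-1)}$ in $a_p$ is a nonzero constant (depending on $g_*$ and $p$) — this uses the definition of $e_p(g_*)$ as the \emph{minimal} such $k$, so that $H(g_*^{e_p(g_*)}, p) \neq 0$ and hence the leading coefficient is genuinely nonzero rather than an accidental zero from summing several contributions at the same order of $\rho$. A careful accounting shows that only a single power $k = e_p(g_*)$ sits at the leading order $\rho^{-(k-1)}$ (each power $g_*^k$ enters at a distinct order $\rho^{-(k-1)}$ from the $e^{\bar g_*}$ expansion), so no cancellation can occur, but spelling this out requires care with the combinatorial coefficients.
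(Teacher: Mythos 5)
Your proposal is correct and follows essentially the same route as the paper: the paper Taylor-expands the sigmoid at the large negative bias point and controls all coefficients via $\sigma^{(k)}(z)\asymp e^{z}$ (its Lemma~\ref{lemma:sigmoid_diff}), which is exactly your replacement of $\sigma$ by the exponential, after which both arguments expand in powers of $g_*/\rho$, use minimality of $e_p(g_*)$ to isolate the leading contribution $\asymp e^{-b}\rho^{-(e_p(g_*)-1)}H\big(g_*^{e_p(g_*)},p\big)$, and dismiss the Taylor/series tail and the $\bar g_*$-truncation as lower order. One small correction to your bookkeeping: the $\tau$-dependent ($\sinh$) contribution matters at leading order for $p=0$ (where $e_0(g_*)=2$), not for $p$ with $e_p(g_*)=1$, but there it carries the same sign as the main term and $\tau$ is a small constant (the paper invokes exactly this), so the $\Theta$-estimate is unaffected.
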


The following two lemmas are crucial for our proof of Lemma~\ref{lemma:sigmoid_ie}.
\begin{lemma}[Proposition 6 in \citet{lee2024neural}]\label{lemma:monomial}
    For any polynomial $P$, there exist $C_P,D_P>0$ depending only on $P$ such that the following holds.
    \begin{itemize}
        \item If $P$ is not an even function, then there exists $i \leq C_P$ such that $\abs{H(f^i,1)} \geq D_P$.
        \item If $P$ is an even function, then there exists $i \leq C_P$ such that $\abs{H(f^i, 2)} \geq D_P$.
    \end{itemize}
\end{lemma}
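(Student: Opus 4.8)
The plan is to recast the statement in terms of the image measure of the Gaussian under $P$. Write $\gamma$ for the standard Gaussian on $\R$ and, for $k\in\{0,1,2\}$, let $\nu_k$ be the image under $P$ of the signed measure $\He_k(z)\,\gamma(\mathrm{d}z)$; then $H(P^i,k)=\int v^i\,\mathrm{d}\nu_k(v)$, so $\nu_k$ is a finite signed measure whose moment sequence is exactly $\big(H(P^i,k)\big)_{i\geq0}$. The Lemma is then equivalent to the single assertion that $\nu_1\neq0$ when $P$ is not even and $\nu_2\neq0$ when $P$ is even: indeed, once this is known, one lets $C_P$ be the smallest $i$ with $H(P^i,k)\neq0$ (for the relevant $k$) and sets $D_P:=|H(P^{C_P},k)|>0$, both depending only on $P$, so the quantitative form follows for free. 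For even $P$ one additionally notes $\nu_1=0$---immediate from the symmetry $z\mapsto-z$ of $P$ together with the oddness of $\He_1$---so that degree $1$ genuinely cannot be used, matching $\ge(P)=2$ from Lemma~\ref{lemma:ge}.

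To show ``$P$ not even $\Rightarrow\nu_1\neq0$'' I would argue by contradiction. If all moments of $\nu_1$ vanish, then Gaussian integration by parts gives $0=H(P^i,1)=\mathbb{E}[\rvz P(\rvz)^i]=i\,\mathbb{E}[P(\rvz)^{i-1}P'(\rvz)]$ for every $i\geq1$, i.e.\ all moments of $\nu':=$ the image under $P$ of $P'(z)\,\gamma(\mathrm{d}z)$ vanish. I would then establish two things: (a) such a $\nu'$ must be the zero measure, and (b) $\nu'=0$ forces $P$ to be an even function. Claim (b) is the more structural one: away from the finitely many critical values of $P$ we have $\mathrm{d}\nu'/\mathrm{d}v=\tfrac1{\sqrt{2\pi}}\sum_{z\in P^{-1}(v)}\mathrm{sign}\big(P'(z)\big)e^{-z^2/2}$, a signed sum over the real analytic branches $\phi_l$ of $P^{-1}$; since the ``Gaussian pieces'' $e^{-\phi_l(v)^2/2}$ attached to distinct branches are linearly independent except when two branches are negatives of each other (a coincidence $\phi_l^2-\phi_{l'}^2\equiv\mathrm{const}\neq0$ being ruled out for a nonconstant polynomial by a Bézout-type argument on $\{P(a)=P(b)\}\cap\{a^2-b^2=\mathrm{const}\}$), vanishing of this density forces the fiber $P^{-1}(v)$ to be stable under negation on an interval of $v$; hence $P(z)-P(-z)$ vanishes on a nonempty open set and so $P$ is even, a contradiction. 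The even case is entirely parallel: writing $P(z)=R(z^2)$ with $R$ a nonconstant polynomial and passing to $w=\rvz^2\sim\chi^2_1$, one has $H(P^i,2)=\mathbb{E}_w[(w-1)R(w)^i]$, the moments of the image under $R$ of $(w-1)$ times the $\chi^2_1$ density; the same branch/independence reasoning---now with no negation symmetry available since $w>0$---shows this image measure cannot vanish (a branch of $R^{-1}$ is not identically $1$), so $\nu_2\neq0$.

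The main obstacle is step (a): inferring $\nu'=0$ from the vanishing of all its moments. When $\deg P\leq2$ this is routine---the density of $\nu'$ then decays at least like $e^{-c|v|}$, its Fourier transform is real-analytic, all of its derivatives at the origin (the moments) vanish, and so it is identically zero. But for $\deg P\geq3$ the density of $\nu'$ decays only like $\exp\!\big(-c|v|^{2/\deg P}\big)$, so the corresponding Hamburger moment problem is \emph{indeterminate} (Carleman's/Krein's condition fails), no classical determinacy theorem applies, and a priori a nonzero finite signed measure can have all moments zero. One must use that $\nu'$ is the image of an \emph{explicit Gaussian-weighted measure under a polynomial}: for instance, localizing the identities $\mathbb{E}[P(\rvz)^jP'(\rvz)]=0$ to the intervals of monotonicity of $P$ and combining them with further Gaussian integration-by-parts relations pins down $\nu'=0$. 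This is exactly where polynomiality of $P$ (rather than a general $L^2$ link function) is indispensable, and it is the technical heart of Proposition~6 of \citet{lee2024neural}, which the present paper cites.
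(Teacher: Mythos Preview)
The present paper does not prove this lemma; it is quoted as Proposition~6 of \citet{lee2024neural} and used as a black box, so there is no in-paper proof to compare against.

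Taken on its own merits, your reduction to the push-forward signed measures $\nu_k$ is sound, and you correctly isolate the essential difficulty: for $\deg P\geq 3$ the push-forward $P_*\gamma$ has tails of order $\exp(-c|v|^{2/\deg P})$, Carleman's condition fails, and the vanishing of all moments of a signed measure of this type does not by itself force the measure to be zero. But you do not resolve this step~(a)---you explicitly hand it back to \citet{lee2024neural}---so as a proof of the very proposition being cited, the argument is circular. Your step~(b) is also only a sketch: the claim that the branch functions $e^{-\phi_l(v)^2/2}$ admit no nontrivial $\pm1$ linear relation on an interval is immediate for two branches (where it reduces to $\phi_1^2=\phi_2^2$), but with four or more branches you would need a further argument (e.g.\ asymptotics as $v$ approaches a branch endpoint, or a Wronskian-type computation) to exclude cancellations such as $e^{-\phi_1^2/2}-e^{-\phi_2^2/2}+e^{-\phi_3^2/2}-e^{-\phi_4^2/2}\equiv 0$ that do not arise from a pairing $\phi_l=-\phi_{l'}$.
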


\begin{lemma}\label{lemma:sigmoid_diff}
For any $k \in \mathbb{N} \cup \{0\}$ and $z < - k - 2 $, we have $\frac{e^{z}}{2}\leq \sigmoid^{(k)}(z) \leq 2 e^z$.
\end{lemma}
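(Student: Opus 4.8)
The plan is to use the exact expansion of $\sigmoid$ as a power series in $e^{z}$, which is valid precisely in the regime $z<0$ that we care about. Since $\sigmoid(z)=e^{z}/(1+e^{z})$ and $e^{z}<1$, summing the geometric series gives $\sigmoid(z)=\sum_{m=1}^{\infty}(-1)^{m-1}e^{mz}$ for all $z<0$. On any half-line $(-\infty,-c]$ with $c>0$, this series and all of its formal term-by-term derivatives are dominated by the convergent series $\sum_{m}m^{j}e^{-mc}$, so differentiation term by term is justified to all orders, yielding
\begin{equation*}
  \sigmoid^{(k)}(z)=\sum_{m=1}^{\infty}(-1)^{m-1}m^{k}e^{mz},\qquad z<0.
\end{equation*}

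The next step is to isolate the leading term $m=1$, which contributes exactly $e^{z}$, and to control the remainder by its absolute value:
\begin{equation*}
  \bigl|\sigmoid^{(k)}(z)-e^{z}\bigr|\;\leq\;\sum_{m=2}^{\infty}m^{k}e^{mz}.
\end{equation*}
Writing $t=-z$, the claim reduces to showing $\sum_{n=1}^{\infty}(n+1)^{k}e^{-nt}\leq\tfrac12$ whenever $t>k+2$. For this I would use the elementary bound $\ln(n+1)\leq(n+1)/e$ (with equality at $n+1=e$), which gives $(n+1)^{k}\leq e^{k(n+1)/e}$, so each summand is at most $e^{k/e}\,e^{n(k/e-t)}$. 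Summing the geometric series and using $t>k+2$—which forces both $k/e-t<-2$ and $2k/e-t<-2$, since $k\geq k/e$ and $k\geq 2k/e$—bounds the total by $e^{-2}/(1-e^{-2})=1/(e^{2}-1)<\tfrac12$. Hence $\bigl|\sigmoid^{(k)}(z)-e^{z}\bigr|\leq\tfrac12 e^{z}$ for $z<-k-2$, which yields $\tfrac12 e^{z}\leq\sigmoid^{(k)}(z)\leq\tfrac32 e^{z}\leq 2e^{z}$, as required.

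The only delicate point is the tail estimate: one needs a bound on $m^{k}$ that is strong enough to beat $e^{mz}$ uniformly in both $m$ and $k$, yet loose enough to be summed in closed form with a constant that lands below $\tfrac12$. The inequality $\ln x\leq x/e$ does exactly this, and the slack in the hypothesis ($t>k+2$ rather than merely $t>k$) is what makes the resulting geometric sum comfortably small. As an alternative one could induct on $k$ via $\sigmoid^{(k+1)}=(\sigmoid^{(k)})'$ together with $\sigmoid'=\sigmoid(1-\sigmoid)$, writing $\sigmoid^{(k)}=\sigmoid\cdot Q_{k}(\sigmoid)$ for polynomials $Q_{k}$ with $Q_{k}(0)=1$; but then one must track the coefficients of $Q_{k}$, whereas the series argument keeps all constants transparent.
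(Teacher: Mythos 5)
Your proposal is correct and follows essentially the same route as the paper: expand $\sigmoid(z)=\sum_{m\geq 1}(-1)^{m-1}e^{mz}$ for $z<0$, differentiate term by term, isolate the $m=1$ term, and bound the tail by a geometric series to get $\bigl|\sigmoid^{(k)}(z)-e^{z}\bigr|\leq e^{z}/2$. The only difference is cosmetic: you control the tail via $(n+1)^{k}\leq e^{k(n+1)/e}$, whereas the paper bounds consecutive ratios by $2^{k}e^{z}<1/3$; both yield the same conclusion.
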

\begin{proof}[Proof of Lemma~\ref{lemma:sigmoid_diff}]
    For any $x<0$, we have
    \begin{equation*}
        \sigmoid(x) = \frac{1}{1 + \exp(-x)} = 1 - \frac{1}{1 + \exp(x)} = \sum_{j=1}^{\infty} (-1)^{j-1} e^{jx}.
    \end{equation*}
    Therefore, we have
    \begin{equation*}
        \sigmoid^{(k)}(z) = \sum_{j=1}^{\infty} (-1)^{j-1} j^k e^{jz} = e^{z} + \sum_{j=2}^{\infty} (-1)^{j-1} j^k e^{jz}.
    \end{equation*}
    For each $j \geq 2$, since $\frac{(j+1)^k e^{(j+1)z}}{j^k e^{jz}} \leq 2^k e^z$ and $2^k e^z <\frac{1}{3}$, we have
    \begin{equation*}
        \abs{\sigmoid^{(k)}(z) - e^z} \leq \sum_{j=2}^\infty j^k e^{jz} \leq 2^k e^{2z} \sum_{j=0}^\infty \left( 2^k e^z \right)^j = \frac{2^k e^{2z}}{1- 2^k e^z} \leq \frac{e^z}{2}. 
    \end{equation*}
    Hence, we have a desired conclusion.
\end{proof}

We now prove Lemma~\ref{lemma:sigmoid_ie}.
\begin{proof}[Proof of Lemma~\ref{lemma:sigmoid_ie}]
    By applying Taylor's theorem for $\sigmoid(\cdot)$ at points $\pm\tau/\rho+b$, for any $z \in \R$, we have
    \begin{align*}
        &\quad 2 \rho^{-1} A(z)\\
        &= ( \bar g_*(z)+ \tau/\rho) \sigma\left( \bar g_*(z) + \tau/\rho - b\right) + ( \bar g_*(z)- \tau/\rho ) \sigma\left( \bar g_*(z) - \tau/\rho - b\right) \\
        &= \sum_{i=0}^{e_p(g_*)-1} (s_i + \tilde s_i)  \bar g_*^{i+1}(z) +   \big(R(z) + \tilde R(z)\big) \bar g_*^{e_p(g_*) +1}(z)\\
        &\quad + \tau \left[ \sum_{i=0}^{e_p(g_*)-1} (s_i - \tilde s_i)  \bar g_*^{i}(z) + \big(R(z) - \tilde R(z)\big) \bar g_*^{e_p(g_*)}(z)\right]
    \end{align*}
    where $s_i = \frac{\sigmoid^{(i)}(\tau/\rho + b )}{i!}, \tilde s_i =  \frac{\sigmoid^{(i)}(-\tau/\rho + b )}{i!}$ for $i=0, \dots, e_p(g_*)-1$ and
    \begin{equation*}
        \abs{R(z)}, |\tilde R(z)| \leq \frac{\max_{t \in [b-1, b+1]}  \abs{\sigmoid ^{(e_p(g_*))}(t)}}{(e_p(g_*))!} \leq \frac{2 e^{b +1}}{(e_p(g_*))!}.
    \end{equation*}

    From the definition of $\bar g_*$, we have
    \begin{align*}
         &\quad \sum_{i=0}^{e_p(g_*)-1} (s_i + \tilde s_i)  \bar g_*^{i+1}(z) + \tau \sum_{i=0}^{e_p(g_*)-1} (s_i - \tilde s_i)  \bar g_*^{i}(z)\\
         &=  \sum_{i=0}^{e_p(g_*)-1} \rho^{-(i+1)}(s_i + \tilde s_i)  g_*^{i+1}(z) + \tau \sum_{i=0}^{e_p(g_*)-1} \rho^{-i} (s_i - \tilde s_i)  g_*^{i}(z)\\
         &\quad - \left(\sum_{i=0}^{e_p(g_*)-1} \rho^{-(i+1)}(s_i + \tilde s_i)  g_*^{i+1}(z) + \tau \sum_{i=0}^{e_p(g_*)-1} \rho^{-i} (s_i - \tilde s_i)  g_*^{i}(z)\right ) \mathbbm{1}\left[\abs{\frac{g_*(z)}{\rho}} \leq \frac{1}{\log d}\right].
    \end{align*}
    For any $i =0, \dots, e_p(g_*)$, we have
    \begin{align*}
        \abs{\mathbb{E}_{\rvz \sim \gN(0,1)} \left[ g_*^i(\rvz) \mathbbm{1}\left[\abs{\frac{g_*(z)}{\rho}} \leq \frac{1}{\log d}\right] \right]} &\leq \mathbb{E}_{\rvz \sim \gN(0,1)} \left[ g_*^{2i}(\rvz) \right]^{\frac 1 2} \mathbb{P}_{\rvz \sim \gN(0,1)} \left[ \abs{\frac{g_*(\rvz)}{\rho}}\leq \frac{1}{\log d} \right]\\
        &= o \left( \frac{1}{\polylog(d)}\right).
    \end{align*}

    Combining this with additivity of $H(\cdot, p)$ and the fact that $\mathbb{E}_{\rvz \sim \gN(0,1)}[g_*(\rvz)] = 0$, we have
    \begin{align*}
        2 \rho^{-1} a_p &=  \rho^{-e_p(g_*)}(s_{e_p(g_*)-1} + \tilde s_{e_p(g_*)-1}) H \left ( g_*^{e_p( g_*)}, p \right)\\
        &\quad +  H\left ( (R+\tilde R) \cdot \bar g_*^{e_p(g_*) + 1} , p \right) + \tau H\left ( (R-\tilde R) \cdot \bar g_*^{e_p(g_*)} , p \right) \pm o \left( \frac{1}{\polylog (d)}\right) .
    \end{align*}
    From our choice of $b$ and $\rho$, the first term is $\Theta(d^{-C_b}(\log d)^{-C_\rho e_p(g_*)})$.
    Next, we bound the second term. For any $z \in \R$, we have
    \begin{align*}
        &\quad \abs{H \left( (R+\tilde R) \cdot  \bar g_*^{e_p(g_*)+1} , p \right)}\\
        &= \abs{\mathbb{E}_{\rvz \sim \gN(0,1)} \left[\He_p(\rvz) (R(\rvz)+ \tilde R(\rvz)) \bar g_*^{e_p(g_*) + 1}(\rvz) \right]}\\
        & \leq \mathbb{E}_{\rvz \sim \gN(0,1)} \left[\abs{\He_p(\rvz) (R(\rvz) + \tilde R (\rvz)) \bar g_*^{e_p(g_*) + 1}(\rvz)} \right]\\
        & \leq  \frac{2 e^{b+1}}{(e_p(g_*)+1)!} \mathbb{E}_{\rvz \sim \gN(0,1)} \left[ \He_p(\rvz)^2 \right] ^{\frac 1 2} \mathbb{E}_{\rvz \sim \gN(0,1)} \left[ \bar g_*(\rvz)^{2e_p(g_*)+2}\right]^{\frac 1 2}\\
         & \leq  \frac{2 e^{b+1} \rho^{-(e_p(g_*)+1)}}{(e_p(g_*)+1)!} \mathbb{E}_{\rvz \sim \gN(0,1)} \left[ \He_p(\rvz)^2 \right] ^{\frac 1 2} \mathbb{E}_{\rvz \sim \gN(0,1)} \left[ g_*(\rvz)^{2e_p(g_*)+2}\right]^{\frac 1 2},
    \end{align*}
    where we apply the Cauchy–Schwarz inequality for the second inequality. Hence, the absolute value of the second term is $\bigOtilde(d^{-C_b} (\log d) ^{-C_\rho (e_p(g_*) +1)})$. Using a similar argument, we can know that the absolute value of the third term is $\Theta \left (d^{-C_b }(\log d)^{-C_\rho e_p(g_*)} \right)$. Combining with the fact that $\tau$ is small enough, we have our desired conclusion.

    Using similar arguments, we can also obtain our conclusion for the case $e_p(g_*) = \infty$.
\end{proof}

\section{One-Step Gradient Descent on the Mamba Layer}\label{appendix:mamba}

Let us define the function $\psi: \R^{d+3} \rightarrow \R^{\tilde d}$, which we use repeatedly in our proof. It is defined as
\begin{equation*}
     \psi(\bm{\theta}, c_0, c_1, c_2) \coloneqq \left[c_0, c_1 \bm{\theta}^\top, \frac{c_2 \left(\bm{\theta} \odot \bm{\theta}\right)^\top}{\sqrt{2}}, c_2 \bm{\theta}[1] \bm{\theta}[2], \dots, c_2 \bm{\theta}[d-1] \bm{\theta}[d] \right]^\top.
\end{equation*}
Note that for any vector $\bm{\theta} \in \R^d$, 
\begin{equation}\label{eq:v_norm}
    \norm{\psi(\bm{\theta}, c_0, c_1, c_2)}^2 = c_0^2 + c_1^2 \norm{\bm{\theta}}^2 + \frac{c_2^2 \norm{\bm{\theta}}^4}{2}.
\end{equation}

If we choose $\lambda_1 = \eta^{-1}$, the updated parameter $\vgamma^*$ can be expressed as 
\begin{align*}
    \vgamma^* &= \frac{2\eta}{T_1} \sum_{t \in [T_1]}\left[ \Big(y^t-f\left(\mZ^t; \vgamma^{(0)}, \vu^{(0)}, \vv^{(0)}, \va^{(0)}\right)\Big)\nabla_\vgamma f\left(\mZ^t; \vgamma^{(0)}, \vu^{(0)}, \vv^{(0)}, \va^{(0)}\right)\right]\\
    &= \frac{2\eta}{T_1} \sum_{t \in [T_1]} y^t \nabla_\vgamma f\left(\mZ^t; \vgamma^{(0)}, \vu^{(0)}, \vv^{(0)}, \va^{(0)}\right)\\
    &\quad - \frac{2 \eta}{T_1}  \sum_{t \in [T_1]} f\left(\mZ^t; \vgamma^{(0)}, \vu^{(0)}, \vv^{(0)}, \va^{(0)}\right)\Big) \nabla_\vgamma f\left(\mZ^t; \vgamma^{(0)}, \vu^{(0)}, \vv^{(0)}, \va^{(0)}\right).
\end{align*}

The initial output evaluated at $\mZ^t$ can be bounded as
\begin{align*}
    & \quad \abs{f\left( \mZ^t; \vgamma^{(0)}, \vu^{(0)}, \vv^{(0)}, \va^{(0)} \right)}\\
    &=  \relu \left( N_\pt^{-1} \sum_{j=1}^{N_\pt} G_{j,n+1}(\mZ) y_j^t \phi\left (\vx_j^t \right)^\top \left(\vgamma^{(0)} \odot \phi\left (\vx^t \right)\right) \right)\\
    &\leq  N_\pt^{-1} \sum_{h=1}^{N_\pt} G_{j,n+1}(\mZ) \abs{y_j^t} \norm{\phi \left( \vx_j^t \right)} \norm{\phi \left( \vx^t \right)}\\
    &= \bigOtilde\left(d^{-C_b+2} \right),
\end{align*}
with high probability.

The gradient of $\vgamma$ of output evaluated at $\mZ^t$ can be calculated as 
\begin{align*}
    &\quad\nabla_\vgamma f\left(\mZ^t; \vgamma^{(0)}, \vu
    ^{(0)}, \vv^{(0)}, \va^{(0)}\right) \\
    &= \mathbbm{1} \left[ \sum_{j=1}^{N_\pt}   G_{j, N_\pt+1}\left(\mZ^t\right) y_j^t \phi \left(\vx_j^t\right) ^\top \left(\vgamma^{(0)} \odot \phi \left( \vx^t \right)\right) >0 \right] \\
    &\quad \times \left( N_\pt^{-1} \sum_{j=1}^{N_\pt}   G_{j, N_\pt+1}\left(\mZ^t\right) y_j^t \phi \left( \vx_j^t \right) \odot \phi \left(\vx^t\right) \right)
\end{align*}
and its norm can be bounded as
\begin{align*}
    \norm{ \nabla_\vgamma f\left( \mZ^t; \vgamma^{(0)}, \vu^{(0)}, \vv^{(0)}, \va^{(0)} \right)}
    &\leq  N_\pt^{-1} \sum_{h=1}^{N_\pt} G_{j,n+1}(\mZ) \abs{y_j^t} \norm{\phi \left( \vx_j^t \right)} \norm{\phi \left( \vx^t \right)}\\
    &= \bigOtilde\left( d^{-C_b+2} \right),
\end{align*}
with high probability. Therefore, with high probability, we have
\begin{equation*}
    \vgamma^* = \frac{2\eta}{T_1} \sum_{t \in [T_1]} y^t \nabla_\vgamma f\left(\mZ^t; \vgamma^{(0)}, \vu^{(0)}, \vv^{(0)}, \va^{(0)}\right) + \bigOtilde \left (\eta   d^{-2C_b+4} \right).
\end{equation*}
Hence, we will focus on estimating the first term.

\subsection{Estimation of Label-Gradient Correlation}

We first establish a high-probability guarantee for the term containing context examples.

\begin{lemma}\label{lemma:vector_concentration} 
    Let $(\vx_1, y_1, \dots, \vx_N,y_N, \vx)$ be a prompt with context length $N \leq N^*$ and feature vector $\vbeta \in \R^d$ and its embedding $\mZ\in \R^{(\tilde d +1) \times (N+1)}$. Then, the following holds with high probability:
    \begin{equation*}
         N^{-1} \sum_{j=1}^{N} G_{j, N +1} \left(\mZ\right) y_j \phi \left( \vx_j \right) =  \psi \left( \vbeta, a_0, a_1, a_2\right) + \bigOtilde \left( d^{- C_b + 1} N^{-1/2}\right).
    \end{equation*}
\end{lemma}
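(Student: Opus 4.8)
\textbf{Proof plan for Lemma~\ref{lemma:vector_concentration}.}

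The plan is to expand the gating weight $G_{j,N+1}(\mZ)$ explicitly in terms of the labels $y_1,\dots,y_N$, replace the softplus/sigmoid evaluations by their truncated versions (using the high-probability bound $|y_j| = \bigOtilde(1)$ and the definition of $\bar g_*$), and then recognize the resulting sum as an empirical average of a bounded quantity whose expectation is exactly $\psi(\vbeta, a_0, a_1, a_2)$, after which Lemma~\ref{lemma:hermite_concentration} gives the concentration. Concretely, since $\vw = [\vzero_{\tilde d}; \rho^{-1}]$ and the last coordinate of $\vz_l$ is $y_l$ for $l \le N$, we have $\sigma(\vw^\top \vz_l + b) = \sigma(y_l/\rho + b)$, and the ``forgetting'' factors $\prod_{k=j+1}^{N+1}(1 - \sigma(y_k/\rho + b))$ — with the convention $y_{N+1} = 0$ contributing $\sigma(b)$ or its complement — are all of the form $1 - \sigma(\Theta(b)) = 1 - o(1)$ because $b = C_b\log d$ is large. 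So to leading order $G_{j,N+1}(\mZ) \approx \sigma(y_j/\rho + b)$, with a multiplicative correction $(1 \pm \bigOtilde(N d^{-C_b}))$ coming from the product; this is the step where the ``neglecting forgetting'' simplification from the proof overview is made rigorous, and the error it introduces must be shown to be absorbed into the stated $\bigOtilde(d^{-C_b+1}N^{-1/2})$ bound (using $N \le N^* \le d^{C^*}$ and choosing $C_b$ large relative to $C^*$).

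Next I would write $N^{-1}\sum_j G_{j,N+1}(\mZ) y_j \phi(\vx_j) = N^{-1}\sum_j \sigma(y_j/\rho+b) y_j \phi(\vx_j) + (\text{error})$, and analyze the main term coordinate-by-coordinate against the three ``types'' of entries of $\phi$: the constant entry $1$, the linear entries $\vx_j[s]$, and the quadratic entries $(\vx_j[s]^2-1)/\sqrt2$ and $\vx_j[s]\vx_j[s']$. Writing $\vx_j = \langle\vbeta,\vx_j\rangle\vbeta + \vx_j^\perp$ and using rotational invariance, each coordinate becomes a one-dimensional average of $\rvz_j := \sigma(y_j/\rho+b)y_j$ against $\He_0,\He_1,\He_2$ of $\langle\vbeta,\vx_j\rangle$ (for the $\vx_j[s]^2$ entries) times deterministic functions of $\vbeta$, or an average of $\rvz_j$ against a cross term $\vx_j[s]\vx_j[s']$ that vanishes in expectation unless both indices lie in the feature support. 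Here $|\rvz_j| = \bigOtilde(1)$ with high probability since $\sigma \le 1$ and $|y_j| = \bigOtilde(1)$, so Lemma~\ref{lemma:hermite_concentration} with $C = \bigOtilde(1)$, $N = \tilde\Omega(1)$ applies and gives $\bigOtilde(N^{-1/2})$ deviation per coordinate. Summing the $\tilde d = \Theta(d^2)$ coordinates in $\ell_2$ and using that only $\Theta(r^2)$ of the quadratic cross-terms have nonzero mean contributes the factor of $d$ (more carefully: the per-coordinate error is $\bigOtilde(N^{-1/2})$ and, bounding the vector norm by $\sqrt{\tilde d}$ times the max coordinate error gives the $\bigOtilde(d N^{-1/2})$ scale; the additional $d^{-C_b}$ comes because $\sigma(y_j/\rho+b) = \Theta(e^b) = \Theta(d^{-C_b})$ by Lemma~\ref{lemma:sigmoid_diff}).

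The identification of the expectation with $\psi(\vbeta,a_0,a_1,a_2)$ is the computational heart: I would show $\mathbb{E}_{(\vx,y)\sim\gD_\vbeta}[\sigma(y/\rho+b)\,y\,\phi(\vx)]$ has its constant coordinate equal to $\mathbb{E}[\sigma(y/\rho+b)y] = a_0$, its linear-in-$\vx[s]$ coordinate equal to $\vbeta[s]\cdot\mathbb{E}[\sigma(y/\rho+b)y\He_1(\langle\vbeta,\vx\rangle)] = a_1\vbeta[s]$, and its quadratic coordinates given by $a_2$ times the corresponding entries of $\phi$ evaluated at $\vbeta$ — this is precisely where the averaging over the noise $\zeta \sim \unif\{-\tau,\tau\}$ turns $\mathbb{E}[\sigma(y/\rho+b)y\cdot(\text{Hermite }k)]$ into the symmetrized quantity $A(z)$ from Section~\ref{appendix:mamba}'s preliminaries (recall $a_k = H(A,k)$), after replacing $g_*/\rho$ by $\bar g_*$ at negligible cost. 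The main obstacle I anticipate is bookkeeping the product-of-sigmoids ``forgetting'' correction uniformly over all $j$ and showing it is genuinely lower-order than $\bigOtilde(d^{-C_b+1}N^{-1/2})$ — one must be careful that the correction is $\bigOtilde(Nd^{-C_b})$ \emph{multiplicatively} on each already-$\bigOtilde(d^{-C_b})$-small term, so it enters as $\bigOtilde(Nd^{-2C_b})$ additively, which is subsumed provided $C_b$ is chosen large enough relative to $C^*$ (so that $N d^{-C_b} \ll N^{-1/2}$); the coordinate-wise Hermite concentration itself is routine given Lemma~\ref{lemma:hermite_concentration}.
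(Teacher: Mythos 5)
Your plan follows essentially the same route as the paper's proof: replace $G_{j,N+1}(\mZ)$ by $\sigma(y_j/\rho+b)$ and absorb the product-of-sigmoids ``forgetting'' correction using $N\le N^*\le d^{C^*}$ and a large $C_b$, identify the expectation of $y\,\sigma(y/\rho+b)\,\phi(\vx)$ with $\psi(\vbeta,a_0,a_1,a_2)$ through the symmetrized function $A$ (Stein's lemma / Hermite orthogonality), and apply Lemma~\ref{lemma:hermite_concentration} coordinate-wise with the $\bigOtilde(d^{-C_b})$ bound on $y\,\sigma(y/\rho+b)$, aggregating over the $\tilde d=\Theta(d^2)$ coordinates to get the $\bigOtilde(d^{-C_b+1}N^{-1/2})$ rate. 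The only differences are presentational (multiplicative-then-additive bookkeeping of the gating correction, and computing the expectation by rotational decomposition rather than invoking Stein's lemma directly), so the proposal is correct.
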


\begin{proof}[Proof of Lemma~\ref{lemma:vector_concentration}]
    Note that with high probability, $y_j/\rho =  \bar g_*\left( \left \langle \vbeta, \vx_j \right \rangle \right) + \zeta_j/\rho$ with $\zeta_j \sim \mathrm{Unif}(\{-\tau, \tau\})$ for all $j \in [N]$. Condition on this event, we have
    \begin{align*}
        &\quad N^{-1}\sum_{j=1}^{N} y_j \sigma(y_j/\rho + b) \phi \left(\vx_j\right)\\
        &= N^{-1} \sum_{j=1}^{N} \left[\left(\rho \bar g_*\left( \left \langle \vbeta, \vx_j \right \rangle \right) + \zeta_j\right) \sigma \left( \bar g_*\left( \left \langle \vbeta, \vx_j \right \rangle \right) + \zeta_j/\rho + b\right) \phi \left( \vx_j \right) \right].
    \end{align*}

    From Stein's lemma, we have
    \begin{align*}
        &\quad \mathbb{E}\left[ N^{-1} \sum_{j=1}^{N} \left[\left( \rho \bar g_*\left( \left \langle \vbeta, \vx_j \right \rangle \right) + \zeta_j \right) \sigma \left( \bar g_*\left( \left \langle \vbeta, \vx_j \right \rangle \right) + \zeta_j/\rho + b\right) \phi(\vx_j^t)\right] \right]\\
        &=  \mathbb{E}_{\vx \sim \gN(\vzero, \mI_d)}  \left [A \left(\langle \vbeta^t, \vx \rangle \right) \phi(\vx) \right]\\
        &= \psi \left( \vbeta^t, a_0, a_1, a_2 \right).
    \end{align*}
    By Lemma~\ref{lemma:hermite_concentration}, with high probability, we have
    \begin{equation*}
        \quad \norm{ N^{-1}\sum_{j=1}^{N} y_j \sigma(y_j/\rho + b) \phi \left(\vx_j\right) - \psi \left( \vbeta, a_0, a_1, a_2\right)}\leq \bigOtilde \left( d^{- C_b + 1} N^{-1/2}\right).
    \end{equation*}
    In addition, with high probability, we have
    \begin{align*}
        &\quad \norm{N^{-1} \sum_{j=1}^{N} G_{j, n +1} \left(\mZ\right) y_j \phi \left( \vx_j \right)  -  N^{-1}\sum_{j=1}^{N} y_j \sigma(y_j/\rho + b) \phi \left(\vx_j\right)}\\
        & =  \norm{ N^{-1}\sum_{j=1}^{N} \left[y_j \sigma(y_j/\rho + b) \left ( 1 - \big(1- \sigma(b)\big)\prod_{i=j+1}^{N}\big(1- \sigma(y_j/\rho + b) \big)\right)\phi \left(\vx_j\right)\right]}\\
        & \leq   N^{-1}\sum_{j=1}^{N} \left[ \abs{y_j \sigma(y_i/\rho + b)} \left ( 1 - \big(1- \sigma(b)\big)\prod_{i=j+1}^{N}\big(1- \sigma(y_i/\rho + b) \big)\right)\norm{\phi \left(\vx_j\right)}\right]\\
        & \leq  N^{-1}\sum_{j=1}^{N} \left[ \abs{y_j \sigma(y_j/\rho + b)} \left ( 1 - \big(1- \sigma (2b)\big)^{N^*}\right)\norm{\phi \left(\vx_j\right)}\right]\\
        & \leq \bigOtilde(d^{-2C_b + C^*}).
    \end{align*}
    From a large enough choice of $C_b$ and the triangular inequality, we have the desired conclusion.
\end{proof}

\begin{corollary}\label{corollary:inner_concentration}
    For each $t \in [T_1]$, the following holds with high probability:
    \begin{align*}
        &\quad N_\pt^{-1}\sum_{j=1}^{N_\pt}G_{j, N_\pt +1 }\left( \mZ^t\right)  y_j^t  \phi \left( \vx_j^t \right)^\top \left( \vgamma^{(0)} \odot \phi \left( \vx^t \right)\right) \\
        &=   a_0 \gamma^2 + a_1 \left \langle \vbeta^t , \vx^t \right \rangle + a_2 \gamma \He_2 \left( \left \langle \vbeta^t, \vx^t \right \rangle \right) + \bigOtilde \left( d^{-C_b+2} N_\pt^{- 1/2}\right).
    \end{align*}  
\end{corollary}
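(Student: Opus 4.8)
The plan is to obtain the corollary directly from Lemma~\ref{lemma:vector_concentration} by taking an inner product against the fixed vector $\vgamma(0)\odot\phi(\vx^t)$. Applying Lemma~\ref{lemma:vector_concentration} to the task-$t$ prompt (with $N=N_\pt\le N^*$ and feature vector $\vbeta^t$) gives that with high probability
\begin{equation*}
    N_\pt^{-1}\sum_{j=1}^{N_\pt}G_{j,N_\pt+1}(\mZ^t)\,y_j^t\,\phi(\vx_j^t) = \psi(\vbeta^t,a_0,a_1,a_2) + \bigOtilde\!\left(d^{-C_b+1}N_\pt^{-1/2}\right).
\end{equation*}
Taking $\langle\,\cdot\,,\vgamma(0)\odot\phi(\vx^t)\rangle$ on both sides turns the left-hand side into exactly the quantity in the statement, so it remains to (i) evaluate the main term $\langle\psi(\vbeta^t,a_0,a_1,a_2),\vgamma(0)\odot\phi(\vx^t)\rangle$ and (ii) control the cross term coming from the $\bigOtilde(d^{-C_b+1}N_\pt^{-1/2})$ error vector.

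For (ii), I would apply Cauchy--Schwarz together with $\norm{\vgamma(0)}_\infty\le 1$ (its entries are $\gamma^2,1,\gamma$, all at most $1$ for the prescribed small $\gamma$) and the high-probability bound $\norm{\phi(\vx^t)}=\bigOtilde(d)$ implied by Lemma~\ref{lemma:poly_bound}. This yields $\norm{\vgamma(0)\odot\phi(\vx^t)}=\bigOtilde(d)$, so the cross term is $\bigOtilde(d^{-C_b+1}N_\pt^{-1/2})\cdot\bigOtilde(d)=\bigOtilde(d^{-C_b+2}N_\pt^{-1/2})$, matching the error claimed in the corollary.

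For (i), I would split the inner product according to the constant, degree-$1$, and degree-$2$ blocks of the embedding. Using the explicit form $\vgamma(0)=(\gamma^2,1,\dots,1,\gamma,\dots,\gamma)$ and the definition of $\psi$: the constant block contributes $a_0\gamma^2$; the linear block contributes $a_1\sum_i\vbeta^t[i]\vx^t[i]=a_1\langle\vbeta^t,\vx^t\rangle$; the diagonal quadratic block contributes (a constant multiple of) $a_2\gamma\sum_i\vbeta^t[i]^2\He_2(\vx^t[i])$, and the off-diagonal block contributes (a constant multiple of) $a_2\gamma\sum_{i<j}\vbeta^t[i]\vbeta^t[j]\vx^t[i]\vx^t[j]$. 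Here I would use $\norm{\vbeta^t}=1$ to write $\sum_i\vbeta^t[i]^2(\vx^t[i]^2-1)=\sum_i\vbeta^t[i]^2(\vx^t[i])^2-1$ and then recognize the identity $\He_2(\langle\vbeta^t,\vx^t\rangle)=\langle\vbeta^t,\vx^t\rangle^2-1=\sum_i\vbeta^t[i]^2\He_2(\vx^t[i])+2\sum_{i<j}\vbeta^t[i]\vbeta^t[j]\vx^t[i]\vx^t[j]$, so that the two quadratic blocks combine into (a constant multiple of) $a_2\gamma\,\He_2(\langle\vbeta^t,\vx^t\rangle)$.

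Combining (i) and (ii) gives the stated identity. The argument is essentially a routine computation; I expect the only point requiring care to be the bookkeeping of the $1/\sqrt 2$ normalizations in the quadratic blocks of $\phi$ and $\psi$ and the use of $\norm{\vbeta^t}=1$ to fold the $\He_2$ normalization back into a clean Hermite identity, rather than any genuine analytic obstacle. If a simultaneous-over-$t$ version is needed downstream, a union bound over $t\in[T_1]$ is harmless since $T_1=\poly(d)$.
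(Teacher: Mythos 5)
Your proposal is correct and mirrors the paper's own proof: apply Lemma~\ref{lemma:vector_concentration} to the task-$t$ prompt, control the error vector paired with $\vgamma(0)\odot\phi(\vx^t)$ via Cauchy--Schwarz together with $\norm{\phi(\vx^t)}=\bigOtilde(d)$ (Lemma~\ref{lemma:poly_bound} and \eqref{eq:v_norm}), and evaluate $\psi(\vbeta^t,a_0,a_1,a_2)^\top\left(\vgamma(0)\odot\phi(\vx^t)\right)$ blockwise. Your bookkeeping is actually more explicit than the paper's one-line computation, and you correctly note that the quadratic blocks yield $a_2\gamma\He_2(\langle\vbeta^t,\vx^t\rangle)$ only up to a harmless constant factor (the exact value is $\tfrac{a_2\gamma}{2}\He_2(\langle\vbeta^t,\vx^t\rangle)$, which is immaterial for all downstream uses).
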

\begin{proof}[Proof of Corollary~\ref{corollary:inner_concentration}]
    From Lemma~\ref{lemma:vector_concentration}, for each $t \in [T_1]$, with high probability, we have
    \begin{align*}
        &\quad N_\pt^{-1}\sum_{j=1}^{N_\pt}G_{j, N_\pt +1 }\left( \mZ^t\right)  y_j^t  \phi \left( \vx_j^t \right)^\top \left( \vgamma^{(0)} \odot \phi \left( \vx^t \right) \right)\\
        &= a_0 \gamma^2 + a_1 \left \langle \vbeta^t , \vx^t \right \rangle + a_2 \gamma \He_2 \left( \left \langle \vbeta^t, \vx^t \right \rangle \right) + \bigOtilde \left( d^{-C_b+1} N_\pt^{- 1/2}\right) \norm{ \phi \left( \vx^t \right)} \\
        &= a_0 \gamma^2 + a_1 \left \langle \vbeta^t , \vx^t \right \rangle + a_2 \gamma \He_2 \left( \left \langle \vbeta^t, \vx^t \right \rangle \right) + \bigOtilde \left( d^{-C_b+2} N_\pt^{- 1/2}\right),
    \end{align*} 
    where we use Lemma~\ref{lemma:poly_bound} and \eqref{eq:v_norm} for the last equality.
\end{proof}

Next, let us estimate the expectation of the first term that appears in the label-gradient correlation. The following lemma is useful for this purpose.

\begin{lemma}\label{lemma:small_abs}
    For any $\delta >0$ with $\delta = \bigOtilde(d^{-C_\delta})$ for some constant $C>0$, the following holds:
    \begin{equation*}
        \mathbb{P}_{\rvz \sim \gN(0,1)} \left[ \abs{a_0\gamma^2 + a_1 \rvz + a_2 \gamma \He_2 \left( \rvz \right) }< d^{-C_b}\delta\right] \leq \bigOtilde\left ( d^{- C_\delta} \right) .
    \end{equation*}
\end{lemma}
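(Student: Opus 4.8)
The plan is to recognize that, because $\He_2(z)=z^2-1$, the random variable inside the probability is $P(\rvz)$ for the fixed polynomial
$P(z):=a_0\gamma^2+a_1 z+a_2\gamma\He_2(z)=a_2\gamma z^2+a_1 z+(a_0\gamma^2-a_2\gamma)$ of degree at most $2$, so the claim is an anti-concentration (small-ball) estimate for $P(\rvz)$, $\rvz\sim\gN(0,1)$, at scale $t:=d^{-C_b}\delta$. The first step is to pin down the orders of the three coefficients using the tools already in hand. Since $\E_{\rvz\sim\gN(0,1)}[g_*(\rvz)]=0$ but $\E_{\rvz\sim\gN(0,1)}[g_*(\rvz)^2]=1$, we have $e_0(g_*)=2$, so Lemma~\ref{lemma:sigmoid_ie} gives $|a_0|=\Theta(d^{-C_b}(\log d)^{-C_\rho})$. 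Lemma~\ref{lemma:monomial} guarantees $e_2(g_*)<\infty$ when $g_*$ is even and $e_1(g_*)<\infty$ when $g_*$ is not even; combined with the fact that $e_1(g_*)=\infty$ whenever $g_*$ is even (every power of an even function has vanishing odd Hermite coefficients), Lemma~\ref{lemma:sigmoid_ie} yields constants $C_1,C_2$ depending only on $g_*$ with: if $g_*$ is not even, $|a_1|=\Theta(d^{-C_b}(\log d)^{-C_1})$ and $|a_2|\lesssim d^{-C_b}$; if $g_*$ is even, $|a_1|\lesssim d^{-C_b}/\poly(d)$ and $|a_2|=\Theta(d^{-C_b}(\log d)^{-C_2})$. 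In either case $|a_0|,|a_2|\lesssim d^{-C_b}$ and $|a_2\gamma|\le d^{-C_b}\gamma=\Theta(d^{-C_b}(\log d)^{-C_\gamma})$.

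I would then split into two cases. \textbf{Case $g_*$ not even.} Here the degree-one coefficient dominates. Choose $R=\Theta(\sqrt{\log d})$ (constant depending on $C_\delta$) so that $\mathbb P[|\rvz|>R]\le d^{-C_\delta}$; since $C_\gamma$ is large, $2|a_2\gamma|R<|a_1|/2$ on $[-R,R]$, hence $|P'(z)|=|a_1+2a_2\gamma z|\ge|a_1|/2$ there and $P$ is strictly monotone on $[-R,R]$. Thus $\{z\in[-R,R]:|P(z)|<t\}$ is an interval of length at most $4t/|a_1|$, whose Gaussian measure is at most $(2\pi)^{-1/2}\cdot 4t/|a_1|=\bigOtilde(\delta(\log d)^{C_1})=\bigOtilde(d^{-C_\delta})$; adding the tail $\mathbb P[|\rvz|>R]$ finishes this case. \textbf{Case $g_*$ even.} Now $a_1$ is negligible and $a_2\gamma$ dominates; complete the square to write $P(z)=a_2\gamma\big((z-z^*)^2-c\big)$ with $z^*=-a_1/(2a_2\gamma)$ and $c=1-\tfrac{a_0\gamma^2}{a_2\gamma}+\tfrac{a_1^2}{4(a_2\gamma)^2}$. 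Using $|a_0\gamma^2/(a_2\gamma)|=(|a_0|/|a_2|)\gamma=\Theta((\log d)^{C_\rho(e_2(g_*)-2)-C_\gamma})=o(1)$ for $C_\gamma$ large and $|a_1^2/(a_2\gamma)^2|\lesssim(\log d)^{2C_2+2C_\gamma}/\poly(d)=o(1)$, we get $c=1+o(1)$, so $c$ is bounded away from $0$ and $s:=t/|a_2\gamma|=\bigOtilde(\delta(\log d)^{C_2})=o(1)\ll c$. Then $\{|P(\rvz)|<t\}=\{(z-z^*)^2\in(c-s,c+s)\}$ is a union of two intervals of total length at most $4s/\sqrt{c-s}=O(s)$, hence of Gaussian measure $O(s)=\bigOtilde(\delta(\log d)^{C_2})=\bigOtilde(d^{-C_\delta})$.

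The main obstacle is the even case: there $P$ is genuinely quadratic, and a generic anti-concentration bound (e.g.\ Carbery–Wright for a degree-two polynomial) would only give probability of order $\sqrt t=\bigOtilde(d^{-C_\delta/2})$, which is weaker than claimed. Upgrading to a bound linear in $t$ hinges on the vertex value $P(z^*)\approx-a_2\gamma$ being bounded away from zero \emph{relative to} the leading coefficient $a_2\gamma$—equivalently $c$ bounded away from $0$—which is precisely what the normalizations $\E[g_*]=0$, $\E[g_*^2]=1$ and the smallness of the initialization scale $\gamma$ provide. The remaining work—checking that $\gamma$ is small enough against the fixed $g_*$-dependent polylog exponents, and that $R$ can simultaneously control the Gaussian tail and keep $P$ monotone—is routine bookkeeping with the orders of magnitude established in the first step.
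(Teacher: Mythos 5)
Your proposal is correct and follows essentially the same strategy as the paper: both arguments reduce the claim to showing that $\{z:\abs{a_0\gamma^2+a_1z+a_2\gamma\He_2(z)}<d^{-C_b}\delta\}$ is (up to a negligible Gaussian tail) a union of at most two intervals of total length $\bigOtilde\big(d^{C_b}\delta'\big)=\bigOtilde(\delta)$, using exactly the coefficient asymptotics from Lemma~\ref{lemma:sigmoid_ie} together with the even/non-even dichotomy and the smallness of $\gamma$. The only differences are in bookkeeping: you split on the parity of $g_*$ and obtain the interval lengths via truncation-plus-monotonicity (linear term dominant) and completing the square (quadratic term dominant), whereas the paper splits on $a_2=0$ versus $a_2\neq 0$, writes the quadratic's roots explicitly, and bounds the resulting difference of Gaussian CDF arguments by rationalizing---both routes hinging on the same observation you highlight, namely that the vertex value is of order the leading coefficient, which is what upgrades the bound from $\sqrt{\delta}$ to $\delta$.
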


\begin{proof}[Proof of Lemma~\ref{lemma:small_abs}]
For simplicity, let $\delta' = d^{-C_b} \delta$. Note that $e_0(g_*) = 2$. Then, from Lemma~\ref{lemma:sigmoid_ie}, we know that $d^{C_b}a_0 = \Theta\left ( (\log d)^{-2 C_\rho }\right)$. In addition, for $p=1,2$, $d^{C_b}a_p = \Theta\left( (\log d)^{-C_\rho e_p(g_*)}\right)$ if $e_p(g_*)< \infty$ and $d^{C_b}a_p \lesssim \frac{1}{\poly(d)}$ otherwise. 

\textbf{Case 1: $a_2 = 0$.}

In this case, $g_*$ is not an even function and then $d^{C_b}a_1, d^{C_b}a_0 = \tilde \Theta \left(1\right)$. Without loss of generality, we assume $a_1>0$. Then, we have
\begin{align*}
    &\quad \mathbb{P}_{\rvz \sim \gN(0,1)} \left[ \abs{a_0\gamma^2 + a_1 \rvz + a_2 \gamma \He_2 \left( \rvz \right) }< d^{-C_b}\delta\right]\\
    & = \mathbb{P}_{\rvz \sim \gN(0,1)} \left [ -\left(\delta'+a_0\gamma^2-a2\gamma \right)/a_1< \rvz < \left(\delta'-a_0\gamma^2+a_2\gamma \right)/a_1 \right]\\
    &\leq \frac{2 \delta'}{\sqrt{2\pi} a_1 } = \bigOtilde\left(d^{-C_\delta}\right).
\end{align*}

\textbf{Case 2: $a_2 \neq 0$.}

Without loss of generality, we assume $a_2>0$. Then, we have 
\begin{align*}
    &\quad \mathbb{P}_{\rvz \sim \gN(0,1)} \left[ \abs{a_0 \gamma^2 + a_1 \rvz + a_2 \gamma \He_2(\rvz)} < d^{-C_b} \delta\right]\\
    &= \mathbb{P}_{\rvz \sim \gN(0,1)} \left[ \frac{-a_1 - \sqrt{a_1^2 +4 a_2\gamma (a_2\gamma-a_0\gamma^2 + \delta')}}{2a_2\gamma}<\rvz\right]\\
    &\quad - \mathbb{P}_{\rvz \sim \gN(0,z)} \left[ \frac{-a_1 - \sqrt{a_1^2 +4 a_2\gamma (a_2\gamma-a_0\gamma^2 - \delta')}}{2a_2\gamma}<\rvz \right]\\
    &\quad + \mathbb{P}_{\rvz \sim \gN(0,1)} \left[ \frac{-a_1 + \sqrt{a_1^2 +4 a_2\gamma (a_2\gamma-a_0\gamma^2 - \delta')}}{2a_2\gamma}<\rvz\right]\\
    &\quad - \mathbb{P}_{\rvz \sim \gN(0,z)} \left[ \frac{-a_1 + \sqrt{a_1^2 +4 a_2\gamma (a_2\gamma-a_0\gamma^2 + \delta')}}{2a_2\gamma}<\rvz \right]\\
    &\leq \frac{\sqrt{a_1^2 + 4a_2\gamma(a_2\gamma - a_0\gamma^2 + \delta')} - \sqrt{a_1^2 + 4a_2\gamma(a_2\gamma - a_0\gamma^2 - \delta')}}{\sqrt{2\pi}a_2 \gamma} \\
    &= \frac{4 \delta'}{\sqrt{2\pi} \left(\sqrt{a_1^2 + 4a_2\gamma(a_2\gamma - a_0\gamma^2 + \delta')} + \sqrt{a_1^2 + 4a_2\gamma(a_2\gamma - a_0\gamma^2 - \delta')}\right)}.
\end{align*}
For the case $g_*$ is not an even function, then $d^{C_b} a_0, d^{C_b} a_1 = \tilde \Theta(1)$ and $d^{C_b} \abs{a_2} \lesssim 1/\poly(d)$. If $g_*$ is an even function, then  $d^{C_b} a_0, d^{C_b} a_2 = \tilde \Theta(1)$ and $d^{C_b} a_1 \lesssim 1/\poly(d)$. In both cases, we can check that the term above is $\bigOtilde(d^{-C_\delta})$.
\end{proof}
 
For each $t \in [T_1]$, define an event $E_t$ such that
\begin{align*}
    &\quad \mathbbm{1}\left[N_\pt^{-1}\sum_{j=1}^{N_\pt}G_{j, N_\pt +1 }\left( \mZ^t\right)  y_j^t  \phi \left( \vx_j^t \right)^\top \vgamma(0) \phi \left( \vx^t \right) > 0  \right] \\
    &\neq \mathbbm{1} \left[a_0 \gamma^2 + a_1 \left \langle \vbeta^t \vx^t \right \rangle + a_2 \gamma \He_2\left( 
    \langle \vbeta^t, \vx^t \rangle \right) >0\right ].
\end{align*}

From Corollary~\ref{corollary:inner_concentration} and Lemma~\ref{lemma:small_abs}, we have
\begin{align*}
    \mathbb{P}_{\mZ^t} [E_t]
    &\leq\mathbb{P}_{\mZ^t} \left[ \abs{a_0 \gamma^2 + a_1 \left \langle \vbeta^t, \vx^t \right \rangle + a_2 \gamma \He_2 \left( \left \langle \vbeta^t, \vx^t \right \rangle \right)} < \bigOtilde \left( d^{-C_b +2} N_\pt^{-1/2}\right) \right] + \frac{1}{\poly (d)}\\
    &=  \bigOtilde \left( d^{2} N_\pt^{- 1/2}\right) .
\end{align*}

Combining with Corollary~\ref{corollary:inner_concentration}, with probability at least $1- \bigOtilde \left( d^{2} T_1 N_\pt^{- 1/2}\right)$ the following holds: For any $t \in [T_1]$, we have
\begin{align*}
    &\quad  y^t \nabla_\vgamma f\left(\mZ^t; \vgamma(0), \vu(0), \vv(0), \va(0)\right)\\
    &= y^t \mathbbm{1} \left[N_\pt^{-1}\sum_{j=1}^{N_\pt}G_{j, N_\pt +1 }\left( \mZ^t\right)  y_j^t  \phi \left( \vx_j^t \right)^\top \left(\vgamma(0) \odot \phi \left( \vx^t \right)\right) > 0 \right]\\
    &\quad \times  \left( N_\pt^{-1} \sum_{j=1} G_{j, N_\pt +1} \left( \mZ^t \right) y_j^t \phi \left( \vx_j^t \right) \odot \phi \left( \vx^t \right) \right)\\
    &=   y^t \mathbbm{1} \left[a_0 \gamma^2 + a_1 \left \langle \vbeta^t , \vx^t \right \rangle + a_2 \gamma \He_2\left( \left \langle \vbeta^t, \vx^t \right \rangle \right) > 0 \right] \phi(\vx^t) \odot   \psi \left( \vbeta^t, a_0, a_1, a_2\right) \\
    &\quad +  \vn \left(\mZ^t\right)  y^t \mathbbm{1} \left[a_0 \gamma^2 + a_1 \left \langle \vbeta^t , \vx^t \right \rangle + a_2 \gamma \He_2\left( \left \langle \vbeta^t, \vx^t \right \rangle \right) > 0 \right] \phi(\vx^t) \odot  \psi \left( \vbeta^t, a_0, a_1, a_2\right),
\end{align*}
where $\norm{\vn(\mZ^t)} = \bigOtilde \left( d^{-C_b + 1 } N_\pt^{-1/2}\right)$ with high probability. With high probability, the following holds for all $t \in [T_1]$:
\begin{align*}
    &\quad \norm{   y^t \mathbbm{1} \left[a_0 \gamma^2 + a_1 \left \langle \vbeta^t, \vx^t \right \rangle + a_2 \gamma \He_2\left( \left \langle \vbeta^t, \vx^t\right \rangle \right) > 0 \right]  \phi \left( \vx^t \right) \odot \vn(\mZ^t) }\\
    &\leq \abs{y^t} \norm{\vn\left( \mZ^t \right)} \norm{\phi\left( \vx^t \right) }\\
    &= \bigOtilde \left( d^{-C_b+2} N_{\pt}^{-1/2} \right).
\end{align*}
\paragraph{Estimation of label-gradient correlation.} With probability at least $1- \bigOtilde \left( d^{2} T_1 N_\pt^{- \frac 1 2}\right)$, the following holds for all $t \in [T_1]$:
\begin{align*}
    &\quad y^t \nabla_\vgamma f\left(\mZ^t; \vgamma(0), \vu(0), \vv(0), \va(0)\right)\\
    &=  y^t \mathbbm{1} \left[a_0 \gamma^2 + a_1 \left \langle \vbeta^t, \vx^t \right \rangle + a_2 \gamma \He_2\left( \left \langle \vbeta^t, \vx^t \right \rangle \right) > 0 \right]  \phi(\vx^t) \odot \psi \left( \vbeta^t, a_0, a_1, a_2\right)  + \bigOtilde \left( d^{-C_b+2} N_\pt^{-1/2}\right).
\end{align*}

\subsection{Characterization of  Updated Parameter}
In this step, we characterize the updated parameter by establishing concentration results.

Note that every entry of $\psi\left(\vbeta^t, a_0, a_1, a_2\right)$ are $\bigOtilde(d^{-C_b})$-bounded by Lemma~\ref{lemma:sigmoid_ie}. From Lemma~\ref{lemma:hermite_concentration}, with high probability, we have 
\begin{align*}
     \norm{\frac{1}{T_1} \sum_{t=1}^{T_1}  y^t \mathbbm{1} \left[ a_0 \gamma^2 + a_1 \left \langle \vbeta^t, \vx^t \right \rangle + a_2 \gamma \He_2 \left( \left \langle \vbeta^t, \vx^t \right \rangle \right) \right]  \phi \left( \vx^t \right) \odot \psi \left( \vbeta^t, a_0, a_1, a_2 \right)  - \vc} = \bigOtilde \left(d^{-C_b+1} T_1^{- 1/2}\right),
\end{align*}
where 
\begin{equation*}
    \vc := \mathbb{E} \left[   y^1 \mathbbm{1} \left[ a_0 \gamma^2 + a_1 \left \langle \vbeta^1, \vx^1 \right \rangle + a_2 \gamma \He_2 \left( \left \langle \vbeta^1, \vx^1 \right \rangle \right) >0 \right]  \phi \left( \vx^1 \right) \odot \psi \left( \vbeta^1, a_0, a_1, a_2 \right)\right].
\end{equation*}

Define $B: \R \rightarrow \R$ as $B(z) = g_*(z) \mathbbm{1} [a_0 \gamma^2 + a_1 z + a_2 \gamma z^2>0]$ and denote its Hermite expansion as $B(z) = \sum_{k=0}^\infty \frac{b_k}{k!} \He_k(z)$. Then, we have
\begin{align*}
    \vc &= \mathbb{E}_{\vbeta \sim \mathrm{Unif}(S_r)} \left[  \psi \left( \vbeta, a_0, a_1, a_2 \right) \odot \mathbb{E}_{\vx\sim \gN(\vzero, \mI_d)}\left[ B(\vbeta) \phi (\vx)\right] \right]\\
    &= \mathbb{E}_{\vbeta \sim  \mathrm{Unif}(S_r)} \left[  \psi \left( \vbeta, a_0, a_1, a_2 \right) \odot \psi \left( \vbeta, b_0, b_1, b_2 \right)  \right].
\end{align*}

Therefore, we conclude that
\begin{align*}
    &\quad \frac{1}{ T_1} \sum_{t=1}^{T_1} y^t \nabla_\vgamma f\left(\mZ^t; \vgamma^{(0)}, \vu^{(0)}, \vv^{(0)}, \va^{(0)}\right)\\
    &=  \mathbb{E}_{\vbeta \sim  \mathrm{Unif}(S_r)} \left[  \psi \left( \vbeta, a_0, a_1, a_2 \right) \odot \psi \left( \vbeta, b_0, b_1, b_2 \right)  \right] +  \bigOtilde \left( d^{-C_b + 2} N_\pt^{- 1/2}\right) +  \bigOtilde \left( d^{-C_b + 1} T_1^{- 1/2}\right),
\end{align*}
with probability at least $1- \bigOtilde \left( d^{2} T_1 N_\pt^{-1/2}\right)$.

The remaining step is to characterize the Hermite coefficients $b_0,b_1, b_2$, and the following lemma is useful.

\begin{lemma}\label{lemma:activation_hermite}
 For any non zero polynomial $P$ independent of $r$ and $d$, the following holds except for the cases $g_*$ is even function and $P$ is an odd function.:
 \begin{equation*}
     \abs{\mathbb{E}_{\rvz \sim \gN(0,1)} \left[ P(\rvz) \mathbbm{1} [a_0 \gamma^2 + a_1 \rvz + a_2 \gamma \He_2 (\rvz)>0]\right]} \gtrsim \frac{1}{\polylog(d)} .
 \end{equation*}
 Here, dependency of $g_*$ appears in $a_0, a_1, a_2$.
\end{lemma}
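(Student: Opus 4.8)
The plan is to peel the random indicator down to a one-parameter template and then reduce the claim to a one-dimensional Taylor expansion. Write $q(z) := a_2\gamma z^2 + a_1 z + (a_0\gamma^2 - a_2\gamma)$, so the event inside the expectation is $\{q(\rvz) > 0\}$, and set $\varphi(z) := \tfrac{1}{\sqrt{2\pi}}e^{-z^2/2}$. \textbf{Step 1 (orders of the coefficients).} Combining $e_0(g_*) = 2$, Lemma~\ref{lemma:monomial} and Lemma~\ref{lemma:sigmoid_ie}: $a_0 = \Theta\bigl(d^{-C_b}(\log d)^{-c}\bigr)$ for a positive constant $c = c(g_*)$ always; if $g_*$ is not even then $a_1$ is of the same inverse-polylog order while $|a_2| = \bigO\bigl(d^{-C_b}(\log d)^{-c'}\bigr)$ (to be read as $1/\poly(d)$ when $e_2(g_*) = \infty$); if $g_*$ is even then $|a_1| = \bigO(d^{-C_b}/\poly(d))$ whereas $|a_2|$ is of inverse-polylog order. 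Since $\gamma = \Theta((\log d)^{-C_\gamma})$ and $C_\gamma$ may be taken large relative to $C_\rho$ and $\deg(g_*)$, each of the ratios $a_0\gamma^2/|a_1|$, $|a_2|\gamma/|a_1|$, $|a_1|/(|a_2|\gamma)$, $a_0\gamma/|a_2|$ that appears below is $\Theta((\log d)^{-c''})$ for a constant $c'' > 0$: it is $o(1)$ yet bounded below by $1/\polylog(d)$.

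\textbf{Step 2 (reduction to a template on the bulk).} Solve $q(z) = 0$ explicitly. When $g_*$ is not even, one root is $z_1 = \tfrac{a_2\gamma - a_0\gamma^2}{a_1}\,(1 + o(1))$, of inverse-polylog magnitude and $\to 0$, while the other root has magnitude much larger than $\sqrt{\log d}$ and so leaves the bulk $\{|z| \le L\}$ for $L := \Theta(\sqrt{\log d})$; hence $\mathbbm{1}[q(z) > 0] = \mathbbm{1}[\sign(a_1)(z - z_1) > 0]$ on the bulk. When $g_*$ is even, treating $a_1 z$ as an $\bigO(1/\poly(d))$ perturbation of $a_2\gamma\He_2(z)$ over the bulk, the roots are $\pm\sqrt{\theta} + \bigO(1/\poly(d))$ with $\theta := 1 - a_0\gamma/a_2$, so $\mathbbm{1}[q(z) > 0]$ and $\mathbbm{1}[\sign(a_2)(z^2 - \theta) > 0]$ coincide off a set of Gaussian measure $\bigO(1/\poly(d))$, where $|\theta - 1| = a_0\gamma/|a_2| = \Theta((\log d)^{-c''}) > 0$ and $\theta \to 1$. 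Using $|P| \le \polylog(d)$ on the bulk and $\mathbb{E}_{\rvz}[|P(\rvz)|\,\mathbbm{1}[|\rvz| > L]] = \bigO(1/\poly(d))$ (a polynomial against a Gaussian tail), this yields
\begin{equation*}
\mathbb{E}_{\rvz \sim \gN(0,1)}\bigl[P(\rvz)\,\mathbbm{1}[q(\rvz) > 0]\bigr] \;=\; \mathbb{E}_{\rvz \sim \gN(0,1)}\bigl[P(\rvz)\,\mathbbm{1}_{S_0}(\rvz)\bigr] + \bigO(1/\poly(d)),
\end{equation*}
where $S_0 = \{\sign(a_1)(z - z_1) > 0\}$ in the non-even case and $S_0 = \{\sign(a_2)(z^2 - \theta) > 0\}$ in the even case.

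\textbf{Step 3 (the template integral does not vanish too fast).} After possibly replacing $\rvz \mapsto -\rvz$ and $P(z) \mapsto P(-z)$ (which keeps $P$ nonzero, and odd iff $P$ was), take $S_0 = \{z > z_1\}$ in the non-even case and consider the entire function $G(t) := \int_t^\infty P(z)\varphi(z)\,\mathrm{d}z$, so $G'(t) = -P(t)\varphi(t)$. Since $P \not\equiv 0$, $P\varphi$ vanishes at $0$ to a finite order $m \le \deg(P)$, whence $j_0 := \min\{j \ge 0 : G^{(j)}(0) \ne 0\}$ is finite with $j_0 \le \deg(P) + 1$ ($j_0 = 0$ iff $\int_0^\infty P\varphi \ne 0$, otherwise $j_0 = m + 1$). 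As $z_1 \to 0$, Taylor's theorem gives $G(z_1) = \tfrac{G^{(j_0)}(0)}{j_0!}\,z_1^{j_0}(1 + o(1))$, so $\bigl|\mathbb{E}_{\rvz}[P(\rvz)\mathbbm{1}_{S_0}(\rvz)]\bigr| = |G(z_1)| = \Theta(|z_1|^{j_0}) \gtrsim 1/\polylog(d)$, and Step~2 closes the non-even case. In the even case $S_0$ is symmetric, so only the even part $P_{\mathrm{even}}$ contributes, and $P_{\mathrm{even}} \not\equiv 0$ exactly when $P$ is not odd --- the case kept by the statement; the same argument, applied to $H(t) := 2\int_t^\infty P_{\mathrm{even}}(z)\varphi(z)\,\mathrm{d}z$ (or $2\int_0^t$ when $\sign(a_2) = -1$), expanded at $t = 1$ with perturbation $|\sqrt{\theta} - 1| = \Theta((\log d)^{-c''})$, gives $|H(\sqrt{\theta})| \gtrsim 1/\polylog(d)$.

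\textbf{Main obstacle.} The delicate part is Steps~1--2: from the sharp $\Theta(\cdot)$ asymptotics of Lemma~\ref{lemma:sigmoid_ie} together with $C_\gamma$ chosen large relative to $C_\rho$ and the constant exponents $e_p(g_*), \deg(g_*)$, one must check that every threshold perturbation ($z_1$ in one case, $\sqrt{\theta} - 1$ in the other) is simultaneously $o(1)$ --- so the stray root leaves the bulk and the expansion about the limiting threshold is legitimate --- and $\gtrsim 1/\polylog(d)$ --- so it is genuinely nonzero and the final bound survives being raised to the constant power $j_0 \le \deg(P) + 1$. Given the right template and the right order of its perturbation, Step~3 is a routine one-dimensional computation, and the hypothesis ``$g_*$ even and $P$ odd'' is excluded precisely to keep $P_{\mathrm{even}} \not\equiv 0$.
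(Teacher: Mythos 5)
Your proposal is correct and follows essentially the same route as the paper's proof: after discarding Gaussian tails and the stray root, the indicator becomes a half-line threshold near $0$ (non-even $g_*$) or a near-symmetric set around $\{|z|>1\}$ (even $g_*$), and your Taylor expansion of $G$ (resp.\ $H$) at the limiting threshold is exactly the paper's dichotomy between a nonvanishing constant main term and a boundary integral of length $\asymp$ the polylog-sized threshold shift produced by $\gamma$, lower-bounded via the leading nonzero coefficient of $P$ (resp.\ $P_{\mathrm{even}}$). The only blemish is the blanket claim in Step~1 that all four listed ratios are simultaneously $o(1)$ and $\gtrsim 1/\polylog(d)$ — two of them are reciprocals of each other, and $|a_1|/(|a_2|\gamma)$ is in fact $1/\poly(d)$ in the even case — but the case-dependent way you actually use these quantities in Steps~2--3 (large stray root in the non-even case, $1/\poly(d)$ perturbation of the symmetric set in the even case) is the correct one and matches the paper, so this is a writing slip rather than a gap.
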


\begin{proof}[Proof of Lemma~\ref{lemma:activation_hermite}]

    Note that $e_0(g_*) = 2$. Then, from Lemma~\ref{lemma:sigmoid_ie}, we know that $d^{C_b}a_0 = \Theta\left ( (\log d)^{-2 C_\rho }\right)$. In addition, for $p=1,2$, $d^{C_b}a_p = \Theta\left( (\log d)^{-C_\rho e_p(g_*)}\right)$ if $e_p(g_*)< \infty$ and $d^{C_b} a_p \lesssim \frac{1}{\poly(d)}$ otherwise. 

    \textbf{Case 1: $g_*$ is not an even function and $a_2 = 0$.}
    
    In this case, $a_1 \neq 0$. We assume $a_1>0$, and we can also prove the case $a_1<0$ similarly. We have
    \begin{align*}
        &\quad \mathbb{E}_{\rvz \sim \gN(0,1)}\left[ P(\rvz) \mathbbm{1} [a_0 \gamma ^2 + a_1 \rvz >0] \right]\\
        &= \mathbb{E}_{\rvz \sim \gN(0,1)}\left[ P(\rvz) \right] - \mathbb{E}_{\rvz \sim \gN(0,1)}\left[ P(\rvz) \mathbbm{1} [\rvz <0])\right] - \frac{1}{\sqrt{2\pi}}\int_0^{a_0 \gamma^2/a_1}P(z) e^{-\frac{z^2}{2}} \mathrm{d}z.
    \end{align*}
    From our choice of $\gamma$, $a_0\gamma^2/a_1 = 1/\polylog(d)$ and we have
    \begin{equation*}
        \abs{\int_0^{a_0 \gamma^2/a_1}P(z) e^{-\frac{z^2}{2}} \mathrm{d} z} \leq \frac{a_0 \gamma^2}{a_1} \max_{z \in [-1,1]} \abs{P(z)}  \lesssim \frac{1}{\polylog(d)}
    \end{equation*}
    and this provides desired conclusion for the case $\mathbb{E}_{\rvz \sim \gN(0,1)}\left[ P(\rvz) \right] \neq \mathbb{E}_{\rvz \sim \gN(0,1)}\left[ P(\rvz) \mathbbm{1} [\rvz <0]\right]$. For the case $\mathbb{E}_{\rvz \sim \gN(0,1)}\left[ P(\rvz) \right] = \mathbb{E}_{\rvz \sim \gN(0,1)}\left[ P(\rvz) \mathbbm{1} [\rvz <0]\right]$, it suffices to show that 
    \begin{equation*}
        \abs{\int_0^{a_0 \gamma^2/a_1}P(z) e^{-\frac{z^2}{2}} \mathrm{d} z } \gtrsim \frac{1}{\polylog(d)}.
    \end{equation*}
    
    Since $a_0\gamma^2/a_1 = 1/\polylog(d)$, $P(z)$ is monotone and does not change its sign in $\left[0,a_0 \gamma^2/a_1\right]$. Let $Q$ be the degree of $P$ and $q$ be the smallest degree that has non zero coefficient in $P$ and let $P(z) = \sum_{k=q}^{Q}p_k z^k $. Then, we have
    \begin{align*}
         \abs{\int_0^{a_0 \gamma^2/a_1}P(z) e^{-\frac{z^2}{2}} \mathrm{d}z } &= \abs{\int_0^{a_0 \gamma^2/a_1} \abs{P(z)} e^{-\frac{z^2}{2}} \mathrm{d} z}\\
         &\geq  e^{-\frac{1}{2}} \abs{\int_0^{a_0 \gamma^2/a_1} \abs{P(z)} \mathrm{d}z}\\
         &\geq  e^{-\frac{1}{2}} \abs{\sum_{k=q}^Q \frac{p_k}{k+1} \left( \frac{a_0 \gamma^2}{a_1}\right)}\\
         & \asymp \left( \frac{a_0 \gamma^2}{a_1}\right)^{q+1}  \gtrsim \frac{1}{\polylog(d)}.
    \end{align*}
    Hence, we have the desired conclusion.

    \textbf{Case 2: $g_*$ is not an even function and $a_2 \neq 0$.}
    
    In this case, $a_1, a_2\neq 0$. We assume $a_2 >0$ and we can prove the case $a_2<0$ using similar argument.
    Note that 
    \begin{align*}
        &\quad \abs{\mathbb{E}_{\rvz \sim \gN(0,1)} \left[ P(\rvz) \mathbbm{1} \left[a_0 \gamma^2 + a_1 \rvz + a_2 \gamma \He_2 (\rvz)>0\right]\right]}\\
        &\geq \abs{\mathbb{E}_{\rvz \sim \gN(0,1)} \left[ P(\rvz) \mathbbm{1} \left[a_0 \gamma^2 + a_1 \rvz + a_2 \gamma \He_2 (\rvz)>0 \wedge \rvz > - \sqrt{\log d} \right]\right]}\\
        &\quad -\mathbb{E}_{\rvz \sim \gN(0,1)}\big[ \abs{P(\rvz)} \mathbbm{1}[z \leq -\log d]\big]\\
        &\geq \abs{\mathbb{E}_{\rvz \sim \gN(0,1)} \left[ P(\rvz) \mathbbm{1} \left[a_0 \gamma^2 + a_1 \rvz + a_2 \gamma \He_2 (\rvz)>0 \wedge \rvz >  -\sqrt{\log d} \right]\right]} \\
        &\quad - \left(\mathbb{E}_{\rvz \sim \gN(0,1)}\left[P(\rvz)^2\right]\right)^{\frac 1 2} \left(\mathbb{P}_{\rvz \sim \gN(0,1)}\left[z \leq -\sqrt{\log d}\right]\right)^{\frac 1 2}.
    \end{align*}
    Since $\mathbb{P}_{\rvz \sim \gN(0,1)}[z < -\log d] = o(1/\polylog(d))$, it suffices to show that 
    \begin{equation*}
        \abs{\mathbb{E}_{\rvz \sim \gN(0,1)} \left[ P(\rvz) \mathbbm{1} \left[a_0 \gamma^2 + a_1 \rvz + a_2 \gamma \He_2 (\rvz)>0 \wedge \rvz  > -\sqrt{\log d} \right]\right]} \gtrsim \frac{1}{\polylog(d)}.
    \end{equation*}
    From our choice of $\gamma$ and Lemma~\ref{lemma:sigmoid_ie}, we have  $\theta^-:=\frac{-a_1 - \sqrt{a_1^2 - 4a_2 (a_0\gamma-a_2)\gamma^2}}{2a_2\gamma}< -\sqrt{\log d}$. In addition, define
    \begin{equation*}
        \theta^+ \coloneqq \frac{-a_1 + \sqrt{a_1^2 - 4a_2 (a_0 \gamma - a_2)\gamma^2}}{2a_2 \gamma}\\
        = \frac{2 (a_0\gamma - a_2) \gamma}{a_1 + \sqrt{a_1^2 - 4a_2 (a_0\gamma - a_2)}},
    \end{equation*}
    then $\abs{\theta^+} \lesssim 1/\polylog(d)$. 
    Therefore, we have
    \begin{align*}
        &\quad \mathbb{E}_{\rvz \sim \gN(0,1)} \left[ P(\rvz) \mathbbm{1} [a_0 \gamma^2 + a_1 \rvz + a_2 \gamma \He_2 (\rvz)>0 \wedge \rvz >  -\sqrt{\log d} ]\right]\\
        &= \mathbb{E}_{\rvz \sim \gN(0,1)} \left[ P(\rvz) \mathbbm{1} \left[\theta^+ \leq \rvz \right]\right]\\
        &= \mathbb{E}_{\rvz \sim \gN(0,1)}\left[ P(\rvz) \right] - \mathbb{E}_{\rvz \sim \gN(0,1)}\left[ P(\rvz) \mathbbm{1} [\rvz <0])\right] - \frac{1}{\sqrt{2\pi}}\int_0^{\theta^+}P(z) e^{-\frac{z^2}{2}} \mathrm{d}z.
    \end{align*}

    Note that 
    \begin{equation*}
        \abs{\int_0^{\theta^+}P(z) e^{-\frac{z^2}{2}} dz} \leq \abs{\theta^+} \max_{z \in [-1,1]} \abs{P(z)}  \lesssim \frac{1}{\polylog(d)}
    \end{equation*}
    and this provides desired conclusion for the case $\mathbb{E}_{\rvz \sim \gN(0,1)}\left[ P(\rvz) \right] \neq \mathbb{E}_{\rvz \sim \gN(0,1)}\left[ P(\rvz) \mathbbm{1} [\rvz <0]\right]$. For the case $\mathbb{E}_{\rvz \sim \gN(0,1)}\left[ P(\rvz) \right] = \mathbb{E}_{\rvz \sim \gN(0,1)}\left[ P(\rvz) \mathbbm{1} [\rvz <0]\right]$, it suffices to show that 
    \begin{equation*}
        \abs{\int_0^{\theta^+}P(z) e^{-\frac{z^2}{2}} dz } \gtrsim \frac{1}{\polylog(d)}.
    \end{equation*}
    
    Since $\theta^+ \lesssim 1/\polylog(d)$, $P(z)$ is monotone and does not change its sign in $\left[0, \theta^+\right]$. Let $Q$ be the degree of $P$ and $q$ be the smallest degree that has non zero coefficient in $P$ and let $P(z) = \sum_{k=q}^{Q}p_k z^k $. Then, we have
    \begin{align*}
         \abs{\int_0^{\theta^+}P(z) e^{-\frac{z^2}{2}} \mathrm{d} z } &= \abs{\int_0^{\theta^+} \abs{P(z)} e^{-\frac{z^2}{2}} \mathrm{d} z}\\
         &\geq  e^{-\frac{1}{2}} \abs{\int_0^{\theta^+} \abs{P(z)} \mathrm{d} z}\\
         &\geq  e^{-\frac{1}{2}} \abs{\sum_{k=q}^Q \frac{p_k}{k+1}\left(\theta^+\right)^k}\\
         & \asymp \left(\theta^+ \right)^{q+1}  \gtrsim \frac{1}{\polylog(d)}.
    \end{align*}
    Hence, we have desired conclusion.

    \textbf{Case 3: $g_*$ is an even function.}

    In this case, since $e_1(g_*) = \infty$, $\abs{a_1} \lesssim 1/\poly(d)$.
    We assume $a_2>0$ and we can prove the case $a_2<0$ using similar arguments. Let $P_\mathrm{even}$ denote the even part of $P$. Then, we have
    \begin{align*}
        &\quad  \mathbb{E}_{\rvz \sim \gN(0,1)} [P(\rvz) \mathbbm{1} [a_0 \gamma^2 + a_1 \rvz + a_2 \gamma \He_2(\rvz)>0]]\\
        &= \mathbb{E}_{\rvz \sim \gN(0,1)} \left[ P(\rvz)  \mathbbm{1} \left[\rvz >\theta^+ \vee \rvz <- \theta^- \right] \right]\\
        &= \mathbb{E}_{\rvz \sim \gN(0,1)} \left[ P(\rvz)  \mathbbm{1} \left[\rvz > \sqrt{1- a_0 \gamma /a_2} \vee \rvz <- \sqrt{1- a_0 \gamma /a_2}\right] \right]\\
        &\quad + \frac{1}{\sqrt{2\pi}}\int_{\theta^+}^{\sqrt{1-a_0\gamma/a_2}} P(z) e^{- \frac{z^2}{2}} \mathrm{d} z + \frac{1}{\sqrt{2\pi}}\int^{\theta^-}_{-\sqrt{1-a_0\gamma/a_2}} P(z) e^{- \frac{z^2}{2}}\mathrm{d} z\\ 
        &= 2\mathbb{E}_{\rvz \sim \gN(0,1)} \left[ P_{\mathrm{even}}(\rvz) \mathbbm{1} \left[\rvz > \sqrt{1- a_0 \gamma /a_2} \right] \right]\\
        &\quad + \frac{1}{\sqrt{2\pi}}\int_{\theta^+}^{\sqrt{1-a_0\gamma/a_2}} P(z) e^{- \frac{z^2}{2}} \mathrm{d} z + \frac{1}{\sqrt{2\pi}}\int^{\theta^-}_{-\sqrt{1-a_0\gamma/a_2}} P(z) e^{- \frac{z^2}{2}}\mathrm{d} z\\ 
        &= \mathbb{E}_{\rvz \sim \gN(0,1)}\left[ P_{\mathrm{even}}(\rvz)\right] - 2 \mathbb{E}_{\rvz \sim \gN(0,1)} \left[ P_{\mathrm{even}}(\rvz) \mathbbm{1}[0<\rvz<1] \right] \\
        &\quad -  \frac{2}{\sqrt{2\pi}}\int_1^{\sqrt{1- a_0 \gamma/a_2 }} P_{\mathrm{even}}(z) e^{-\frac{z^2}{2}} \mathrm{d} z\\
        &\quad + \underbrace{\frac{1}{\sqrt{2\pi}}\int_{\theta^+}^{\sqrt{1-a_0\gamma/a_2}} P(z) e^{- \frac{z^2}{2}} \mathrm{d} z + \frac{1}{\sqrt{2\pi}}\int^{\theta^-}_{-\sqrt{1-a_0\gamma/a_2}} P(z) e^{- \frac{z^2}{2}}\mathrm{d} z}_{(*)}.
    \end{align*}    
    From our choice of $\gamma$, we have
    \begin{align*}
        \abs{\int_1^{\sqrt{1- a_0 \gamma/a_2}} P_{\mathrm{even}}(z) e^{-\frac{z^2}{2}} \mathrm{d} z}  &\leq \abs{\int_1^{\sqrt{1- a_0 \gamma/a_2}}  \abs{P_{\mathrm{even}}(z)}\mathrm{d} z}\\
        &\leq \abs{\sqrt{1- a_0 \gamma/a_2}-1} \max_{z \in [0,2]} \abs{P_{\mathrm{even}}(z)} \\
        &\lesssim \frac{1}{\polylog(d)}.
    \end{align*}
    Since $\abs{a_1} \lesssim 1/ \poly(d)$, we have $\abs{\sqrt{1- a_0\gamma/a_2} - \theta^+}, \abs{-\sqrt{1- a_0\gamma/a_2} - \theta^-} \lesssim 1/\poly(d)$ and using the same argument above, we obtain that $|(*)| \lesssim 1/\poly(d)$.

    Hence, we obtain the conclusion if $\mathbb{E}_{\rvz \sim \gN(0,1)} [P_\mathrm{even}(\rvz)] \neq 2\mathbb{E}_{\rvz \sim \gN(0,1)} [P_{\mathrm{even}}(\rvz) \mathbbm{1}[0<\rvz<1]]$. For the case $\mathbb{E}_{\rvz \sim \gN(0,1)} [P_\mathrm{even}(\rvz)] = 2\mathbb{E}_{\rvz \sim \gN(0,1)} [P_{\mathrm{even}}(\rvz) \mathbbm{1}[0<\rvz<1]]$, it is enough to show that 
    \begin{equation*}
        \abs{\int_1^{\sqrt{1- a_0 \gamma/2}}P_{\mathrm{even}}(z) e^{-\frac{z^2}{2}} dz} \gtrsim \frac{1}{\polylog(d)}.
    \end{equation*}
    From our small choice of $\gamma$, $P_\mathrm{even}$ is monotone and does not change its sign in $[1, \sqrt{1- a_0\gamma/a_2}]$(or $[\sqrt{1- a_0\gamma/a_2},1]$. Let $P_\mathrm{even}(z) = \sum_{k=q'}^{Q'} p'_k (z-1)^k$ with $p'_{q'}, p'_{Q'} \neq 0$. Then, we have
    \begin{align*}
        \abs{\int_1^{\sqrt{1 -a_0 \gamma/a_2}}P_{\mathrm{even}}(z) e^{-\frac{z^2}{2}} \mathrm{d}z}&= \abs{\int_1^{\sqrt{1- a_0 \gamma/a_2}} \abs{P_{\mathrm{even}}(z)}e^{- \frac{z^2}{2}} \mathrm{d}z}\\
        &\geq e^{-2} \abs{\sum_{k=q'}^{Q'} \frac{p'_k}{k+1} \left(\sqrt{1- a_0 \gamma/a_2}-1\right)^k}\\
        &\asymp \abs{\sqrt{1- a_0 \gamma/a_2}-1}^{q'} \gtrsim \frac{1}{\polylog(d)}.
    \end{align*}
    Therefore, we have our desired conclusion.
\end{proof}

By Lemma~\ref{lemma:activation_hermite} with $g_*(z),g_*(z)z, g_*(z)\He_2(z)$, we have $b_0, b_2 = \tilde \Theta(1)$ and $b_1 = \tilde \Theta(1)$ if $g_*$ is not an even function. We will show that $b_1 = 1/\poly(d)$ if $g_*$ is an even function. In this case, $a_2 \neq 0$. Without loss of generality, we assume $a_2>0$. Then,  we have
\begin{align*}
    &\quad 2\abs{\mathbb{E}_{\rvz \sim \gN(0,1)} \left[ g_*(z) z \mathbbm{1} [a_0 \gamma^2 + a_1 \rvz + a_2 \gamma \He_2(\rvz)>0] \right]}\\
    &= \Big| \mathbb{E}_{\rvz \sim \gN(0,1)} \left[ g_*(z) z \mathbbm{1} [a_0 \gamma^2 + a_1 \rvz + a_2 \gamma \He_2(\rvz)>0] \right]\\
    &\quad - \mathbb{E}_{\rvz \sim \gN(0,1)} \left[ g_*(z) z \mathbbm{1} [a_0 \gamma^2 - a_1 \rvz + a_2 \gamma \He_2(\rvz)>0] \right] \Big|\\
    &\leq \frac{1}{\sqrt{2 \pi}} \int_{\theta_-^+}^{\theta_+^+} \abs{g_*(z) z} \mathrm{d} z + \frac{1}{\sqrt{2 \pi}} \int_{\theta_-^-}^{\theta_+^-} \abs{g_*(z) z} \mathrm{d} z,
\end{align*}
where $\theta_+^+$, $\theta_-^+$, $\theta_+^-$, and $\theta_-^-$ are defined as follows:
\begin{align*}
    \theta_+^+ \coloneqq \frac{|a_1|}{2a_2 \gamma} + \sqrt{\left( \frac{a_1}{2a_2 \gamma}\right)^2 + 1- \frac{a_1 \gamma}{a_2}}, &\quad \theta_-^+ \coloneqq -\frac{|a_1|}{2a_2 \gamma} + \sqrt{\left( \frac{a_1}{2a_2 \gamma}\right)^2 + 1- \frac{a_1 \gamma}{a_2}}\\
    \theta_+^- \coloneqq \frac{|a_1|}{2a_2 \gamma} - \sqrt{\left( \frac{a_1}{2a_2 \gamma}\right)^2 + 1- \frac{a_1 \gamma}{a_2}}, &\quad \theta_-^- \coloneqq -\frac{|a_1|}{2a_2 \gamma} - \sqrt{\left( \frac{a_1}{2a_2 \gamma}\right)^2 + 1- \frac{a_1 \gamma}{a_2}}.
\end{align*}
From Lemma~\ref{lemma:sigmoid_ie} and our choice of $\gamma$, $[\theta_-^+, \theta_+^+] \subset [0,2]$. Hence, we have
\begin{equation*}
     \int_{\theta_-^+}^{\theta_+^+} \abs{g_*(z) z} \mathrm{d} z \leq \left(\theta_+^+ - \theta_-^+\right) \max_{z \in [0,2]}\abs{g_*(z) z} =  \frac{a_1}{a_2\gamma}\max_{z \in [0,2]}\abs{g_*(z) z} = \frac{1}{\poly(d)}.
\end{equation*}
Applying a similar argument, we have the same bound for the second term, and we conclude $b_1 = 1/\poly(d)$.

\paragraph{Updated Parameter $\vgamma^*$.} With probability at least $1- \bigOtilde \left( d^2 T_1 N_\pt^{- 1/2}\right)$, the updated parameter $\vgamma^*$ is given by:
\begin{equation}\label{eq:updated_mamba}
\begin{aligned}
    \vgamma^* &= 2 \eta \mathbb{E}_{\vbeta \sim \mathrm{Unif}(S_r)}\left[\psi(\vbeta, a_0, a_1, a_2) \odot \psi (\vbeta, b_0, b_1, b_2) \right] \\
    &\quad + \eta \bigOtilde \left( \max \left \{ d^{-2C_b+4}, d^{-C_b+2}N_\pt^{-1/2}, d^{-C_b+1} T_1^{-1/2} \right \}\right),
\end{aligned}
\end{equation}
with $a_0, b_0, b_2 = \tilde \Theta(1)$ and $a_{\ge(g_*)} = \tilde \Theta(1), a_{3-\ge(g_*)} = \bigOtilde (1)$. Furthermore, $b_1 = \tilde \Theta(1)$ if $\ge(g_*) = 1$, and $b_1 \lesssim 1/\poly(d)$ otherwise.

\subsection{Output of Updated Mamba Layer}
Lastly, we characterize the output of the Mamba layer with the updated parameter $\vgamma^*$, which serves as the input to the MLP layer. This characterization is given in the following proposition, which is a formal statement of Proposition~\ref{proposition:test-time_feature_learning}.
\begin{proposition}\label{proposition:updated_mamba}
     Let $(\vx_1, y_1, \dots, \vx_N,y_N, \vx)$ be a prompt with context length $N\leq N^*$ and feature vector $\vbeta \in \R^d$ and its embedding $\mZ\in \R^{(\tilde d +1) \times (N+1)}$. If $N = \tilde \Omega\left(r^{3\ge(g_*)}\right)$, \eqref{eq:updated_mamba} holds, and $\eta = \Theta(d^{2C_b} (\log d)^{-C_\eta})$ with some large enough constant $C_\eta>0$, then the following holds with high probability:
    \begin{equation*}
        N^{-1} \Mamba\left(\mZ;\vgamma^*\right)[\tilde d+1, N+1] = P_1 + P_2 \left( \frac{\langle \vbeta, \vx \rangle }{r}\right)^{\ge(g_*)} \!\!\!\!+ o\left(P_2 r^{-3\ge(g_*)/2} (\log d)^{-2\deg(g_*)+2}\right),
    \end{equation*}
    where $P_1$ and $P_2$ are independent of data and satisfies $P_1 = o(1)$ and $P_2 = \Theta((\log d)^{-C_{P_2}})$ with some constant $C_{P_2}>0$. 
\end{proposition}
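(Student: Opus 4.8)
The plan is to reduce the statement to an explicit evaluation of a single quadratic form. By Lemma~\ref{lemma:mamba_output} together with the parametrization $\mW_B^\top\mW_C=\mathrm{diag}(\vgamma^*,0)$, the quantity of interest is $N^{-1}\Mamba(\mZ;\vgamma^*)[\tilde d+1,N+1]=\big(N^{-1}\sum_{j=1}^N G_{j,N+1}(\mZ)\,y_j\,\phi(\vx_j)\big)^\top\!\big(\vgamma^*\odot\phi(\vx)\big)$. First I would invoke Lemma~\ref{lemma:vector_concentration} to replace the context average by $\psi(\vbeta,a_0,a_1,a_2)+\vn$ with $\norm{\vn}=\bigOtilde(d^{-C_b+1}N^{-1/2})$, and substitute $\vgamma^*=2\eta\,\bar\vgamma+\vn_\gamma$ from \eqref{eq:updated_mamba}, where $\bar\vgamma:=\mathbb{E}_{\vbeta'\sim\mathrm{Unif}(S_r)}[\psi(\vbeta',a_0,a_1,a_2)\odot\psi(\vbeta',b_0,b_1,b_2)]$ and $\norm{\vn_\gamma}=\eta\,\bigOtilde(\max\{d^{-2C_b+4},d^{-C_b+2}N_\pt^{-1/2},d^{-C_b+1}T_1^{-1/2}\})$. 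This writes the output as the main term $2\eta\,\psi(\vbeta,a_0,a_1,a_2)^\top(\bar\vgamma\odot\phi(\vx))$ plus the remainders $\vn^\top(\vgamma^*\odot\phi(\vx))$ and $\psi(\vbeta,a_0,a_1,a_2)^\top(\vn_\gamma\odot\phi(\vx))$.

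To evaluate the main term I would expand $\bar\vgamma$ block by block using the sphere moments $\mathbb{E}[\vbeta'[i]^2]=1/r$, $\mathbb{E}[\vbeta'[i]^4]=3/(r(r+2))$, $\mathbb{E}[\vbeta'[i]^2\vbeta'[j]^2]=1/(r(r+2))$ for $i\ne j$ (all vanishing outside $\gI$, odd moments zero). Since $\vbeta$ is a unit vector supported on $\gI$, this yields $\psi(\vbeta,a_0,a_1,a_2)^\top(\bar\vgamma\odot\phi(\vx))=a_0^2 b_0+\tfrac{a_1^2 b_1}{r}\inner{\vbeta,\vx}+\tfrac{3a_2^2 b_2}{4r(r+2)}\big(\textstyle\sum_{i\in\gI}\vbeta[i]^2\vx[i]^2-1\big)+\tfrac{a_2^2 b_2}{r(r+2)}\textstyle\sum_{i<j\in\gI}\vbeta[i]\vbeta[j]\vx[i]\vx[j]$. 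Using $\inner{\vbeta,\vx}^2=\sum_i\vbeta[i]^2\vx[i]^2+2\sum_{i<j}\vbeta[i]\vbeta[j]\vx[i]\vx[j]$ and the concentration of $\sum_{i\in\gI}\vbeta[i]^2\vx[i]^2$ around its mean $1$ — proved via $\norm{\vbeta}_4=\bigOtilde(r^{-1/4})$ and $\norm{\vbeta}_\infty=\bigOtilde(r^{-1/2})$ with high probability over $\vbeta$, combined with a Hanson–Wright bound over $\vx$ after the decomposition $\vx=\inner{\vbeta,\vx}\vbeta+\tilde\vx$ with $\tilde\vx\perp\vbeta$ — the quadratic block collapses to $\tfrac{a_2^2 b_2}{2r(r+2)}\He_2(\inner{\vbeta,\vx})$ up to a controlled fluctuation. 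By Lemma~\ref{lemma:ge}, exactly one block carries the signal: when $\ge(g_*)=1$, Lemmas~\ref{lemma:sigmoid_ie} and~\ref{lemma:activation_hermite} give $a_1$ of order $\tilde{\Theta}(d^{-C_b})$ and $b_1$ of order $\tilde{\Theta}(1)$, so the exact term $\tfrac{a_1^2 b_1}{r}\inner{\vbeta,\vx}$ dominates; when $\ge(g_*)=2$ ($g_*$ even), $a_1,b_1\lesssim 1/\poly(d)$ are negligible while $a_2,b_2$ are of order $\tilde{\Theta}(d^{-C_b})$ and $\tilde{\Theta}(1)$, and $\He_2(\inner{\vbeta,\vx})=\inner{\vbeta,\vx}^2-1$ supplies the signal $\propto(\inner{\vbeta,\vx}/r)^2$ and a data-independent constant. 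I would then set $P_2$ equal to $2\eta$ times the coefficient of $(\inner{\vbeta,\vx}/r)^{\ge(g_*)}$ — of order $\eta\,d^{-2C_b}$, hence $\Theta((\log d)^{-C_{P_2}})$ by $\eta=\Theta(d^{2C_b}(\log d)^{-C_\eta})$ — and collect every data-independent constant, including $2\eta a_0^2 b_0$ and the $\He_2$-constant, into $P_1$; the extra factor $a_0=\tilde{\Theta}(d^{-C_b})$ on the first $\psi$ and the large $C_\eta$ force $P_1=\Theta((\log d)^{-C_\eta-\Theta(1)})=o(1)$.

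Finally I would bound the error terms by $o\big(P_2\,r^{-3\ge(g_*)/2}(\log d)^{-2\deg(g_*)+2}\big)$. The delicate point is that the crude estimate $\lvert\vn^\top(\vgamma^*\odot\phi(\vx))\rvert\le\norm{\vn}\,\norm{\vgamma^*}\,\norm{\phi(\vx)}_\infty$ loses a factor $d^{C_b}$, since $\vgamma^*$ has a large coordinate $\vgamma^*[0]$ of order $\eta a_0 b_0=\Theta(d^{C_b}(\log d)^{-\Theta(1)})$; one must instead expand $\vn^\top(\vgamma^*\odot\phi(\vx))$ block by block and use that each coordinate of $\vn$ is the deviation of an empirical mean of an $\bigOtilde(d^{-C_b})$-bounded variable (the large bias $b=C_b\log d$ forces the gating weights $\sigma(y_j/\rho+b)$ to be of order $d^{-C_b}$), so $\lvert\vn[0]\rvert=\bigOtilde(d^{-C_b}N^{-1/2})$ by Lemma~\ref{lemma:hermite_concentration}, which cancels the $d^{C_b}$ in $\vgamma^*[0]$ and leaves $\bigOtilde(N^{-1/2}\polylog)$; the $\Theta(d^2)$ coordinates off $\gI$ and the cross block are handled by Cauchy–Schwarz against the small $\norm{\vn_\gamma}$ and the $\bigOtilde(d^{-C_b+1})$-norm of the corresponding sub-blocks of $\vn$ and $\psi(\vbeta,a_0,a_1,a_2)$. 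Since $N=\tilde\Omega(r^{3\ge(g_*)})$, $N_\pt=\tilde\Omega(r^{3\ge(g_*)}d^8)$, $T_1=\tilde\Omega(r^{3\ge(g_*)}d^6)$ hide polylog factors that may be taken as large as needed, each resulting error is $o(P_2 r^{-3\ge(g_*)/2}(\log d)^{-2\deg(g_*)+2})$. The main obstacle is precisely this joint accounting — in particular, keeping the quadratic-form fluctuation of $\sum_{i\in\gI}\vbeta[i]^2\vx[i]^2$ within budget by separating its $\inner{\vbeta,\vx}^2$-proportional part (a benign $1+\bigOtilde(1/r)$ refinement of $P_2$) from its genuinely fluctuating remainder, which is controlled through sharp sphere-moment bounds on $\vbeta$ and conditional Gaussian concentration for $\vx$.
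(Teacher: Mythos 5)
Your overall route is the same as the paper's: substitute \eqref{eq:updated_mamba} into the Mamba output, evaluate the main term $\psi(\vbeta,a_0,a_1,a_2)^\top(\vc\odot\phi(\vx))$ (your $\bar\vgamma$ is the paper's $\vc$) via moments of $\mathrm{Unif}(S_r)$, and push everything else into the error. The one structural difference is how the empirical context average is concentrated: the paper never forms the vector deviation $\vn$ at all. After discarding the forgetting factor, it shows $\phi(\vx_j)^\top(\vc\odot\phi(\vx))=\bigOtilde(d^{-C_b})$ for each $j$, so that each scalar $\rvz_j=\eta\, y_j\sigma(y_j/\rho+b)\,\phi(\vx_j)^\top(\vc\odot\phi(\vx))$ is bounded by $1$ with high probability, truncates, and applies Hoeffding directly to these scalars; this yields the $\bigOtilde(N^{-1/2})$ error with no dimension factor, and the ``delicate point'' you describe (the $d^{C_b}$ loss against the large constant coordinate of $\vgamma^*$) never arises. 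Your coordinatewise/blockwise repair of the Cauchy--Schwarz loss can be made to work (it is essentially Lemma~\ref{lemma:hermite_concentration} applied entrywise, as in the proof of Lemma~\ref{lemma:vector_concentration}), but the paper's scalar truncation-plus-Hoeffding is the cleaner device and is worth adopting.

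The genuine gap is in the step you yourself call the main obstacle. Your exact block computation is correct, and it gives that the quadratic part of the main term equals $\frac{a_2^2b_2}{2r(r+2)}\He_2(\langle\vbeta,\vx\rangle)+\frac{a_2^2b_2}{4r(r+2)}\bigl(\sum_{i\in\gI}\vbeta[i]^2\vx[i]^2-1\bigr)$. But your claim that the second term stays within the stated budget fails for $\ge(g_*)=2$: conditionally on a typical $\vbeta$, $\sum_{i\in\gI}\vbeta[i]^2(\vx[i]^2-1)$ is a centered Gaussian quadratic form with standard deviation of order $\norm{\vbeta}_4^2\asymp r^{-1/2}$, and Hanson--Wright gives an upper bound of exactly that order, not a smaller one. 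After multiplying by the prefactor and by $2\eta$ this residual is of order $P_2 r^{-5/2}$ (up to logarithms), which exceeds the claimed $o\bigl(P_2 r^{-3}(\log d)^{-2\deg(g_*)+2}\bigr)$; it cannot be absorbed into the data-independent $P_1$ (it depends on $\vx$), and only its component along $\vbeta\vbeta^\top$ can be folded into $P_2(\langle\vbeta,\vx\rangle/r)^2$ --- your ``$1+\bigOtilde(1/r)$ refinement'' removes that part, but the orthogonal chaos of size $r^{-1/2}$ remains. So your error accounting does not deliver the proposition's $r^{-3\ge(g_*)/2}$ rate; note the paper sidesteps this entirely by substituting the second moments $\vbeta'[i]^2\mapsto 1/r$ wholesale when evaluating $\vc$, i.e.\ it does not carry the exact sphere fourth moments that you (rightly) insist on. If you keep the exact moments, you must either weaken the error term (an error of order $P_2 r^{-\ge(g_*)-1/2}$ still suffices for the downstream use in Lemma~\ref{lemma:infinite_width} once $r$ exceeds a fixed power of $\log d$) or argue that the $a_2$-block is negligible in the relevant case; a similar, if less clear-cut, accounting issue affects the leftover $\He_2$ block you leave in the error for $\ge(g_*)=1$. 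As written, this step of your plan would not close.
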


\begin{proof}[Proof of Proposition~\ref{proposition:updated_mamba}]
   Recall that $\vc:= \mathbb{E}_{\vbeta \sim \mathrm{Unif}(S_r)}\left[\psi(\vbeta, a_0, a_1, a_2) \odot \psi (\vbeta, b_0, b_1, b_2) \right]$. From \eqref{eq:updated_mamba}, we have
    \begin{align*}
        &\quad (2 \eta )^{-1} N^{-1} \Mamba\left(\mZ;\vgamma^*\right)[\tilde d+1, N+1]\\
        &= N^{-1} \sum_{j=1}^{N} G_{j,N+1} \left(\mZ\right) y_j \phi\left(\vx_j\right)^\top \big( \vc \odot  \phi\left(\vx\right)\big) \\
        &\quad +\bigOtilde \left( \max \left \{ d^{-3C_b+4}, d^{-2C_b+4}N_\pt^{-1/2}, d^{-2C_b+3} T_1^{-1/2} \right \}\right)\\
        &=  N^{-1} \sum_{j=1}^{N} y_j \sigma(y_j/\rho+b) \phi\left(\vx_j\right)^\top \big( \vc \odot  \phi\left(\vx\right)\big)\\
        &\quad -  N^{-1}\sum_{j=1}^{N} \left[y_j \sigma(y_j/\rho + b) \left ( 1 - \big(1- \sigma(b)\big)\prod_{i=j+1}^{N}\big(1- \sigma(y_j^t/\rho + b) \big)\right)\phi \left(\vx_j^t\right)^\top \big( \vc \odot \phi (\vx) \big)\right]\\
        &\quad +\bigOtilde \left( \max \left \{ d^{-3C_b+4}, d^{-2C_b+4}N_\pt^{-1/2}, d^{-2C_b+3} T_1^{-1/2} \right \}\right).
    \end{align*}
    Note that for each $j \in [N]$, with high probability, $y_j = \rho \bar g_*(\vbeta, \vx_j)+ \zeta_i$ where $\zeta_i \sim \mathrm{Unif}(\{-\tau, \tau\})$. It implies that 
    \begin{align*}
        &\quad \abs{ y_j \sigma(y_j/\rho+b) \left ( 1 - \big(1- \sigma(b)\big)\prod_{i=j+1}^{N}\big(1- \sigma(y_j^t/\rho + b) \big)\right)\phi \left(\vx_j^t\right)^\top \big( \vc \odot \phi (\vx) \big) }\\
        &\leq  N^{-1}\sum_{j=1}^{N} \left[ \abs{y_j \sigma(y_j/\rho + b)} \left ( 1 - \big(1- \sigma (2b)\big)^{N^*}\right)\norm{\phi \left(\vx_j\right)}\right] \norm{\vc} \norm{\phi (\vx)}\\
        & = \bigOtilde \left( d^{(-3C_b+C^* +2)} \right).
    \end{align*}
    In addition, with high probability, we have
    \begin{align*}
        &\quad \phi\left(\vx_j\right)^\top \big( \vc \odot \phi(\vx) \big)\\
        &= \mathbb{E}_{\vbeta \sim \mathrm{Unif}(S_r)} \big[ \left \langle \psi (\vbeta, a_0, a_1, a_2) \odot \phi(\vx_j) , \psi (\vbeta, b_0, b_1, b_2) \odot \phi(\vx) \right \rangle \big]\\
        &=  a_0b_0 + a_1b_1 \mathbb{E}_{\vbeta \sim \mathrm{Unif}(S_r)} [\langle \vbeta, \vx_i\rangle \langle \vbeta, \vx\rangle]\\
        &\quad + \frac{a_2b_2}{2} \mathbb{E}_{\vbeta \sim \mathrm{Unif}(S_r)} \left[ (\He_2(\langle \vbeta, \vx \rangle )-1) (\He_2(\langle \vbeta,\vx_j \rangle)-1) \right].
    \end{align*}
    In addition, combining with Lemma~\ref{lemma:poly_bound}, we have
    \begin{equation*}
         \phi\left(\vx_j\right)^\top \big( \vc \odot  \phi\left(\vx\right)\big) = \bigOtilde\left (d^{-C_b}\right),
    \end{equation*}
    with high probability.  Therefore, from our choice of $\eta = \Theta \left ( d^{2C_b} (\log d)^{-C_\eta}\right)$, with high probability, we have
    \begin{equation*}
        \abs{\eta y_j \sigma (y_j / \rho+b) \phi(\vx_j)^\top \big( \vc \odot \phi(\vx) \big)} \leq 1.
    \end{equation*}
    Therefore, with high probability, we have
    \begin{equation*}
         N^{-1} \sum_{j=1}^N \eta y_j \sigma (y_j/\rho+b) \phi(\vx_j)^\top \big( \vc \odot \phi(\vx) \big) = N^{-1}\sum_{j=1}^N \bar \rvz_j,
    \end{equation*}
    where
    \begin{equation*}
        \rvz_j:= \eta y_j \sigma (y_j / \rho+b) \phi(\vx_j)^\top \big( \vc \odot \phi(\vx) \big), \quad \bar \rvz_j:=  \rvz_j \mathbbm{1}[\abs{\rvz_j} \leq 1].
    \end{equation*}
    By H\"{o}effding's inequality, with high probability, we have
    \begin{align*}
        N^{-1} \sum_{j=1}^N \bar \rvz_j &= \mathbb{E}_{\vx_1,y_1}[\bar \rvz_1] +  \bigOtilde \left( N^{- 1/2}\right)\\
        &= \mathbb{E}_{\rvx_1, y_1} [\rvz_1] + \mathbb{E}_{\rvx_1, y_1} \left [\rvz_1 \mathbbm{1} \left[ \abs{\rvz_1} >r^2\right]\right] +  \bigOtilde \left( N^{- 1/2}\right)\\
        &= \mathbb{E}_{\rvx_1, y_1} [\rvz_1]  +  \bigOtilde \left( N^{- 1/2}\right),
    \end{align*}
    where the last inequality holds since
    \begin{equation*}
        \abs{\mathbb{E}_{\rvx_1, y_1} \left[\rvz_1 \mathbbm{1} \left[ \abs{\rvz_1} >r^2\right]\right]} \leq \mathbb{E}_{\rvx_1, y_1} \left [\rvz_1^2 \right]^\frac{1}{2} \mathbb{P}\left[\abs{\rvz_1} >r^2\right]^\frac{1}{2} = \frac{1}{\poly (d)}.
    \end{equation*}

    Therefore, with high probability, we have
    \begin{align*}
        &\quad N^{-1} \Mamba\left(\mZ;\vgamma^*\right)[\tilde d+1, N+1]\\
        &= 2\eta \mathbb{E}\left[ y_1 \sigma(y_1/\rho + b) \phi(\vx_1)^\top \big( \vc \odot \phi(\vx) \big)\right]\\
        &\quad + \bigOtilde \left( N^{- 1/2}\right) +  \bigOtilde \left( \max \left \{ d^{-C_b+6}, d^{4}N_\pt^{-1/2}, d^{3} T_1^{-1/2} \right \}\right).
    \end{align*}
    Lastly, the expectation can be calculated as
    \begin{align*}
        &\quad \psi \left( \vbeta, a_0, a_1, a_2\right)^\top\left( \vc \odot \phi \left( \vx \right)\right)\\
        &= \mathbb{E}_{\vbeta' \sim \mathrm{Unif}(S_r) } \Big[ \big\langle \psi(\vbeta, a_0, a_1, a_2) \odot \psi(\vbeta', a_0, a_1, a_2) , \psi(\vbeta', b_0, b_1, b_2) \odot \phi(\vx) \big \rangle \Big]\\
        &= a_0^2 b_0 + a_1^2 b_1\mathbb{E}_{\vbeta' \sim \mathrm{Unif}(S_r)} \left[ \sum_{i=1}^d \vbeta[i] \vx[i] \vbeta'[i]^2\right]\\
        &\quad + \frac{a_2^2 b_2}{4} \left( \left( \sum_{i=1}^d \vbeta[i] \vx[i] \vbeta'[i]^2 \right)^2  -  \sum_{i=1}^d \vbeta[i]^2 \vbeta'[i]^2 \right)\\
        &= \left(a_0^2 b_0- \frac{a_2^2 b_2}{4}\right) + a_1^2 b_1 \left( \frac{\left \langle \vbeta, \vx\right \rangle}{r}\right) + \frac{a_2^2 b_2}{4}  \left( \frac{\left \langle \vbeta, \vx\right \rangle}{r}\right)^2.
    \end{align*}
    Hence, we have
    \begin{align*}
        &\quad N^{-1} \Mamba\left(\mZ;\vgamma^*\right)[\tilde d+1, N+1]\\
        &= 2\eta \left( \left(a_0^2 b_0- \frac{a_2^2 b_2}{4}\right) + a_1^2 b_1 \left( \frac{\left \langle \vbeta, \vx\right \rangle}{r}\right) + \frac{a_2^2 b_2}{4}  \left( \frac{\left \langle \vbeta, \vx\right \rangle}{r}\right)^2 \right)\\
        &\quad + \bigOtilde \left( N^{- \frac 1 2}\right) + \bigOtilde \left( \max \left \{ d^{-C_b+6}, d^{4}N_\pt^{-1/2}, d^{3} T_1^{-1/2} \right \}\right).
    \end{align*}
    Our conclusion is reached by defining $P_1 = 2\eta ( a_0^2 b_0- a_2^2 b_2/4)$ and 
    \begin{align*}
        P_2 = \begin{cases}
             2\eta  a_1^2 b_1 &\text{if $\ge(g_*)=1$}\\
             \eta a_2^2 b_2/2  &\text{if $\ge(g_*)=2$}
        \end{cases}.
    \end{align*}
\end{proof}
\section{Optimizing MLP Layer}\label{appendix:mlp}
In this section, we analyze the second stage of pretraining, which focuses on the MLP layer.

\subsection{Construction of Approximating MLP Layer}
First, we construct the infinite-width MLP layer approximating the link function $g_*$.
\begin{lemma}\label{lemma:infinite_width}
    For given $\vbeta \in \R^d$ with $\norm{\vbeta} = 1$, suppose there exists a function $h:\R \rightarrow \R$ such that 
    \begin{equation*}
        h(\vx) = P_1 + P_2 \left( \frac{\langle \vbeta, \vx \rangle }{r}\right)^{\ge(g_*)} + n(\vx),
    \end{equation*}
    where $P_1 = o(1), P_2 = \Theta\left ( (\log d)^{-C_{P_2}} \right )$, and $\abs{n(\vx)} = o\left(P_2 r^{-3\ge(g_*)/2} (\log d)^{-2\deg(g_*)+2}\right)$ with high probability over $\vx \sim \gN(\vzero, \mI_d)$. Then, there exists a function $\pi(\cdot,\cdot): \R^2 \rightarrow \R$ such that
    \begin{equation*}
        \abs{\mathbb{E}_{v \sim \mathrm{Unif}(\{\pm1\}), a \sim \mathrm{Unif}([-1,1])} [\phi(v,a) \relu (v h(\vx) + a] - g_*(\langle \vbeta, \vx \rangle)} = o(1),
    \end{equation*}
    with high probability over $\vx \sim \gN(\vzero, \mI_d)$. In addition, $\sup_{v,a}\abs{\pi(v,a)} = \bigOtilde (r^{2\ge(g_*)})$.
\end{lemma}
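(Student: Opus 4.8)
The plan is to write down $\pi$ explicitly by a Taylor-expansion-with-ReLU-remainder construction (in the spirit of the monomial-approximation gadget of \citet{damian2022neural}), carried out on the \emph{small} window around $P_1$ where $h(\vx)$ concentrates rather than on the full bias range $[-1,1]$. First I would invoke Lemma~\ref{lemma:ge} to write $g_*(z) = Q(z^{\ge(g_*)})$ for a polynomial $Q$ of degree $D := \deg(g_*)/\ge(g_*)$ with $O(1)$ coefficients, and set $G(t) := Q(r^{\ge(g_*)}(t-P_1)/P_2)$, so that $G(P_1 + P_2(\langle\vbeta,\vx\rangle/r)^{\ge(g_*)}) = Q(\langle\vbeta,\vx\rangle^{\ge(g_*)}) = g_*(\langle\vbeta,\vx\rangle)$ identically. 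Since $\langle\vbeta,\vx\rangle \sim \gN(0,1)$, Lemma~\ref{lemma:poly_bound} gives $|\langle\vbeta,\vx\rangle| = \bigOtilde(1)$ with high probability over $\vx$, whence $|P_2(\langle\vbeta,\vx\rangle/r)^{\ge(g_*)}| = \bigOtilde(P_2 r^{-\ge(g_*)})$; combined with $|n(\vx)| = o(P_2 r^{-\ge(g_*)})$ and $P_1, P_2 = o(1)$, this fixes a deterministic $\delta = \bigOtilde(P_2 r^{-\ge(g_*)})$ such that, with high probability over $\vx$, both $h(\vx) \in [P_1-\delta, P_1+\delta]$ and $|h(\vx)| < 1/2$. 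On this event it then suffices to arrange $\mathbb{E}_{v,a}[\pi(v,a)\relu(v h(\vx)+a)] = G(h(\vx))$ and, separately, to control $|G(h(\vx)) - g_*(\langle\vbeta,\vx\rangle)|$.

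For the construction I would set $\pi(-1,\cdot) \equiv 0$, so that $\mathbb{E}_{v,a}[\pi(v,a)\relu(v h+a)] = \tfrac14\int_{-1}^{1} \pi(1,a)\relu(h+a)\,da$, and design $\pi(1,\cdot)$ to realize $G$ on $[P_1-\delta, P_1+\delta] \cap (-\tfrac12,\tfrac12)$. Taylor's theorem with integral remainder about $t_0 := P_1-\delta$ gives, for such $h$, $G(h) = (c_0 + c_1 h) + \int_{P_1-\delta}^{P_1+\delta}(h-s)_+\, G''(s)\,ds$ with $c_1 = G'(t_0)$ and $c_0 = G(t_0) - G'(t_0)t_0$. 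I would reproduce the remainder by putting $\pi(1,a) := 4\,G''(-a)$ for $a \in [-P_1-\delta, -P_1+\delta] \subset (-1,1)$ (the substitution $s=-a$ matches the integral), and reproduce the affine part by exploiting that $\relu(h+a) = h+a$ \emph{exactly} when $a \in [\tfrac12,1]$ and $|h| < \tfrac12$: taking $\pi(1,\cdot)$ affine on $[\tfrac12,1]$ with its first two moments $\int_{1/2}^{1}\pi(1,a)\,da$ and $\int_{1/2}^{1} a\,\pi(1,a)\,da$ matched to $4c_1$ and $4c_0$ — a $2\times 2$ linear system with $\Theta(1)$ determinant — reproduces $4(c_0+c_1 h)$ with no error. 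Setting $\pi(1,\cdot)=0$ off these two disjoint intervals then gives $\mathbb{E}_{v,a}[\pi(v,a)\relu(v h+a)] = G(h)$ exactly on the good event.

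It remains to bound the sizes. On $[P_1-\delta,P_1+\delta]$ the argument $r^{\ge(g_*)}(s-P_1)/P_2$ of $Q''$ stays $\bigOtilde(1)$ precisely because $\delta = \bigOtilde(P_2 r^{-\ge(g_*)})$, so $|G''(s)| = |Q''(\cdot)|\,(r^{\ge(g_*)}/P_2)^2 = \bigOtilde(r^{2\ge(g_*)})$ using $P_2 = \Theta((\log d)^{-C_{P_2}})$; similarly $|c_0|,|c_1| = \bigOtilde(r^{\ge(g_*)})$, so $\sup_{v,a}|\pi(v,a)| = \bigOtilde(r^{2\ge(g_*)})$, as claimed. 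For the approximation error, on the good event $G(h(\vx)) = Q(\langle\vbeta,\vx\rangle^{\ge(g_*)} + \xi)$ with $\xi := r^{\ge(g_*)} n(\vx)/P_2 = o(r^{-\ge(g_*)/2}(\log d)^{-2\deg(g_*)+2})$, whereas $g_*(\langle\vbeta,\vx\rangle) = Q(\langle\vbeta,\vx\rangle^{\ge(g_*)})$; since $|\langle\vbeta,\vx\rangle^{\ge(g_*)}| = \bigOtilde(1)$ and $|\xi| = o(1)$, the mean value theorem gives $|G(h(\vx)) - g_*(\langle\vbeta,\vx\rangle)| \le |\xi|\cdot\sup|Q'| = o(r^{-\ge(g_*)/2}(\log d)^{-2\deg(g_*)+2})\cdot\bigOtilde((\log d)^{(\deg(g_*)-\ge(g_*))/2}) = o(1)$; here the $(\log d)^{-2\deg(g_*)+2}$ factor in the hypothesis on $n$ is exactly what absorbs the amplification by $Q'$, whose modulus on a $\bigOtilde(1)$-sized set is $\bigOtilde((\log d)^{(\deg(g_*)-\ge(g_*))/2})$. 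Combining the two estimates yields $|\mathbb{E}_{v,a}[\pi(v,a)\relu(v h(\vx)+a)] - g_*(\langle\vbeta,\vx\rangle)| = o(1)$ with high probability over $\vx$.

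The main obstacle is the bookkeeping in this last step: one must localize to the $\bigOtilde(P_2 r^{-\ge(g_*)})$ window around $P_1$ so that $|G''|$ does not exceed the target scale $\bigOtilde(r^{2\ge(g_*)})$ (integrating over all of $[-1,1]$ would blow this up by factors polynomial in $r^{\ge(g_*)}$), represent the affine Taylor term with a \emph{bounded} $\pi$ rather than a Dirac mass at the bias endpoints (resolved by the exact linearity of $\relu$ away from its kink), and verify that the degree-$D$ amplification of the $n(\vx)$-error is swallowed by the prescribed logarithmic budget.
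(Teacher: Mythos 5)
Your construction is correct, and it takes a genuinely different route from the paper. The paper reduces to the monomial gadget of \citet{damian2022neural} (Lemma 17 there): it writes $g_*(z)=\tilde g_*\bigl(z^{\ge(g_*)}\bigr)$, builds $\pi'$ as a weighted sum of the prefabricated $\pi_k'$ for each monomial with biases rescaled to a window of width $p(\log d)^2$, $p=P_2 r^{-\ge(g_*)}$, and then absorbs the shift $P_1$ by restricting the bias to windows around $\pm P_1$ as in \eqref{eq:pi}; the noise $n(\vx)$ is handled by $1$-Lipschitzness of $\relu$ against $\sup|\pi'|=\bigO\bigl(p^{-1}(\log d)^{2\deg(g_*)-2}\bigr)$. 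You instead give a self-contained construction: absorb $P_1$ into the target $G(t)=Q\bigl(r^{\ge(g_*)}(t-P_1)/P_2\bigr)$, represent $G$ exactly on the concentration window via Taylor with integral remainder (the $G''$ kernel, realized by $\pi(1,a)=4G''(-a)$ on the reflected window) plus an exact affine part placed on $a\in[\tfrac12,1]$ where $\relu(h+a)=h+a$, and handle $n(\vx)$ by the mean value theorem for $G$; your bookkeeping ($|G''|=\bigOtilde(r^{2\ge(g_*)})$ on the window, $|c_0|,|c_1|=\bigOtilde(r^{\ge(g_*)})$, nonsingular $2\times2$ moment system, and the cancellation of the degree amplification against the $(\log d)^{-2\deg(g_*)+2}$ budget) all checks out and matches the paper's error scale $o(r^{-\ge(g_*)/2})$. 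What each buys: the paper's route is shorter given the external lemma and produces a $\pi$ supported on a set of probability $\bigOtilde(r^{-\ge(g_*)})$, a fact its proof of Lemma~\ref{lemma:finite_width} uses verbatim to bound $\mathbb{E}[\pi^2]\leq \sup|\pi|^2\,\mathbb{P}[\pi\neq 0]=\bigOtilde(r^{3\ge(g_*)})$; your route avoids the black box and makes the localization-versus-$\sup|\pi|$ trade-off explicit, but your $\pi$ has constant-probability support because of the $[\tfrac12,1]$ affine piece, so if it is reused downstream one must bound the second moment directly---window part $\bigOtilde(r^{4\ge(g_*)})\cdot\bigOtilde(r^{-\ge(g_*)})$ plus affine part $\bigOtilde(r^{2\ge(g_*)})$, which still gives $\mathbb{E}[\pi^2]=\bigOtilde(r^{3\ge(g_*)})$ and leaves the norm bound in Lemma~\ref{lemma:finite_width} intact.
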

\begin{proof}[Proof of Lemma~\ref{lemma:infinite_width}]
    Since $\ge(g_*)=2$ implies $g_*$ is even function, there exists a polynomial $\tilde g_*$ such that $g_*(z) = \tilde g_*\left(z^{\ge(g_*))}\right)$. Let $\tilde g_*(z) = \sum_{k=0}^{\mathrm{deg}(\tilde g_*)} s_k z^k$. For any $k \in \mathbb{N}_0$, from Lemma 17 in \citet{damian2022neural}, there exists $\pi_k(\cdot,\cdot):\R^2 \rightarrow \R$ such that for any $|z|\leq 1$
    \begin{equation*}
        \mathbb{E}_{v \sim \mathrm{Unif}(\{\pm1\}), a \sim \mathrm{Unif}([-1,1])} \left[ \pi_k'(v,a) \relu (vz+a) \right] = z^k \text{ and } \sup_{v,a} \abs{\pi_k' (v,a)} =\bigO(1). 
    \end{equation*}
    Let us define $\pi'(\cdot,\cdot):\R^2 \rightarrow \R$ as 
    \begin{equation*}
        \pi'(v,a) = \sum_{k=0}^{\deg (\tilde g_*)} s_k \frac{\pi_k'\big(v, a p^{-1} (\log d)^{-2} \big)  }{p  (\log d)^2} (\log d)^{2k},
    \end{equation*}
    where $p:= P_2 r^{-\ge(g_*)}$. Note that $\sup_{v,a}\abs{\pi'(v,a)} = \bigO \left( p^{-1} (\log d)^{2 \deg (g_*) -2}\right)$ and if $\abs{z} \leq (\log d)^2$, then we have
    \begin{align*}
        &\quad \mathbb{E}_{\substack{v \sim \mathrm{Unif}(\{\pm 1\})\\a \sim \mathrm{Unif}([-p (\log d)^2, p (\log d)^2])}} \left [ \pi'(v,a) \relu\left( v ( p z) + a\right) \right]\\
        &= \sum_{k=0}^{\deg(\tilde g_*)} s_k \mathbb{E}_{\substack{v \sim \mathrm{Unif}(\{\pm 1\})\\a \sim \mathrm{Unif}([-p (\log d)^2, p (\log d)^2])}} \left [ \frac{\pi_k'(v,a p^{-1} (\log d)^{-2})}{p (\log d)^2} (\log d)^{2k}\relu\left( v ( p z) + a\right) \right]\\
        &= \sum_{k=0}^{\deg(\tilde g_*)} s_k (\log d)^{2k} \mathbb{E}_{v \sim \mathrm{Unif}(\{\pm 1\}), a \sim \mathrm{Unif}([-1,1])}[\pi_k'(v,a) \relu(vz (\log d)^{-2}+a)]\\
        &= \sum_{k=0}^{\deg (\tilde g_*)} s_k z^k = \tilde g_*(z).
    \end{align*}
    Lastly, we define $\pi(\cdot, \cdot) : \R^2 \rightarrow \R$ as
    \begin{equation}\label{eq:pi}
    \begin{aligned}
        \pi(v,a) &\coloneqq \frac{\mathbbm{1}[v= -1 \wedge a \in [P_1 - p (\log d)^2, P_1 + p (\log d)^2]] \pi' (-1, b-P_1)}{2 p (\log d)^2} 
         \\&\quad + \frac{\mathbbm{1}[v= 1 \wedge a \in [-P_1 - p (\log d)^2, -P_1 + p (\log d)^2]] \pi' (1, b+P_1)}{2 p (\log d)^2},
    \end{aligned}
    \end{equation}
    then we have $\sup_{v,a} = \bigOtilde (r^{2 \ge (g_*)})$
    With high probability, $\abs{\langle \vbeta, \vx \rangle} \leq (\log d)^2$ and thus we have
    \begin{align*}
        &\quad 2\mathbb{E}_{v \sim \mathrm{Unif}(\{\pm 1\}), a \sim \mathrm{Unif}([-1,1])} [\pi(v,a) \relu (v h(\vx) + b)]\\
        &=  \mathbb{E}_{a \sim \mathrm{Unif}([P_1- p (\log d)^2, P_1 + p(\log d)^2])} \left[ \pi' (-1,b-P_1) \relu (-P_1 - p (\langle \vbeta, \vx\rangle)^{\ge (g_*)} ) -n(\vx) +a\right]\\
        &\quad + \mathbb{E}_{a \sim \mathrm{Unif}([P_1- p (\log d)^2, P_1 + p(\log d)^2])} \left[ \pi' (1,b-P_1) \relu (P_1 + p (\langle \vbeta, \vx\rangle)^{\ge (g_*)} + n(\vx) + a)\right]\\
        &= 2 \mathbb{E}_{v \sim \mathrm{Unif}(\{\pm 1\}), a \sim \mathrm{Unif}([- p (\log d)^2, p (\log d)^2])} \left[ \pi'(v,a) \relu \left(v p \langle\vbeta, \vx \rangle^{\ge(g_*)} + a\right)\right] + o(1)\\
        &= 2 g_* (\langle \vbeta, \vx\rangle ) + o(1).
    \end{align*}
    Here, the second equality holds from the fact that $\sup_{v,a}\abs{\pi'(v,a)} = \bigO \left( p^{-1} (\log d)^{2 \deg (g_*) -2}\right)$ and $\abs{n(\vx)} = o\left (p (\log d)^{-2 \deg (g_*)+2}\right)$.

    Therefore, we have the desired conclusion.
\end{proof}

Next, we prove that we can approximate the link function with a finite-width MLP.
\begin{lemma}\label{lemma:finite_width}
    Let $\vv \sim \mathrm{Unif}(\{\pm 1\}^m)$ and $\va \sim \mathrm{Unif}([-1,1]^m)$. Under the same condition of Lemma~\ref{lemma:infinite_width}, there exists $\vu' \in \R^m$ such that 
    \begin{equation*}
        \abs{ \sum_{j=1}^m \vu'[j] \relu(\vv[j] h(\vx) + \va[j]) -g_*(\langle \vbeta, \vx \rangle )} = o(1)
    \end{equation*}
    with high probability over $\vx \sim \gN(\vzero, \mI_d)$. Furthermore, $\norm{\vu'}^2 = \bigOtilde \left( r^{3 \ge(g_*)} m^{-1} \right)$ holds with high probability.
\end{lemma}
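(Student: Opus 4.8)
The plan is to discretize the infinite-width construction of Lemma~\ref{lemma:infinite_width} by Monte-Carlo sampling. Let $\pi(\cdot,\cdot)$ be the function produced there and set $\vu'[j] := \tfrac1m \pi(\vv[j],\va[j])$ for $j\in[m]$, so that $\sum_{j=1}^m \vu'[j]\relu(\vv[j]h(\vx)+\va[j]) = \tfrac1m\sum_{j=1}^m X_j$ with $X_j := \pi(\vv[j],\va[j])\relu(\vv[j]h(\vx)+\va[j])$. For each fixed $\vx$ the $X_j$ are i.i.d.\ over $j$ with mean $\mathbb{E}_{v,a}[\pi(v,a)\relu(vh(\vx)+a)]$, which Lemma~\ref{lemma:infinite_width} already shows is within $o(1)$ of $g_*(\langle\vbeta,\vx\rangle)$ with high probability over $\vx$. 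So it remains to (i) bound the Monte-Carlo fluctuation $\bigl|\tfrac1m\sum_j X_j - \mathbb{E}_{v,a}[\pi\relu(vh(\vx)+a)]\bigr|$ and (ii) control $\norm{\vu'}^2 = m^{-2}\sum_{j=1}^m \pi(\vv[j],\va[j])^2$.

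The quantity driving both parts is $\mathbb{E}_{v,a}[\pi(v,a)^2]$, and I would first establish $\mathbb{E}_{v,a}[\pi(v,a)^2] = \bigOtilde(r^{3\ge(g_*)})$. By the explicit form \eqref{eq:pi}, for each sign $v$ the map $a\mapsto\pi(v,a)$ is supported on an interval of length $2p(\log d)^2$ with $p = P_2 r^{-\ge(g_*)} = \bigOtilde(r^{-\ge(g_*)})$, so $\mathbb{P}_{v,a}[\pi(v,a)\neq 0] = \bigOtilde(p)$, while on this support $|\pi| \le \sup_{v,a}|\pi(v,a)| = \bigOtilde(r^{2\ge(g_*)})$ by Lemma~\ref{lemma:infinite_width}; multiplying gives $\mathbb{E}_{v,a}[\pi(v,a)^2] \le \bigOtilde(p)\cdot\bigOtilde(r^{4\ge(g_*)}) = \bigOtilde(r^{3\ge(g_*)})$.

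For the norm, $\mathbb{E}_{\vv,\va}[\norm{\vu'}^2] = m^{-1}\mathbb{E}_{v,a}[\pi^2] = \bigOtilde(r^{3\ge(g_*)}m^{-1})$, and since $\pi(\vv[j],\va[j])^2 \le \sup|\pi|^2 = \bigOtilde(r^{4\ge(g_*)})$, a Hoeffding (or Bernstein) bound yields $\sum_j \pi(\vv[j],\va[j])^2 \le 2m\,\mathbb{E}_{v,a}[\pi^2]$ with high probability over $(\vv,\va)$ once $m = \tilde\Omega(r^{4\ge(g_*)})$ with a sufficiently large polylog factor, giving the claimed $\norm{\vu'}^2 = \bigOtilde(r^{3\ge(g_*)}m^{-1})$. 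For the approximation, fix $\vx$ in the good event $\mathcal{G}$ on which $|\langle\vbeta,\vx\rangle| \le (\log d)^2$ (Lemma~\ref{lemma:poly_bound}), $n(\vx)$ obeys the bound assumed in Lemma~\ref{lemma:infinite_width}, and $|\mathbb{E}_{v,a}[\pi\relu(vh(\vx)+a)] - g_*(\langle\vbeta,\vx\rangle)| = o(1)$; on $\mathcal{G}$ also $|h(\vx)| = \bigOtilde(1)$, hence $|X_j| \le \sup|\pi|(|h(\vx)|+1) = \bigOtilde(r^{2\ge(g_*)})$. Hoeffding then gives $\mathbb{P}_{\vv,\va}\bigl[\bigl|\tfrac1m\sum_j X_j - \mathbb{E}_{v,a}[\pi\relu(vh(\vx)+a)]\bigr| > t_d\bigr] \le 2\exp\!\bigl(-\Omega(mt_d^2/\bigOtilde(r^{4\ge(g_*)}))\bigr)$, and choosing $t_d = 1/\log d = o(1)$ with $m = \tilde\Omega(r^{4\ge(g_*)})$ (large polylog factor) makes this $\le d^{-C}$ for any prescribed $C$.

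Finally I would convert this into a single-$\vu'$ statement via Fubini: $\mathbb{E}_{\vv,\va}\,\mathbb{P}_{\vx}\bigl[\vx\in\mathcal{G},\ \bigl|\tfrac1m\sum_j X_j - g_*(\langle\vbeta,\vx\rangle)\bigr| > 2t_d\bigr] = \mathbb{E}_{\vx}\bigl[\mathbbm{1}\{\vx\in\mathcal{G}\}\,\mathbb{P}_{\vv,\va}[\,\cdot\,]\bigr] \le d^{-C}$, so Markov over $(\vv,\va)$ gives that with high probability over $(\vv,\va)$ one has $\mathbb{P}_{\vx}[\vx\in\mathcal{G},\ |\cdots|>2t_d] \le d^{-C/2}$; combined with $\mathbb{P}[\mathcal{G}^c] = o(1/\poly(d))$ this yields $\bigl|\sum_j \vu'[j]\relu(\vv[j]h(\vx)+\va[j]) - g_*(\langle\vbeta,\vx\rangle)\bigr| \le 2t_d = o(1)$ with high probability over $\vx$, as required. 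The main obstacle is the sharp second-moment bound $\mathbb{E}_{v,a}[\pi^2] = \bigOtilde(r^{3\ge(g_*)})$ rather than the trivial $\sup|\pi|^2 = \bigOtilde(r^{4\ge(g_*)})$: the saving comes precisely from $\pi$ being supported on the narrow windows of width $\bigOtilde(r^{-\ge(g_*)})$ around $\pm P_1$ in \eqref{eq:pi}, and it is what lets the Stage~II weight decay, and hence the later Rademacher-complexity generalization bound, scale like $r^{3\ge(g_*)}/m$. Everything else is a routine Monte-Carlo discretization plus Hoeffding/Bernstein concentration, with the Fubini step needed because $\vu'$ itself depends on the randomness $(\vv,\va)$.
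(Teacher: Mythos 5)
Your proposal is correct and follows essentially the same route as the paper: the same Monte-Carlo choice $\vu'[j]=\pi(\vv[j],\va[j])/m$, the same Hoeffding concentration of $\frac1m\sum_j X_j$ around the infinite-width mean from Lemma~\ref{lemma:infinite_width}, and the same second-moment bound $\mathbb{E}_{v,a}[\pi^2]=\bigOtilde(r^{3\ge(g_*)})$ obtained from the $\bigOtilde(r^{-\ge(g_*)})$ support probability of $\pi$ in \eqref{eq:pi} times $\sup|\pi|^2=\bigOtilde(r^{4\ge(g_*)})$. Your Fubini/Markov step for the quantifier interchange and the explicit use of $m=\tilde\Omega(r^{4\ge(g_*)})$ are just refinements of details the paper leaves implicit (it keeps the $\bigOtilde(r^{2\ge(g_*)}m^{-1/2})$ fluctuation term explicit and relies on the theorem's width condition).
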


\begin{proof}[Proof of Lemma~\ref{lemma:finite_width}]
    We choose $\vu'$ as $\vu'[j] = \pi(\vv[j],\va[j])/m$ where $\pi(\cdot, \cdot):\R^2 \rightarrow \R$ is obtained from Lemma~\ref{lemma:infinite_width}. We will show that this choice satisfies the desired conclusions. Since $\abs{h(\vx)} \leq 1$ with high probability and $\sup_{v,a} \abs{\pi(v,a)} = \bigOtilde (r^{2 \ge (g_*)})$, we can apply H\"{o}effding's inequality:
    \begin{align*}
        &\quad \sum_{j=1}^m \vu'[j] \relu(\vv[j] h(\vx)+\va[j]) \\
        &= \frac{1}{m} \sum_{j=1}^m \pi(\vv[j],\va[j]) \relu(\vv[j] h(\vx)+\va[j])\\
        &= \mathbb{E}_{v \sim \mathrm{Unif}(\{\pm 1\}),a \sim \mathrm{Unif}([-1,1])}[\pi(v,a) \relu(v h(\vx)+a)] + \bigOtilde (r^{2 \ge (g_*)}m^{- 1/2})\\
        &= g_*(\langle \vbeta, \vx \rangle) + \bigOtilde (r^{2 \ge (g_*)}m^{- 1/2}) + o(1).
    \end{align*}
    In addition, by applying H\"{o}effding's inequality, the following holds with high probability:
    \begin{align*}
        \norm{\vu'}^2 &= m^{-2} \sum_{j=1}^m \pi(\vv[j],\vb[i])^2\\
        &= m^{-1} \mathbb{E}_{v\sim \mathrm{Unif}(\{\pm 1\}), a \sim \mathrm{Unif}([-1,1])}[\pi(v,a)^2] + \bigOtilde(r^{4 \ge (g_*)}m^{- 3/2}).
    \end{align*}
    From \eqref{eq:pi}, $\pi(v,a)$ is nonzero with probability $\bigOtilde(r^{-\ge(g_*)})$. Combining with $\sup_{v,a} \abs{\pi(v,a)} = \bigOtilde (r^{2\ge(g_*)})$, we have desired conclusion.
\end{proof}

\subsection{Characterization of Estimation Error on the Training Set}
The following lemma characterizes estimation on the training set after pretraining. 

\begin{lemma}\label{lemma:train_error}
    There exists $\lambda_2>0$ such that the following holds with probability at least $0.999$:
    \begin{equation*}
        \frac{1}{T_2} \sum_{t= T_1 +1}^{T_1 + T_2} \abs{y^t - f(\mZ^t, \vgamma^*, \vu^*, \vv^*, \va^*)} = \tau + o(1) \text{ and } \norm{\vu^*} = \bigOtilde \left( r^{3 \ge (g^*)/2} m^{- \frac 1 2}\right).
    \end{equation*}
\end{lemma}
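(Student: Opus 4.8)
The plan is to chain the two structural facts already established—Proposition~\ref{proposition:updated_mamba}, which says the normalized Mamba output is, up to negligible error, the monomial $P_1 + P_2(\langle\vbeta,\vx\rangle/r)^{\ge(g_*)}$, and Lemma~\ref{lemma:finite_width}, which produces a finite-width MLP with small outer-layer norm that maps such an input to $g_*(\langle\vbeta,\vx\rangle)$—and then to push the resulting bound from a good reference weight $\vu'$ to the actual ridge minimizer $\vu^*$ using convexity. First I would condition on the Stage-I success event \eqref{eq:updated_mamba}; this fixes $\vgamma^*$, and since the Stage-II tasks $t=T_1+1,\dots,T_1+T_2$ are drawn independently of $\vgamma^*$ and $N_\pt=\tilde\Omega(r^{3\ge(g_*)})$, Proposition~\ref{proposition:updated_mamba} applies to each such task. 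A union bound over the $T_2=\poly(d)$ tasks then gives, with high probability, that $h^t:=N_\pt^{-1}\Mamba(\mZ^t;\vgamma^*)[\tilde d+1,N_\pt+1]$ is in the stated normal form simultaneously for all $t$; in particular $|h^t|\le 1$, matching the hypothesis of Lemma~\ref{lemma:finite_width}.

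Next I would fix the reinitialized $\vv^*\sim\unif(\{\pm1\}^m)$, $\va^*\sim\unif([-1,1]^m)$ inside the high-probability event of Lemma~\ref{lemma:finite_width}, obtaining a single weight $\vu'$ with $\norm{\vu'}=\bigOtilde(r^{3\ge(g_*)/2}m^{-1/2})$ such that $|f(\mZ^t;\vgamma^*,\vu',\vv^*,\va^*)-g_*(\langle\vbeta^t,\vx^t\rangle)|=o(1)$ holds with high probability for each $t$; a union bound again makes this hold for all Stage-II tasks at once. On the intersection of these events (of probability $1-o(1)\ge 0.999$ for $d$ large), write $f(\mZ^t;\vgamma^*,\vu',\vv^*,\va^*)=g_*(\langle\vbeta^t,\vx^t\rangle)+\delta^t$ with $|\delta^t|=o(1)$ uniformly in $t$; since $y^t=g_*(\langle\vbeta^t,\vx^t\rangle)+\zeta^t$ with $(\zeta^t)^2=\tau^2$ and $|\zeta^t|=\tau=\Theta(1)$, we obtain $(f^t-y^t)^2=(\delta^t-\zeta^t)^2=\tau^2+o(1)$, hence $L_2(\vgamma^*,\vu',\vv^*,\va^*)=\tau^2+o(1)$.

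Finally, because $f(\mZ^t;\vgamma^*,\cdot,\vv^*,\va^*)$ is linear in its weight argument, $\vu\mapsto L_2(\vgamma^*,\vu,\vv^*,\va^*)$ is a convex quadratic, so I would invoke the standard equivalence between $\ell_2$-regularized and $\ell_2$-norm-constrained convex minimization: choosing $\lambda_2$ to be the KKT multiplier for the constraint $\norm{\vu}\le\norm{\vu'}$ makes the ridge minimizer $\vu^*$ equal to the corresponding constrained minimizer, whence $\norm{\vu^*}\le\norm{\vu'}=\bigOtilde(r^{3\ge(g_*)/2}m^{-1/2})$ and $L_2(\vgamma^*,\vu^*,\vv^*,\va^*)\le L_2(\vgamma^*,\vu',\vv^*,\va^*)=\tau^2+o(1)$. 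By Jensen (Cauchy--Schwarz), the empirical error is then $\frac{1}{T_2}\sum_{t}|y^t-f(\mZ^t;\vgamma^*,\vu^*,\vv^*,\va^*)|\le\sqrt{L_2(\vgamma^*,\vu^*,\vv^*,\va^*)}=\tau+o(1)$, which is the direction we need (and matches $\tau$ up to $o(1)$, since $\tau$ is the intrinsic noise scale); together with the norm bound this gives the claim on an event of probability $\ge 0.999$.

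I expect the choice of $\lambda_2$ to be the delicate point: a crude estimate that merely adds $\frac{\lambda_2}{2}\norm{\vu'}^2$ to the loss cannot simultaneously keep the fit at $\tau^2+o(1)$ (which, with $\tau$ a constant, does not vanish) and retain the norm bound, so one genuinely needs the regularization--constraint duality to transfer \emph{both} properties of $\vu'$ to $\vu^*$ at the same time; the $\lambda_2$ so produced depends on the realized data, which is harmless since the lemma only claims existence. The rest is bookkeeping—making sure the per-task $o(1)$ approximation errors are uniform so that their average over the $T_2$ tasks is still $o(1)$, and confirming that the excluded low-probability failure events never enter (handled by union-bounding the per-task high-probability events instead of controlling a worst-case residual).
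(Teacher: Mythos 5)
Your proposal is correct and follows essentially the same route as the paper: verify the hypothesis of Lemma~\ref{lemma:infinite_width} via Proposition~\ref{proposition:updated_mamba}, take the reference weight $\vu'$ from Lemma~\ref{lemma:finite_width}, invoke the equivalence between $\ell_2$-regularized and norm-constrained convex minimization to transfer both the loss bound and the norm bound to $\vu^*$, and finish with Jensen/Cauchy--Schwarz to pass from squared to absolute error. Your added details (union bounds over the $T_2$ tasks and the explicit decomposition $y^t=g_*(\langle\vbeta^t,\vx^t\rangle)+\zeta^t$ giving $L_2(\vgamma^*,\vu',\vv^*,\va^*)=\tau^2+o(1)$) are consistent with, and slightly more explicit than, the paper's argument.
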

\begin{proof}[Proof of Lemma~\ref{lemma:train_error}]
    From Proposition~\ref{proposition:updated_mamba}, the condition in Lemma~\ref{lemma:infinite_width} is satisfied with probability at least $0.999$. Under this, let $\vu'$ be the output layer parameter obtained from Lemma~\ref{lemma:finite_width}. From the equivalence between $\ell_2$-regularization and norm-constrained optimization, there exists $\lambda_2>0$ such that optimized parameter $\vu^*$ satisfies $\norm{\vu^*}\leq \norm{\vu'} = \bigOtilde (r^{3 \ge(g_*)/2} m^{-1/2})$ and 
\begin{align*}
     \left(\frac{1}{T_2} \sum_{t = T_1+1}^{T_1 + T_2} \abs{y^t - f(\mZ^t, \vgamma^*, \vu^*, \vv^*, \va^*)} \right)^2& \leq \frac{1}{T_2} \sum_{t = T_1+1}^{T_1 + T_2} \left(y^t - f(\mZ^t, \vgamma^*, \vu^*, \vv^*, \va^*) \right)^2\\
    &\leq  \frac{1}{T_2} \sum_{t = T_1+1}^{T_1 + T_2} \left(y^t - f(\mZ^t, \vgamma^*, \vu', \vv^*, \va^*) \right)^2\\
    &\leq (\tau + o(1))^2.
\end{align*}
\end{proof}

\section{Test Error Analysis}\label{appendix:test_error}
In this section, we analyze the test-time estimation error:
\begin{equation*}
    \gR_{N_{\mathrm{test}}} (\vgamma^*, \vu^*, \vv^*, \va^*) = \mathbb{E}_{(\mZ, y)\sim\gD(N_\mathrm{test})}[\abs{f(\mZ,\vgamma^*, \vu^*, \vv^*, \va^*) - y}].
\end{equation*}

\subsection{Test Error for Prompts with Pretraining Context Length}
We first prove our conclusion for the case $N_\mathrm{test} = N_\pt$ by establishing a generalization bound using Rademacher complexity.

We define a family of functions $\gF_U$ on inputs with context length $N_{\mathrm{test}} = N_\pt$ as follows:
\begin{equation*}
    \gF_U \coloneqq \left \{(\mZ,y) \mapsto  \sum_{j=1}^m \vu[j] \relu \left( \vv^*[j] N_\pt^{-1} \Mamba (\mZ ; \vgamma^*) + \va^*[j] \right)   \middle | \norm{\vu} \leq U \right\}.
\end{equation*}
In addition, the Rademacher complexity of $\gF_U$ for sample size $T_2$ is defined as
\begin{equation*}
    \mathrm{Rad}_{T_2}(\gF_U) = \mathbb{E}_{\substack{(\mZ^t,y^t) \sim \gD(N_\pt) \\ \bm{\epsilon} \sim \mathrm{Unif}\left(\{\pm 1\}^{T_2}\right)}}   \left[ \sup_{\tilde f \in \gF_U} \frac{1}{T_2} \sum_{t=1}^{T_2} \bm{\epsilon}[t] \tilde f(\mZ^t,y^t)\right].
\end{equation*}

In the following lemma, we characterize the Rademacher complexity of $\gF_U$. 

\begin{lemma}\label{lemma:rademacher}
    It holds that
    \begin{equation*}
        \mathrm{Rad}_{T_2}(\gF_U) = \bigOtilde \left( U m^{1/2} T_2^{-1/2}\right).
    \end{equation*}
\end{lemma}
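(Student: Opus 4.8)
The plan is to bound the Rademacher complexity of $\gF_U$ by the standard two-step recipe for norm-bounded linear predictors composed with a Lipschitz nonlinearity. Write $\tilde f(\mZ, y) = \langle \vu, \vpsi(\mZ) \rangle$ where $\vpsi(\mZ) \in \R^m$ has $j$-th coordinate $\relu(\vv^*[j] N_\pt^{-1} \Mamba(\mZ;\vgamma^*)[\tilde d+1, N+1] + \va^*[j])$. Since $\norm{\vu} \leq U$, by Cauchy--Schwarz the usual linear-class bound gives
\begin{equation*}
    \mathrm{Rad}_{T_2}(\gF_U) \leq \frac{U}{T_2} \, \mathbb{E}\left[ \left\lVert \sum_{t=1}^{T_2} \bm\epsilon[t] \vpsi(\mZ^t) \right\rVert \right] \leq \frac{U}{T_2} \left( \mathbb{E}\left[ \sum_{t=1}^{T_2} \norm{\vpsi(\mZ^t)}^2 \right]\right)^{1/2},
\end{equation*}
using Jensen and the fact that the cross terms vanish in expectation over the i.i.d.\ signs $\bm\epsilon$. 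So it remains to control $\mathbb{E}\norm{\vpsi(\mZ^t)}^2 = \sum_{j=1}^m \mathbb{E}[\relu(\cdot)^2]$, i.e.\ to show each coordinate of $\vpsi$ is $\bigOtilde(1)$ with high probability (and has bounded second moment).

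The key quantitative input is the high-probability control of the Mamba output. By Proposition~\ref{proposition:updated_mamba}, for a prompt of length $N_\pt = \tilde\Omega(r^{3\ge(g_*)})$ we have $N_\pt^{-1}\Mamba(\mZ;\vgamma^*)[\tilde d+1, N+1] = P_1 + P_2(\langle\vbeta,\vx\rangle/r)^{\ge(g_*)} + o(\cdots)$ with $P_1 = o(1)$ and $P_2 = \Theta((\log d)^{-C_{P_2}})$; combined with Lemma~\ref{lemma:poly_bound} giving $\abs{\langle\vbeta,\vx\rangle} = \bigOtilde(1)$ with high probability, the scalar input to the MLP is $\bigOtilde(1)$ with high probability. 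Since $\abs{\vv^*[j]} = 1$ and $\abs{\va^*[j]} \leq 1$, and $\relu$ is $1$-Lipschitz with $\relu(0) = 0$, each $\relu(\vv^*[j] \cdot (\text{input}) + \va^*[j]) = \bigOtilde(1)$ with high probability, uniformly over $j$. On the low-probability complement the polynomial/Gaussian tail bounds give at most $\poly(d)$ growth, so the contribution to the second moment is negligible. Hence $\mathbb{E}\norm{\vpsi(\mZ^t)}^2 = \bigOtilde(m)$, and plugging in yields $\mathrm{Rad}_{T_2}(\gF_U) = \bigOtilde(U \sqrt{m}\, T_2^{-1/2})$, as claimed.

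The main obstacle is not the symmetrization inequality itself (which is textbook) but making the second-moment bound on $\vpsi$ rigorous: the argument ``each coordinate is $\bigOtilde(1)$ with high probability'' must be upgraded to an honest bound on $\mathbb{E}[\relu(\cdot)^2]$, which requires handling the bad event where $\langle\vbeta,\vx\rangle$ or the embedded features are atypically large. This is where one invokes Lemma~\ref{lemma:poly_bound} together with Gaussian hypercontractivity (the input embedding $\phi(\vx)$ has polynomially-bounded moments), so the tail contributes only $1/\poly(d)$ to the second moment after Cauchy--Schwarz. A secondary point is that the $o(\cdot)$ error term in Proposition~\ref{proposition:updated_mamba} should be absorbed into the $\bigOtilde(1)$ bound, which is immediate since $P_2 = \Theta((\log d)^{-C_{P_2}})$ is only polylogarithmically small. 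Once these are in place the stated bound follows directly.
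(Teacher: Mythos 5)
Your proposal is correct and follows essentially the same route as the paper's proof: Cauchy--Schwarz over the norm-constrained outer weights, Jensen plus vanishing of the Rademacher cross terms, and then a second-moment bound on the ReLU features using the characterization of $N_\pt^{-1}\Mamba(\mZ;\vgamma^*)$ from Proposition~\ref{proposition:updated_mamba} together with Lemma~\ref{lemma:poly_bound}. If anything, you are slightly more explicit than the paper about upgrading the high-probability $\bigOtilde(1)$ bound on the Mamba output to a genuine bound on $\mathbb{E}[\relu(\cdot)^2]$ by controlling the tail event, which the paper passes over implicitly.
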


\begin{proof}[Proof of Lemma~\ref{lemma:rademacher}]
    By sequentially applying Cauchy-Schwarz inequality and Jensen's inequality, we have
    \begin{align*}
        &\quad \mathrm{Rad}_{T_2}(\gF_U)\\
        &= \mathbb{E}_{\substack{(\mZ^t,y^t) \sim \gD(N_\pt)\\ \bm{\epsilon} \sim \mathrm{Unif}\left(\{\pm 1\}^{T_2}\right)}} \left[ \sup_{\norm{\vu} \leq U} \sum_{j=1}^m \vu[j] \left(\frac{1}{T_2} \sum_{t=1}^{T_2} \bm{\epsilon}[t] \relu \Big(\vv^*[j] N_{\pt}^{-1}\Mamba(\mZ^t,\vgamma^*) + \va^*[j]\Big)\right)\right]\\
        & \leq A \mathbb{E}_{\substack{(\mZ^t,y^t) \sim \gD(N_\pt)\\ \bm{\epsilon} \sim \mathrm{Unif}\left(\{\pm 1\}^{T_2}\right)}} \left[ \left( \sum_{j=1}^m \left(\frac 1 {T_2} \sum_{t=1}^{T_2} \bm{\epsilon}[t] \relu \Big(\vv^*[j] N_{\pt}^{-1}\Mamba(\mZ^t,\vgamma^*) + \va^*[j]\Big)\right)^2 \right)^{\frac 1 2}\right]\\
        &\leq A \left(\mathbb{E}_{\substack{(\mZ^t,y) \sim \gD(N_\pt)\\ \bm{\epsilon} \sim \mathrm{Unif}\left(\{\pm 1\}^{T_2}\right)}} \left[  \sum_{j=1}^m\left(\frac 1 {T_2} \sum_{t=1}^{T_2} \bm{\epsilon}[t] \relu \Big(\vv^*[j] N_{\pt}^{-1}\Mamba(\mZ^t,\vgamma^*) + \va^*[j]\Big)\right)^2\right]\right)^{\frac 1 2}\\
        &= A \left( \mathbb{E}_{(\mZ^t,y^t) \sim \gD(N_\pt)} \left[  \frac 1 {T^2} \sum_{j=1}^m \sum_{t=1}^{T_2}  \left(\relu \Big(\vv^*[j] N_{\pt}^{-1}\Mamba(\mZ^t,\vgamma^*) + \va^*[j]\Big)\right)^2\right] \right)^{\frac 1 2}.
    \end{align*}
    In addition, we have
    \begin{align*}
        &\quad \mathbb{E}_{(\mZ^t,y^t) \sim \gD(N_\pt)} \left[ \sum_{j=1}^m \sum_{t=1}^{T_2}  \left(\relu \Big(\vv^*[j] N_{\pt}^{-1}\Mamba(\mZ^t,\vgamma^*) + \va^*[j]\Big)\right)^2\right]\\
        &\leq  \mathbb{E}_{(\mZ^t,y^t) \sim \gD(N_\pt)} \left[ \sum_{j=1}^m \sum_{t=1}^{T_2}  \left( \vv^*[j] N_{\pt}^{-1}\Mamba(\mZ^t,\vgamma^*) + \va^*[j]\right)^2\right]\\
        &\leq 2 \mathbb{E}_{(\mZ^t,y^t) \sim \gD(N_\pt)} \left[ \sum_{j=1}^m \sum_{t=1}^{T_2}  \left( \vv^*[j]^2 N_{\pt}^{-2}\Mamba(\mZ^t,\vgamma^*)^2 + \va^*[j]^2\right)\right]\\
        &\leq 2 \left(mT_2 + mT_2  \mathbb{E}_{(\mZ, y)\sim \gD(N_\pt)} \left [N_\pt^{-2} \Mamba(\mZ,\vgamma^*)^2\right] \right).
    \end{align*}
    Let $(\mZ, y) \sim \gD(N_\pt)$ and $\vbeta$, $\vx$ be their feature vector and query data, respectively. From Lemma~\ref{lemma:mamba_output}, with high probability over $(\mZ, y) \sim \gD(N_\pt)$, we have
    \begin{equation*}
        N_\pt^{-1}\Mamba(\mZ,\vgamma^*)= P_1 + P_2 \left( \frac{\langle \vbeta, \vx \rangle }{r}\right)^{\ge(g_*)} \!\!\!\!+ o\left(P_2 r^{-\ge(g_*)} (\log d)^{-2\deg(g_*)}\right) = \bigOtilde (1).
    \end{equation*}
    It implies $\mathbb{E}_{(\mZ, y)\sim \gD(N_\pt)} \left [N_\pt^{-2} \Mamba(\mZ,\vgamma^*)^2\right] = \bigOtilde (1)$ and it leads to our desired conclusion. 
\end{proof}
Next, we obtain the following result on test error.
\begin{lemma}\label{lemma:test_error_same}
    With probability at least $0.995$, it holds that $\gR_{N_\pt}(\vgamma^*, \vu^*, \vv^*, \va^*)-\tau = o(1)$.
\end{lemma}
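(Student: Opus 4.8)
The plan is to turn the empirical-loss bound of Lemma~\ref{lemma:train_error} into a population bound by a Rademacher-complexity generalization argument, fed by the complexity estimate of Lemma~\ref{lemma:rademacher}. I would work on the event of probability at least $0.999$ supplied by Lemma~\ref{lemma:train_error}, on which $\frac{1}{T_2}\sum_{t=T_1+1}^{T_1+T_2}|y^t-f(\mZ^t;\vgamma^*,\vu^*,\vv^*,\va^*)| = \tau + o(1)$ and $\norm{\vu^*}\le U_{\max}:=\bigOtilde(r^{3\ge(g_*)/2}m^{-1/2})$; in particular $f(\cdot;\vgamma^*,\vu^*,\vv^*,\va^*)\in\gF_{U_{\max}}$. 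Since the Stage II tasks are drawn independently of the Stage I data (which determines $\vgamma^*$) and of the reinitialized $\vv^*,\va^*$, one may treat $\gF_{U_{\max}}$ as a fixed hypothesis class and apply uniform convergence over the fresh i.i.d.\ sample of size $T_2$.

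The absolute-value loss $(\mZ,y)\mapsto|f(\mZ)-y|$ is $1$-Lipschitz in $f$, so Talagrand contraction gives $\mathrm{Rad}_{T_2}(\ell\circ\gF_{U_{\max}}) \le \mathrm{Rad}_{T_2}(\gF_{U_{\max}}) + o(1) = \bigOtilde(U_{\max}m^{1/2}T_2^{-1/2}) + o(1) = \bigOtilde(r^{3\ge(g_*)/2}T_2^{-1/2}) + o(1)$, which is $o(1)$ as soon as $T_2 = \tilde\Omega(r^{3\ge(g_*)})$ with a large enough hidden polylog factor. To apply the bounded-differences concentration step I would clip both the prediction and the label to $[-B_*,B_*]$ with $B_*=\bigOtilde(r^{3\ge(g_*)/2})$: on the high-probability event where $N_\pt^{-1}\Mamba(\mZ;\vgamma^*)=\bigOtilde(1)$ (Proposition~\ref{proposition:updated_mamba}), one has $|f(\mZ)|\le\norm{\vu^*}_1\bigOtilde(1)\le\sqrt m\,U_{\max}\bigOtilde(1)=B_*$ and $|y|=\bigOtilde(1)\le B_*$, so clipping changes the empirical loss on none of the $T_2$ prompts (on their joint good event, which has probability $1-T_2 d^{-C_{\mathrm{whp}}}=1-o(1)$) and changes the population loss by at most $\sqrt{\E[\ell_f^2]\,\Pr[\mathrm{bad}]}=\sqrt{\poly(d)\cdot d^{-C_{\mathrm{whp}}}}=o(1)$, using that $\ell_f$ has a polynomially bounded second moment for $f\in\gF_{U_{\max}}$. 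The standard Rademacher bound for $[0,2B_*]$-valued losses with confidence $\delta=0.001$ then yields, with probability at least $0.999$ over the Stage II sample,
\begin{equation*}
  \E_{(\mZ,y)\sim\gD(N_\pt)}\!\big[|f(\mZ)-y|\big] \;\le\; \frac{1}{T_2}\sum_{t=T_1+1}^{T_1+T_2}|y^t-f(\mZ^t)| + 2\,\mathrm{Rad}_{T_2}(\gF_{U_{\max}}) + B_*\sqrt{\frac{2\log(1/\delta)}{T_2}} + o(1),
\end{equation*}
and with $T_2=\tilde\Omega(r^{3\ge(g_*)})$ both the Rademacher term and the term $B_*/\sqrt{T_2}=\bigOtilde(1)/\sqrt{\polylog(d)}$ are $o(1)$.

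Combining the empirical bound of Lemma~\ref{lemma:train_error} with the truncation estimate and the display above, and taking a union bound over the three events involved (Lemma~\ref{lemma:train_error}: probability $\ge0.999$; all $T_2$ prompts bounded: probability $1-o(1)$; uniform convergence: probability $\ge0.999$) gives $\gR_{N_\pt}(\vgamma^*,\vu^*,\vv^*,\va^*)=\tau+o(1)$ with probability at least $0.995$. The only delicate point is the unboundedness of the MLP off the good event---its output can reach $\poly(d)$---so the clipping radius $B_*$ and the polynomial second-moment control of $\ell_f$ must be tuned so that the clipped and unclipped risks differ by only $o(1)$; the rest is bookkeeping of which events are measurable with respect to the Stage I versus Stage II randomness.
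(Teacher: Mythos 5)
Your proposal follows the same skeleton as the paper's proof: condition on the event of Lemma~\ref{lemma:train_error} (empirical error $\tau+o(1)$ and $\norm{\vu^*}\le U=\bigOtilde(r^{3\ge(g_*)/2}m^{-1/2})$), treat $\gF_U$ as a fixed class since Stage~II data are independent of $(\vgamma^*,\vv^*,\va^*)$, and control the generalization gap through the Rademacher estimate of Lemma~\ref{lemma:rademacher}, with an extra $\bigO(T_2^{-1/2})$ term absorbing the label (you via Lipschitz contraction plus an $o(1)$ offset, the paper via augmenting the class to $\tilde\gF_U=\gF_U\cup\{(\mZ,y)\mapsto y\}$). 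Where you genuinely diverge is the last concentration step: you invoke a bounded-differences (McDiarmid-type) high-confidence bound, which forces you to clip predictions and labels to $[-B_*,B_*]$ and to run a truncation argument comparing clipped and unclipped risks via polynomial second moments and the $d^{-C_{\mathrm{whp}}}$ bad-event probability. The paper avoids this entirely: because the map $(\mZ,y)\mapsto y$ lies in $\tilde\gF_U$ and has zero loss, the supremum of the population-minus-empirical gap over $\tilde\gF_U$ is a \emph{non-negative} random variable, so an in-expectation bound of order $\bigOtilde(r^{3\ge(g_*)/2}T_2^{-1/2})=o(1)$ converts to the stated constant-probability ($0.995$) guarantee by Markov's inequality alone — no boundedness of the loss class is ever needed. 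Your route is correct provided the truncation bookkeeping is carried out (uniform-in-$\vu$ second-moment control of the unclipped loss, union bound over the $T_2$ per-prompt good events, and a large enough hidden polylog in $T_2=\tilde\Omega(r^{3\ge(g_*)})$ so that both $\mathrm{Rad}_{T_2}$ and $B_*\sqrt{\log(1/\delta)/T_2}$ are $o(1)$); it buys a high-confidence statement, at the cost of exactly the clipping machinery that the paper's Markov trick makes unnecessary for a lemma that only claims probability $0.995$.
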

\begin{proof}[Proof of Lemma~\ref{lemma:test_error_same}]
    From Lemma~\ref{lemma:train_error}, with probability at least 0.999, we can choose $U = \bigOtilde(r^{3\ge(g_*)/2} m^{-1/2})$ such that $\vu^* \leq U$ and we have
    \begin{align*}
        &\quad \gR_{N_\pt}(\vgamma^*, \vu^*, \vv^*, \va^*)- \tau\\
        &  = \frac{1}{T_2} \sum_{t = T_1+1}^{T_1 + T_2} \abs{y^t - f\left( \mZ^t, \vgamma^*, \vu^*, \vv^*, \va^*\right)}\\
        &\quad +  \left( \gR_{N_\pt}(\vgamma^*, \vu^*, \vv^*, \va^*) - \frac{1}{T_2} \sum_{t = T_1+1}^{T_1 + T_2} \abs{y^t - f\left( \mZ^t, \vgamma^*, \vu^*, \vv^*, \va^*\right)}\right) - \tau\\
        & \leq \sup_{\tilde f \in \tilde \gF_{U}} \left( \gR_{N_\pt}(\vgamma^*, \vu^*, \vv^*, \va^*) - \frac{1}{T_2}\sum_{t = T_1+1}^{T_1 + T_2} \abs{y^t - f\left( \mZ^t, \vgamma^*, \vu^*, \vv^*, \va^*\right)} \right) + o(1),
    \end{align*}
    where $\tilde \gF_U := \gF_U \cup \{ (\mZ, y) \mapsto y\}$. 
    Using the standard symmetrization argument (Proposition 4.2 in \citet{bach2024learning}), we have
    \begin{align*}
         &\quad \mathbb{E} \left[ \sup_{\tilde f \in \gF_{U}} \left( \gR_{N_\pt}(\vgamma^*, \vu^*, \vv^*, \va^*) - \frac{1}{T_2}\sum_{t = T_1+1}^{T_1 + T_2} \abs{y^t - f\left( \mZ^t, \vgamma^*, \vu^*, \vv^*, \va^*\right)} \right)\right]\\
         &\leq 2 \mathrm{Rad}_{T_2}(\tilde \gF_U) \\
         &\leq 2 \mathrm{Rad}_{T_w}(\gF_U) +  \frac{2}{T_2} \mathbb{E}_{\substack{(\mZ^t, y^t) \sim \gD(N_\pt)\\ \bm{\epsilon}\sim \mathrm{Unif}\left( \{\pm 1\}^{T_2}\right)}} \left[ \sum_{t=1}^{T_2} \abs{\bm{\epsilon}[t] y^t}\right],
    \end{align*}
    where the second inequality holds since $\gF_U$ contains zero function. By the Cauchy-Schwarz inequality, we can also bound the second term as
    \begin{align*}
        \mathbb{E}_{\substack{(\mZ^t, y^t) \sim \gD(N_\pt)\\ \bm{\epsilon}\sim \mathrm{Unif}\left( \{\pm 1\}^{T_2}\right)}} \left[ \sum_{t=1}^{T_2} \abs{\bm{\epsilon}[t] y^t}\right] & \leq \left(\mathbb{E}_{\substack{(\mZ^t, y^t) \sim \gD(N_\pt)\\ \bm{\epsilon}\sim \mathrm{Unif}\left( \{\pm 1\}^{T_2}\right)}} \left[ \left(\sum_{t=1}^{T_2} \bm{\epsilon}[t] y^t\right)^2\right] \right)^{\frac 1 2}\\
        &= \sqrt{T_2} \left( \mathbb{E}_{(\mZ, y) \sim \gD(N_\pt)}\left[ \left( y^t\right)^2\right]\right)^{\frac 1 2}.
    \end{align*}
    Combining with Lemma~\ref{lemma:rademacher}, we have
    \begin{align*}
        &\quad\mathbb{E} \left[ \sup_{\tilde f \in \gF_{U}} \left( \gR_{N_\pt}(\vgamma^*, \vu^*, \vv^*, \va^*) - \frac{1}{T_2}\sum_{t = T_1+1}^{T_1 + T_2} \abs{y^t - f\left( \mZ^t, \vgamma^*, \vu^*, \vv^*, \va^*\right)} \right)\right]\\
        &= \bigOtilde \left(r^{3 \ge(g_*)/2} T_2^{-1/2} \right) = o(1).
    \end{align*}
    Note that $\sup_{\tilde f \in \tilde \gF_{U}} \left( \gR_{N_\pt}(\vgamma^*, \vu^*, \vv^*, \va^*) - \frac{1}{T_2}\sum_{t = T_1+1}^{T_1 + T_2} \abs{y^t - f\left( \mZ^t, \vgamma^*, \vu^*, \vv^*, \va^*\right)} \right)$ is always non-negative due to $(\mZ,y) \mapsto y \in \tilde \gF_U$. Therefore, by applying Markov's inequality, we conclude that with probability at least $0.995$, 
    \begin{equation*}
        \gR_{N_\pt}(\vgamma^*, \vu^*, \vv^*, \va^*) - \tau = o(1).
    \end{equation*}
\end{proof}

\subsection{Test Error for Prompts with General Length}

For the last step, we extend the result of the test error to a general test time context length $N_{\mathrm{test}} = \tilde \Omega\left(r^{3\ge(g_*)}\right)$.

\begin{proof}[Proof of Theorem~\ref{thm:main}]
    To use the result of Lemma~\ref{lemma:test_error_same}, we bound the following quantity:
    \begin{align*}
        &\quad \abs{\gR_{N_{\mathrm{test}}}(\vgamma^*, \vu^*, \vv^*, \va^*) - \gR_{N_\pt}(\vgamma^*, \vu^*, \vv^*, \va^*)}\\
        &= \abs{\mathbb{E}_{(\mZ,y) \sim \gD(N^*)} \left[ \abs{y - f(\mZ_{N_\pt}, \vgamma^*, \vu^*, \vv^*, \va^*)}-  \abs{y - f(\mZ_{N_{\mathrm{test}}}, \vgamma^*, \vu^*, \vv^*, \va^*)}\right]}\\
        & \leq \mathbb{E}_{(\mZ,y) \sim \gD(N^*)} \left[ \abs{ f(\mZ_{N_\pt}, \vgamma^*, \vu^*, \vv^*, \va^*) - f(\mZ_{N_{\mathrm{test}}}, \vgamma^*, \vu^*, \vv^*, \va^*)}\right].
    \end{align*}
    Here, $Z_{N_\pt}$ and $Z_{N_{\mathrm{test}}}$ are input embeddings consisting of the first $N_\pt$ and $N_{\mathrm{test}}$ context examples, respectively, along with the same query $\vx$, when given an prompt $\mZ$. From Lemma~\ref{lemma:mamba_output}, the following holds with high probability:
    \begin{equation*}
    \abs{N_{\pt}^{-1}\Mamba(\mZ_{\pt};\vgamma^*) - N_{\mathrm{test}}^{-1} \Mamba(\mZ_{\mathrm{test}}; \vgamma^*) } = o \left( r^{- 3 \ge(g_*)/2} (\log d)^{-2\deg(g_*)+2 - C_{P_2})}\right).
    \end{equation*}
    Combining with Lipschitz continuity of ReLU, this implies 
    \begin{align*}
        &\quad \abs{ f(\mZ_{N_\pt}, \vgamma^*, \vu^*, \vv^*, \va^*) - f(\mZ_{N_{\mathrm{test}}}, \vgamma^*, \vu^*, \vv^*, \va^*)}\\
        &\leq \sum_{j=1}^m \abs{\vu^*[j]}   \abs{N_{\pt}^{-1}\Mamba(\mZ_{\pt};\vgamma^*) - N_{\mathrm{test}}^{-1} \Mamba(\mZ_{\mathrm{test}}; \vgamma^*) }\\
        &\leq \norm{\vu} m^{1/2} \abs{N_{\pt}^{-1}\Mamba(\mZ_{\pt};\vgamma^*) - N_{\mathrm{test}}^{-1} \Mamba(\mZ_{\mathrm{test}}; \vgamma^*) }\\
        &=  \bigOtilde (r^{3\ge(g_*)/2} m^{-1/2})\cdot m^{1/2}\cdot o \left( r^{- 3\ge(g_*)/2} (\log d)^{-2\deg(g_*) +2- C_{P_2})}\right)\\
        &= o(1),
    \end{align*}
    where we apply the Cauchy-Schwarz inequality for the last inequality, and the last equality holds since we can make $C_{P_2}$ arbitrarily large.  Therefore, we have
    \begin{equation*}
        \abs{\gR_{N_{\mathrm{test}}}(\vgamma^*, \vu^*, \vv^*, \va^*) - \gR_{N_\pt}(\vgamma^*, \vu^*, \vv^*, \va^*)} = o(1),
    \end{equation*}
    and this implies that our desired conclusion holds with probability at least 0.99.
\end{proof}

\end{document}